\newcommand{\cA}{\mathcal{A}}
\newcommand{\cH}{\mathcal{H}}
\newcommand{\what}[1]{\widehat{#1}} %
\newcommand{\Dtrain}{\MC{D}^{\TN{offline}}}
\newcommand{\Ahist}{\MC{A}^{\TN{offline}}}
\newcommand{\Aeval}{\MC{A}}
\newcommand{\psar}{{\pi_{\TN{TSAR}}}}
\newtheorem{assumption}{Assumption} %
\newtheorem{prop}{Proposition}
\newtcolorbox{softbox}{
    enhanced,
    breakable,
    enhanced jigsaw,
    arc=7pt,
    colback=blue!5,
    colframe=white,
    left skip=1cm,
    right skip=1cm,
    boxrule=0pt,
    before upper={\setlength{\parindent}{0.5em}\everypar{\hspace*{\parindent}\ignorespaces}},
}
\providecommand{\qedsymbol}{\ensuremath{\square}}
\providecommand{\qedhere}{\ifmmode\quad\qedsymbol\else\hfill\qedsymbol\fi}
\newcolumntype{L}[1]{>{\raggedright\arraybackslash}p{#1}}
\begin{document}

\title{Active Exploration via Autoregressive Generation of Missing Data}

\author{\name Tiffany (Tianhui) Cai \email tc3100@columbia.edu \\
       \addr Department of Statistics\\
       Columbia University\\
       New York, NY 10025, USA
       \AND
       \name Hongseok Namkoong \email hn2369@columbia.edu \\
       \addr Decision, Risk, and Operations Division\\
       Columbia Business School\\
       New York, NY 10025, USA
       \AND
       \name Daniel Russo \email djr2174@columbia.edu \\
       \addr Decision, Risk, and Operations Division \\
       Columbia Business School\\
       New York, NY 10025, USA
       \AND
       \name Kelly W. Zhang \email kelly.zhang@imperial.ac.uk \\
       \addr Department of Mathematics\\
       Imperial College London\\
       London SW7 2AZ, United Kingdom
       }
\editor{}
\maketitle

\begin{abstract}%
\noindent We pose uncertainty quantification and exploration in online decision-making as a problem of training and generation from an autoregressive sequence model, an area experiencing rapid innovation. Our approach rests on viewing uncertainty as arising from missing future outcomes that could be revealed through action choices, rather than from unobservable latent parameters of the environment. This reformulation aligns naturally with modern machine learning capabilities: we can i) train generative models through next-outcome prediction rather than fit explicit priors, ii) assess uncertainty through autoregressive generation rather than sampling latent parameters from posteriors, and iii) adapt to new information by extending the sequence model's context rather than explicit posterior updating. 
Our main theoretical result establishes a reduction from online decision-making to offline next-outcome prediction: Bayesian regret is controlled directly by the sequence model's offline prediction loss, without requiring an explicit latent-variable posterior. Experiments, including a semi-synthetic news recommendation task, show that autoregressive generation produces calibrated epistemic uncertainty and enables effective exploration by using article text as prior information to focus exploration on resolving remaining uncertainties.

\end{abstract}

\begin{keywords}
  Multi-armed bandits, Sequence models, Epistemic uncertainty, Missing data, Uncertainty quantification
\end{keywords}

\section{Introduction}
\label{sec:intro}

Foundation models have moved machine learning from modeling concise numerical data to natively handling unstructured data---text, audio, video, sensor streams---that convey richer information and are often more natural to collect. 
Yet these models seem to lack a capability that has long been studied in statistics: 
quantifying epistemic uncertainty, the uncertainty that comes from limited data and would resolve if more data were gathered, as opposed to aleatoric uncertainty, which comes from irreducible noise.

We view foundation models through the lens of the sequence-prediction task that they are often trained on: predicting the distribution of the next outcome given previous ones.  
Even a perfectly accurate next-outcome predictor does not obviously expose epistemic uncertainty on the underlying environment.
In this paper, we explore whether this capability for next-outcome prediction is nevertheless enough to support epistemic uncertainty quantification.
We use a multi-armed-bandit style sequential decision-making setting as a demanding testbed---one that requires not only recognizing remaining uncertainty, but also acting to resolve it and updating beliefs as new information is incrementally gathered. We aim to power this entire loop using capabilities native to foundation models:

\begin{softbox}
\hspace{-5mm} 
Given a sequence model that accurately predicts the next-outcome distribution conditional on past observations,
how can we use it to quantify epistemic uncertainty and drive active exploration?
\end{softbox}

This turns out to be surprisingly subtle. Methods that summarize only the next-step predictive distribution---for instance, by sampling a single next outcome and acting greedily, as proposed in recent work \citep{MullerHoArGrHu22, NguyenGr22, HollmannMuPuKrKoHoScHu25, HegselmannBuLaAgJiSo23}---can suffer linear regret, eventually badly outperformed by simple algorithms with no foundation model access at all (see Section~\ref{subsec:failure_of_other_generation}, Figure~\ref{fig:altSampling}).  The issue is that good decisions require more than predicting the next outcome; this fact has been reinforced by recent work demonstrating the challenges of using large language models (LLMs) for exploration \citep{nie2025evolve,krishnamurthy2024can,harris2025should}. 
 The agent must reason about how future outcomes, once observed, would resolve epistemic uncertainty relevant to later decisions. A one-step predictive distribution marginalizes over both the epistemic and aleatoric uncertainty and therefore cannot drive exploration on its own. How, then, can next-outcome prediction be used to represent the uncertainty that future observations would resolve?

A classical approach to this sequential decision-making problem
posits latent parameters governing aspects of the unknown environment, such as parameters in a reward model. In a Bayesian view, these latents are random variables, and the decision-maker tracks a posterior distribution 
over these latents 
as new data arrives. 
This posterior over the latents represents epistemic uncertainty, i.e. %
uncertainty about the environment that should guide exploration toward observations that would be most informative. 
But maintaining such a posterior is often a critical computational bottleneck when observations are high-dimensional or unstructured, and 
explicit posterior distributions are not a primitive that modern foundation models natively support. %

In the decision-making settings we study, this latent parameter approach %
is also not strictly necessary. The decision-maker ultimately cares about the outcomes that would result from potential future actions---for example, %
which action would have the highest reward on average, or which observations would be informative for future decisions---not about the latent parameter itself. If the relevant future outcomes were exactly known under each possible sequence of actions, %
the decision problem would be resolved. This motivates a missing-data view of uncertainty: rather than represent epistemic uncertainty through an unobserved latent parameter, we represent it through the potentially observable outcomes that have not yet been revealed, in the style of \citet{rubin:1987}. 

\begin{figure}[t]
\centering
     \includegraphics[width=\textwidth]{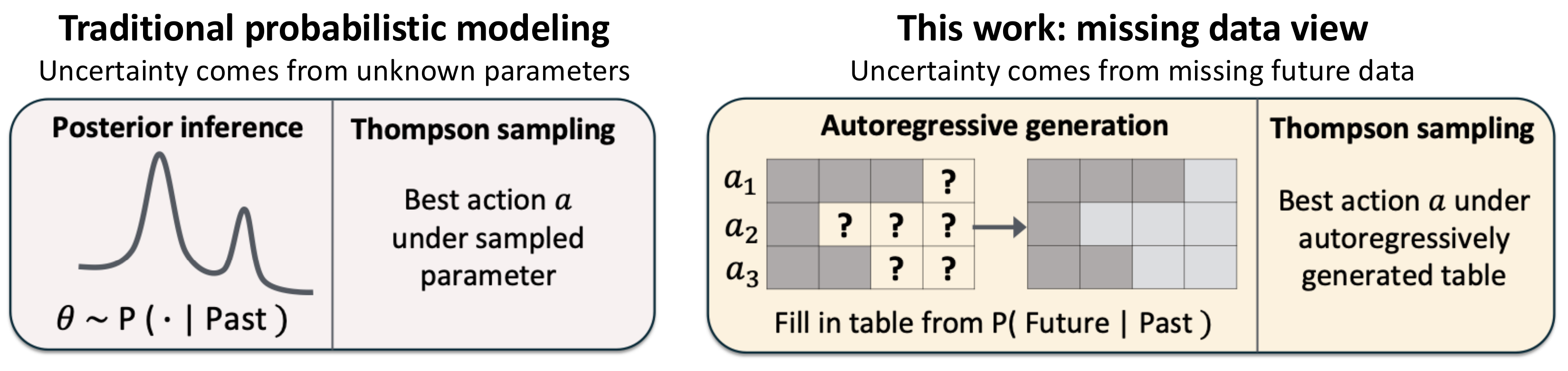}
    \caption{Traditional implementation of active exploration algorithms such as Thompson sampling requires probabilistic models over latent parameters that get updated as more data is gathered. Instead, we view the source of uncertainty in decision-making as \emph{missing data} and use autoregressive generation of missing outcomes as the basic unit of probabilistic inference. This view continues the predictive-inference philosophies of \citet{dawid1984present} and \citet{geisser2017predictive}, and the missing-potential-outcomes view of causal inference advocated by \citet{rubin:1987}; modern sequence models make it practical without requiring exchangeability of $Y_1, \ldots, Y_T$, and with unstructured prior information entering as a prompt/context. \vspace{-3mm}}
\label{fig:missing-data-view}
\end{figure}

\paragraph{Our approach.} Operationalizing this missing-data viewpoint yields concrete replacements for the subroutines that classical Bayesian exploration algorithms rely on; see Figure \ref{fig:missing-data-view}. These algorithms typically require i) constructing informed priors over latent parameters, ii) performing probabilistic inference under the current model of uncertainty, and iii) updating beliefs as more data is gathered---each of which becomes difficult with high-dimensional or unstructured data. Our approach instead i) trains a generative model through next-outcome prediction, ii) assesses uncertainty through autoregressive generation of missing  outcomes---including those not yet observed, and iii) adapts to new information by extending the sequence model’s context \citep{bengio2003neural} rather than explicitly updating a posterior. Marginal next-outcome prediction thus becomes the basic primitive for epistemic uncertainty quantification and active exploration: the same sequence-prediction machinery that already handles rich, unstructured inputs can drive exploration, without an explicit prior, likelihood, or posterior update over latent environment parameters. This recipe is not tied to any single algorithm---autoregressive generation serves as a drop-in replacement for posterior sampling of latents in Thompson sampling, Bayes UCB, knowledge gradient, information-directed sampling, and related methods 
\citep{thompson1933likelihood, kaufmann2012bayesian, russo2018learning, ryzhov2012knowledge}.

\paragraph{Outline.} Section~\ref{sec:key_ideas} develops the missing-data view of uncertainty in a passive-observation setting and shows that autoregressive generation serves as a drop-in replacement for a posterior sampling step that appears in  classical Bayesian exploration algorithms 
such as Thompson sampling, Bayes UCB, knowledge gradient, and information-directed sampling. %
Section~\ref{section:approximate} itemizes our contributions in full. Sections~\ref{sec:meta-bandit-formulation} and~\ref{sec:psar-algorithm} formalize the online decision-making problem and the use of autoregressive generation inside exploration algorithms, with a special focus on %
Thompson sampling %
\citep{thompson1933likelihood, russo2020tutorial}; we call our Thompson sampling based algorithm Thompson sampling via autoregressive generation (TSAR).  Section~\ref{sec:psar_ptheta} proves a regret bound that reduces online decision-making to offline next-outcome prediction. Section~\ref{sec:experiments} validates these ideas in synthetic settings and in a semi-synthetic news recommendation task that requires leveraging article headline text to focus exploration.

\section{Key ideas: Missing data view of uncertainty}
\label{sec:key_ideas}
While later sections of the paper study active exploration problems, we start by developing key ideas in a setting where observations accrue passively. A few examples help to explain the setting. Consider a news recommendation platform that has just released a new article. From the headline text, the platform can form an initial guess about how engaging the article will be, but substantial uncertainty remains---uncertainty 
that gets 
progressively resolved 
as the article is recommended to one user after another 
\citep{li2010contextual}. A similar phenomenon arises in political science, where a previous survey may suggest an increased probability of approval on a certain issue, but considerable uncertainty remains \citep{OfferWestortCoGr20}. As more individuals are polled, the overall approval becomes increasingly clear.

In both cases, we face a fundamental inferential question: given limited observations so far, what can we confidently conclude, and how much uncertainty remains?
In Section~\ref{section:classical}, we review classical Bayesian inference, which approaches these problems by representing unknowns through latent parameters---the engagement rate of an article or the approval rate of a particular policy, for example. %
In Section~\ref{section:missing-data}, we present our alternative approach that views missing data---the user feedback or surveys not yet gathered---as the source of uncertainty. %
Under this view, the inferential question becomes: given limited observations so far, what can we already infer about these missing but observable quantities, and what remains uncertain? 
Correct probabilistic imputation of missing data powers this style of uncertainty quantification without any explicit reference to latent variables.

\paragraph{A note on prior knowledge.}  Both approaches to uncertainty quantification require substantial probabilistic modeling. In the classical Bayesian setting, the prior and likelihood together specify the joint distribution over latents and observables. In our missing-data-based formulation, we require a sequence model that specifies the distribution of the next observation conditioned on %
previous observations. We begin by assuming that respective probabilistic models are available and focus on the resulting inference problems in Sections~\ref{section:classical} and~\ref{section:missing-data}.
In Section~\ref{section:approximate}, we return to where these probabilistic models come from and illustrate how the missing data view of uncertainty allows using foundation models to integrate complex prior knowledge (e.g., text and video %
information). 

\subsection{Classical Bayesian modeling}
\label{section:classical}

A standard way to formalize uncertainty is to posit a Bayesian model with a latent parameter, say, 
an underlying quality $\eta$ that governs observed outcomes $Y_1,\ldots,Y_T$. Here, one can treat available prior information as shaping initial beliefs about $\eta$, and then one updates those beliefs as observations arrive. 
Throughout, we explicitly denote any initial information the modeler may have as $Z$, to highlight the significance of such information in modern application scenarios (e.g., article text or past clinical notes). Throughout, we use the phrase ``prior information'' to refer to $Z$, as it is information about the task prior to observing outcomes $Y$.

Standard Bayesian approaches model exchangeable observations $Y_1, Y_2, \ldots, Y_T$ as being drawn independently and identically distributed %
conditioned on $\eta$. Given a subset of observations $Y_{1:t}$, %
the agent forms the posterior belief

\begin{equation}\label{eq:posterior_formula}
p(\eta \mid Z, Y_1, \ldots, Y_t) \propto \prod_{\tau=1}^{t} p(Y_\tau \mid \eta) \cdot p(\eta \mid Z),
\end{equation}
which summarizes everything that can (and cannot) be inferred about $\eta$ given the observed information. The prior distribution $p(\eta \mid Z)$ of $\eta$ given $Z$ and likelihood $p(Y_\tau \mid \eta)$ are assumed to be known.
\begin{example}
    To make these ideas concrete, consider a news recommendation platform that has just 
released a new article. As users 
interact with the article throughout the day, the agent observes feedback $Y_t$---clicks, dwell times, shares, comments---that reduces uncertainty about each article's true engagement level $\eta$. 
In this setting, the agent has substantive prior information about the article before it is shown to any user, such as headlines, text, metadata, which we denote explicitly as $Z$.
\end{example}

Typically, quantities of interest can be expressed as a function of $\eta$. 
One example is the  conditional mean $f(\eta)=\E[Y_\tau \mid \eta]$ where we assume $Y_\tau$ is a real-valued random variable. %
We formalize the problem of performing posterior inference about any quantity of interest $f(\eta)$ below. Grouping together prior information $Z$ and observations as $Y_{1:t} = (Y_1, \ldots, Y_t)$ %
to infer the distribution of $f(\eta)$ given $(Z, Y_{1:t})$
we can repeatedly draw $\widehat{\eta}$ from the posterior distribution~\eqref{eq:posterior_formula}, and compute $f(\widehat{\eta})$. %
\begin{tcolorbox}[colback=gray!5!white, colframe=gray!75!black, title=Inference under uncertainty induced by latent parameters]
 Given a latent $\eta$ and a quantity of interest $f(\eta)$, draw samples $\widehat{\eta}$ from the posterior distribution $\mathbb{P}(\eta = \cdot \mid Z, Y_{1:t})$ and compute $f(\widehat{\eta})$ on these samples.  
\end{tcolorbox} 
\noindent While elegant for simple settings, this framework faces practical challenges when the data is rich, and complex models are used for the prior and likelihood. 
Specifying an appropriate prior distribution $p(\eta \mid Z)$ for $Z$ with complex, unstructured information (like article text) involves incorporating knowledge from foundation models to inform the distribution of $\eta$, which is non-trivial. 
Additionally, when outcomes $Y$ are themselves high-dimensional or unstructured, the latent $\eta$ itself may be complex (e.g., the parameters in a neural network), making posterior computation difficult, particularly as data accrues over time $t$.

\subsection{Uncertainty as missing data}
\label{section:missing-data}

This paper takes an alternative approach that sidesteps latent parameters entirely. Instead of focusing on inferring a latent, intangible quality of the ``true'' environment, $\eta$, we consider a quantity of interest that is a function $f(Y_1, \ldots, Y_T)$ of the full trajectory of outcomes $Y_1, \ldots, Y_T$ sampled from some joint distribution $p(\cdot \mid Z)$; in contrast to Section~\ref{section:classical}, here we do not assume that $Y_{1:T}$ are necessarily exchangeable given $Z$. For example, practically the quantity of interest $f(Y_1, \ldots, Y_T)$ %
could be the average engagement if a news article were shown broadly, i.e., $f(Y_1, \ldots, Y_T) = \frac{1}{T} \sum_{t=1}^T Y_t$. At any timepoint $t$, the uncertainty about this quantity of interest arises from missing future outcomes $Y_{t+1}, \ldots, Y_T$. That is, \emph{uncertainty does not stem from a latent environment $\eta$ parameter}, but from outcomes $Y_{t+1}, \ldots, Y_T$ that are yet to be seen. 

Our uncertainty quantification task is as follows: 
given the prior information $Z$ and the outcomes observed so far $Y_1,\ldots,Y_t$, what is the distribution of the quantity of interest? Even among instances with similar prior information $Z$ and the same early outcomes, there can be substantial variability in the eventual final outcome, so this distribution is genuinely non-degenerate. We can formalize the problem in a manner that parallels the problem of inference under an uncertain latent $\eta$ to a striking degree. 
\begin{tcolorbox}[colback=gray!5!white, colframe=gray!75!black, title=Inference under missing data]
Given a quantity of interest $f(Y_1, \ldots, Y_T)$, and current observations %
$\{Z, Y_{1:t} \}$, 
draw samples $\widehat{Y}_{t+1}, \ldots, \widehat{Y}_T$ from the %
conditional distribution
$\mathbb{P} \big( (Y_{t+1}, \ldots, Y_T) = \cdot \mid Z, Y_{1:t} \big)$ and compute
$f(Y_1, \ldots, Y_t, \widehat{Y}_{t+1}, \ldots, \widehat{Y}_T)$ on these samples.
\end{tcolorbox} 

While so far we have focused on passively observed outcomes, when extended to the decision-making setting, this view of missing future outcomes as the source of uncertainty is particularly aligned with active exploration problems, where gathering missing observations 
tends to be a concrete choice the agent faces. 
This viewpoint is also very general. In particular, it does not
require that the data is conditionally i.i.d. given some $\eta$; in fact, this viewpoint does not even require %
exchangeability of $Y_1, \dots, Y_T$. %

\paragraph{Autoregressive generation of missing data.}
Even with a correctly specified model of the joint \emph{distribution} $Y_1 \dots, Y_T \mid Z \sim p(\cdot \mid Z)$, quantifying posterior uncertainty over the \emph{realization} $f(Y_1, \ldots, Y_T)$ after observing $(Z, Y_{1:t})$ generally is not available in closed form, and so a sampling-based approach to estimation is recommended. As shown in Figure~\ref{fig:autoregressiveGen}, our inferential approach  uses the variation in the estimated $f(Y_1, \ldots, Y_T)$ across probabilistic imputations of the missing outcomes to quantify uncertainty.

In particular, we probabilistically impute the next unobserved outcome $Y_{t+1}$ given the prior information and observed information so far $Z, Y_{1:t}$ using $\widehat{Y}_{t+1} \sim p(Y_{t+1} \mid Z, Y_{1:t})$. We then impute $Y_{t+2}$ via autoregressively sampling the next conditioned on the same history $(Z, Y_{1:t})$ in addition to %
the sampled value $\widehat{Y}_{t+1}$, i.e., $\widehat{Y}_{t+2} \sim p(Y_{t+2} \mid Z, Y_{1:t}, Y_{t+1} = \widehat{Y}_{t+1})$. We continue this process until all remaining outcomes $\widehat{Y}_{t+1,T}$ are probabilistically imputed; then, evaluating $f$ on the imputed trajectory produces one draw from the induced posterior over $f$. 
We call the model of conditional distributions of the form 
$p(Y_{t+1} \mid Z, Y_1, \ldots Y_{t})$ 
a \emph{sequence model}.
By autoregressively sampling from the correct conditional distribution $p$, we can sample from the correct joint distribution over missing outcomes.

\begin{figure}[t]
    \centering
    \includegraphics[width=0.95\linewidth]{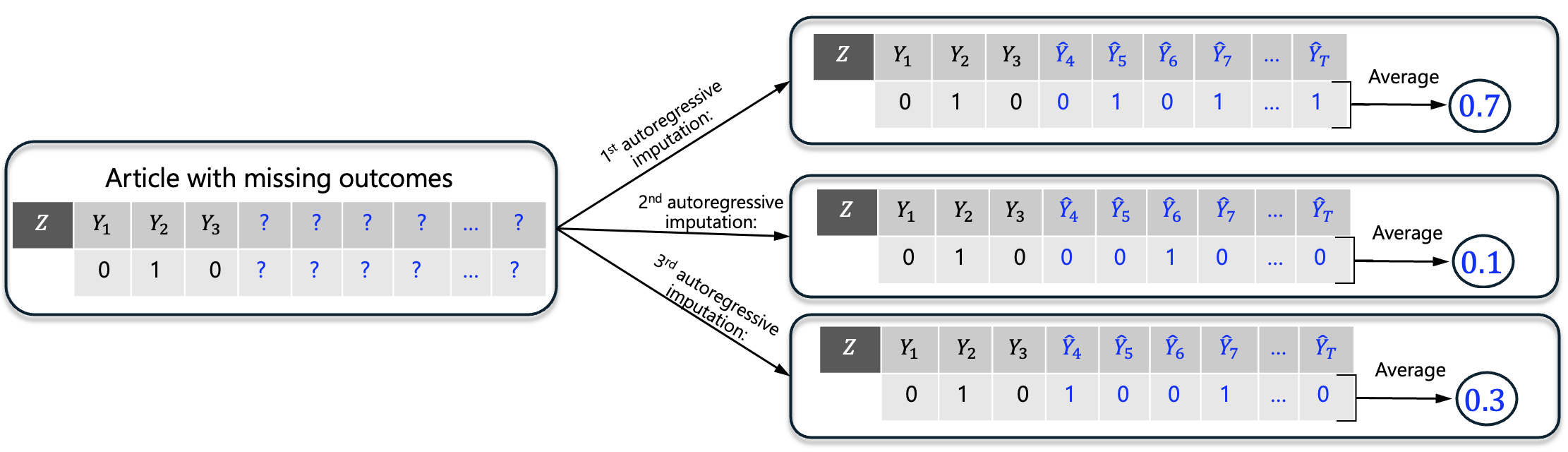}
    \caption{ Given prior information $Z$ and observed outcomes $Y_1, Y_2, Y_3,$ we generate multiple completions of the sequence. Variability across the imputed means $\frac{1}{T} \big( Y_1 + Y_2 + Y_3 + \sum_{t=4}^T \widehat{Y}_t \big)$ reflects remaining uncertainty. As more real outcomes $Y_t$ are observed, we expect this variability to shrink.} 
    \label{fig:autoregressiveGen}
\end{figure}

 \begin{lemma}
 \label{lemma:posteriorSample}
 After imputing missing data according to the procedure above with a correctly specified model $p$ of the joint distribution, the quantity of interest $f(Y_1, Y_2, \ldots, Y_T)$ computed from the imputed dataset is a valid ``posterior sample'' of $f(Y_1, Y_2, \ldots, Y_T)$ in that
 $$\mathbb{P} \Big(f(Y_1, \ldots Y_t, \widehat{Y}_{t+1}, \ldots, \widehat{Y}_T)  = \cdot \mid Z, Y_{1:t} \Big) = \mathbb{P}\Big(f(Y_1, \ldots, Y_T) = \cdot \mid Z, Y_{1:t} \Big).$$
 \end{lemma}

\noindent The autoregressive imputation approach described above does not require explicitly modeling a latent $\eta$, but is compatible with such models from Section~\ref{section:classical}: one can simply write the distribution of future outcomes as posterior predictives using the law of total probability. In this special case, our imputation approach matches the sampling methodology used in the multiple imputation literature \citep{rubin2018multiple,gelman2013bayesian}.

\begin{tcolorbox}[colback=gray!5!white, colframe=gray!75!black, title=Two Views of Uncertainty]
\begin{tabular}{@{}L{0.19\textwidth} L{0.38\textwidth} L{0.38\textwidth}@{}}
& \textbf{Classical View} & \textbf{Missing Data View} \\[0.8em]
\hline \\[-0.8em]

\textbf{Source of uncertainty} 
& Latent parameters $\eta$
& Future outcomes $Y_{t+1}, \dots, Y_T$ \\[2em]

\textbf{Model} 
& Prior and likelihood 
& Sequence model over observables \\[2em]

\textbf{Inference} 
& Draw latent from the posterior distribution 
& Sample $\widehat{Y}_{t+1}, \widehat{Y}_{t+2}, \ldots$ from sequence model, conditioned on $Z$ and previous observations \\[3em]

\textbf{Key challenge} 
& Encoding rich prior information $Z$ and handling high-dimensional outcomes
& Training a good sequence predictor that directly leverages foundation models \\[0.3em]
\end{tabular}
\end{tcolorbox}

\paragraph{Why the joint distribution matters, not just the next-step marginal.}
The autoregressive procedure does more than predict the next outcome; it posits a joint distribution over the remaining sequence.  To see why this matters, consider two distributions over a sequence of coin tosses.  The first is a fair coin with i.i.d. tosses of mean $1/2$; the second is a deterministic coin whose side is unknown but, once observed, is fixed for every subsequent toss.  The two have identical marginal distributions over the next toss but very different joint distributions: a single observation resolves all uncertainty in the second world and none in the first; see \cite[Figure 1]{osband2022neural}. Methods that summarize only the one-step distribution---such as generating a single next outcome \citep{MullerHoArGrHu22, NguyenGr22, GarneloRoMaRaSaShTeReEs18} or averaging several independent next-outcome draws---cannot tell the two apart, and consequently fail to drive exploration correctly when used inside a bandit algorithm.  Section~\ref{subsec:failure_of_other_generation} demonstrates these failure modes empirically.

\subsection{Approximate sequence models}
\label{section:approximate}

So far we have treated the sequence model as given, assuming it correctly specifies the conditional distribution of each outcome. We now return to a question deferred at the start of Section~\ref{sec:key_ideas}: where does this probabilistic model come from? 
The autoregressive imputation approach to uncertainty quantification opens up a path to leveraging foundation models for probabilistic uncertainty quantification in ways that appear difficult with the traditional Bayesian modeling of a latent parameter.

Because marginal next-outcome prediction is precisely the task that modern sequence models are trained on, a practitioner can form an
\emph{approximate} sequence model $\widehat{p}$ for predicting next outcomes to quantify uncertainty---without any additional Bayesian inference machinery. 
There are several possibilities for forming such a $\widehat{p}$:
\begin{itemize}[leftmargin=*]
\item \emph{Prompting.} 
Relying on their strong zero-shot performance, one can simply prompt an LLM to predict the next observation given prior information %
$Z$ and past outcomes %
$Y_1, \ldots, Y_{t}$.
\item \emph{In-context learning.} One can augment prompting by including in the prompt historical examples of prior information $Z$ and their corresponding past outcomes $Y_1,\ldots,Y_{t}$ \citep{brown2020language}.
\item \emph{Fine-tuning.} When domain-specific outcome data is available, one can fine-tune a pretrained model to predict sequences of outcomes conditioned on prior information $Z$.
\item \emph{Training from scratch.} For organizations with abundant historical data (e.g., Google~\citep{rajput2023recommender}), one can train a sequence model entirely from scratch over past outcome sequences. We discuss our procedure for this in Section \ref{sec:pretrainHistorical}, which we use in a semi-synthetic news recommendation experiment.
\end{itemize}

However $\widehat{p}$ is formed, it is inherently approximate. A central challenge we focus on in this work is how this approximation affects uncertainty quantification and, ultimately, exploration.

\subsection{Our contributions}
\label{section:contributions}

The ideas in Section~\ref{section:missing-data} are not themselves novel, but the formulation is atypical in the literature and sets up the remainder of the paper. Its distinctive feature is the focus on functionals $f(Y_1, \ldots, Y_T)$ of a finite population of outcomes, under which all results follow from the chain rule of probability with no further assumptions. The intellectual lineage we build on (Section~\ref{section:related-work}) is instead rooted in de Finetti's theorem, a deep result which essentially implies that observing an infinite \emph{exchangeable} sequence of $Y$'s reveals a true latent data-generating distribution. In our formulation, the joint distribution of outcomes can be entirely general, and no latent parameter needs to be considered---or even necessarily exists.

This paper applies these ideas to \emph{online learning} problems where present actions impact data observed in the future.
A theme throughout is that a capability for prediction \emph{suffices}: the sequence model that drives the algorithm is never required to be exchangeable, to encode a conditional-i.i.d.\ structure, or to correspond to a latent-variable model---only to predict well.
The rest of the paper develops three contributions that, together, turn the missing-data view into a working algorithm with formal guarantees.
\begin{enumerate}[leftmargin=*]
    \item \bo{Posterior sampling via autoregressive generation.} Uncertainty quantification alone does not prescribe \emph{action}; an agent must translate uncertainty about missing outcomes into decisions that balance exploitation of current knowledge against exploration to resolve remaining uncertainties. To study this problem, Section~\ref{sec:meta-bandit-formulation} extends the missing data formulation to bandit-style decision-making tasks. Then, in Section \ref{sec:psar-algorithm}, we adapt classical Bayesian exploration algorithms by replacing posterior sampling over latent parameters with autoregressive generation of missing outcomes conditioned on prior information $Z$ and outcomes observed so far in the current task.  Section~\ref{sec:psar-algorithm} formally interprets this generation process as posterior sampling whenever the sequence model can sample exactly from the relevant conditional distributions, providing a drop-in replacement in algorithms such as Bayes UCB, knowledge gradient, information directed sampling, and Thompson sampling \citep{kaufmann2012bayesian, russo2018learning, ryzhov2012knowledge}.  We develop and analyze the Thompson sampling version most fully. The results of Sections~\ref{sec:psar-algorithm} and~\ref{sec:screening}, just as in Section~\ref{section:missing-data}, hold for an arbitrary joint distribution over the potential outcomes table, with no exchangeability or stationarity assumption.

    \item \bo{Regret bound: success in offline sequence prediction translates into effective online decision-making.}  
    In online bandit problems, the agent's actions determine which observations it gathers, errors in an approximate $\widehat{p}$ can compound: mistaken beliefs may lead an agent to choose actions consistent with those beliefs, causing it to rarely collect data that would correct them.
    \citet{simchowitz2021bayesian} make this vivid by showing that Thompson sampling with a misspecified prior can suffer \emph{worst-case} cumulative regret that scales \emph{quadratically} in the horizon $T$, and a reflexive imitation-learning argument would have offline prediction error enter the regret bound multiplied by a distribution-shift coefficient~\citep{rossImitation2010}.  Section~\ref{sec:psar_ptheta} rules out such multiplicative cascades. We prove that, averaged over the prior that actually generates the bandit task, the agent's (Bayes) regret is directly controlled by the sequence model's offline next-outcome prediction error---with no multiplicative distribution-shift or concentrability coefficient, despite adaptive data collection---, scaling at worst \emph{linearly} in $T$ and recovering the standard $\sqrt{T}$ Thompson sampling rate as offline predictions get better. This establishes a clean reduction from online decision-making to offline next-outcome prediction. The main structural condition in this analysis is exchangeability of the \emph{true} data-generating process, used solely to identify left-to-right log-loss as the correct offline training target; the learned model $\widehat{p}$ itself may be arbitrary---misspecified, non-exchangeable, and with no latent-variable interpretation.

    \item \bo{Experimental validation in synthetic settings and a semi-synthetic news recommendation task.}  We demonstrate that our theoretical insights bear out in simulations.  Our methods not only match oracle Bayesian procedures in synthetic settings, but do so without relying on an explicit latent parameter specification.  Moreover, our approach scales to a news article recommendation setting in which the best performance requires leveraging unstructured prior information like article headlines.  Even when incorporating and fine-tuning a language model, we find our autoregressive generation procedure implements accurate uncertainty quantification (credible intervals) and drives exploration with low regret.
\end{enumerate}
\bo{Relationship to a follow-up paper.}  
A follow-up paper by the same authors extends one component of this work---the Thompson sampling regret analyses in Sections~\ref{sec:regret-correctly-specified} and \ref{section:regret-bounds}---to contextual bandits with general policy classes, adding VC-dimension-based entropy arguments specific to that setting~\citep{zhang2025contextual}. 
The present paper develops the broader framework:  the missing-data formulation of epistemic uncertainty discussed in this section, posterior sampling via autoregressive generation for exploration algorithms in Section \ref{sec:psar-algorithm}, a novel regret bound on screening actions using high-dimensional action features in Section~\ref{sec:screening},
general-purpose reductions of regret to offline prediction for multiple adaptive policies in Section~\ref{sec:psar-ts}, 
a formal comparison with previous analyses of Thompson sampling under prior misspecification in Section~\ref{sec:misspecifiedTS}, and a full empirical study that includes evaluation of the epistemic uncertainty quantification in Section~\ref{sec:experiments}.

\section{Preliminaries: Online decision-making problem}
\label{sec:meta-bandit-formulation}

In this section, we provide a formal definition of the decision-making problem.
We consider a bandit-style decision-making task in which an agent sequentially selects actions from a finite set $\Aeval$ across time periods $t\in \{1, 2, \ldots, T\}$, adapting future actions in response to feedback. Formally, we say a  bandit task  $\mathcal{B} = (Z, \tau)$ consists of prior task information $Z$ and a two-dimensional array or ``table'' of potential outcomes \begin{equation}
\tau  = (Y_{1}^{(a)}, \ldots, Y_{T}^{(a)})_{a \in \Aeval},
\end{equation}
where each $Y_t^{(a)}$ represents what would be observed if action $a \in \MC{A}$ were taken at time $t$. This table is the multi-action analogue of the outcome sequence $Y_1, \ldots, Y_T$ from the passive setting of Section~\ref{section:missing-data}: each row represents the potential outcomes for one action, and, exactly as in Section~\ref{section:missing-data}, what the agent does not yet know about the task is encoded in the entries of $\tau$ that have not been revealed---not 
in any latent parameter. Prior information $Z$ can be made up of rich, unstructured data, and could comprise a set of action-specific information  $\{Z^{(a)}\}_{a\in \mathcal A}$. Prior information lets us model problems where rich information about the task is available before any outcomes are observed, and where that information should be used to guide exploration. For example, in a news recommendation problem, $Z$ could represent news article text.

The pair $\mathcal B=(Z,\tau)$ defines a bandit problem as follows: at each timestep $t$, the agent selects action $A_t \in \Aeval$, observes a vector of outcomes $Y_t$, and associates with this a reward $R(Y_t) \in [0,1]$ by applying a fixed, known function $R(\cdot)$. 
In particular, $Y_t = Y_t^{(A_t)}$. We index rows of the table by actions and columns by time, writing $\tau[a,t] = Y_t^{(a)}$.
The agent can make decisions using its own \emph{task-specific history} from that day: %
\begin{equation}
	\HH_{t-1}\triangleq \big( %
    Z, \, (A_1, Y_1),\ldots, (A_{t-1}, Y_{t-1}) \big), 
	\label{eqn:history}
\end{equation} 
which includes all historical observations available at the start of time $t$ and 
prior information $Z$. 
We call this an \emph{informed bandit} problem as there is prior information $Z$
associated with each task. %
Contrast this with contextual bandit problems, where fresh context arrives at each timestep and describes the current decision-making situation rather than the entire task. 
While existing works have studied meta-bandits that utilize task features \citep{wan2021metadata}, to our knowledge, existing bandit algorithms have not previously studied this kind of informed bandit problem with high-dimensional, unstructured $Z$ (e.g., text).

The goal, informally, is to select a rule for adaptively selecting actions that maximizes cumulative expected reward. 
We assume that the bandit task $\mathcal{B}$ is sampled i.i.d. from the \emph{task distribution}:
\begin{align}
\label{eq:pstar}
    \mathcal B \sim p_{\TN{tasks}}^*.
\end{align}
This is the direct extension of the model in  Section~\ref{section:missing-data}, where we assumed $Z$ and the single sequence $Y_1, \ldots, Y_T$ are drawn jointly from some distribution.

\paragraph{Online decision-making objective.} In the passive setting of Section~\ref{section:missing-data}, inference centered on a trajectory-level quantity of interest $f(Y_1, \ldots, Y_T)$. Similarly, in this active multi-action setting, various quantities of interest to the decision-maker can be written as functions of the potential outcomes table $\tau$. 
The long-run mean reward of action $a$ is
\begin{equation}\label{eq:mean_reward_tau}
\mu_{a}(\tau)= \frac{1}{T} \sum_{t=1}^{T} R \big( Y_{t}^{(a)} \big).
\end{equation}
This is essentially the ``population mean'' for a large finite population of size $T$, a common object of interest in statistics \cite[see e.g.][Chapter 7]{rice2007mathematical}. Similarly, the action with maximal reward is
\begin{equation}\label{eq:optimal_action_tau}
A^*(\tau) \in \argmax_{a\in \Aeval}  \, \mu_{a}(\tau).
\end{equation}
Above, we assume ties are broken in some deterministic fashion. A decision-maker who knew $\tau$ upfront could immediately implement the action $A^*(\tau)$ with highest average reward throughout all periods, with no need to engage in costly exploration. 

The cost of exploration is typically measured through \emph{regret}: the gap in rewards accrued relative to an omniscient agent who knew from the beginning the best fixed action over the horizon, $A^*(\tau)$. To define this formally, we let $\pi$ denote some adaptive policy---i.e. any (possibly randomized) rule that specifies which action to select next, given $Z$ and the history of past actions and $Y$'s. %
The per-period regret incurred by $\pi$ on bandit instance
$\mathcal{B}$ is given by\footnote{We benchmark against the best action in hindsight, which is very common in adversarial formulations. Less stringent versions of regret that move the maximum outside a certain conditional expectation are sometimes called pseudo-regret \citep{bubeck2012regret,LattimoreSz19}. In versions of our formulation where both are well-defined, the distinction between these are insignificant (Appendix \ref{app:finite_pop_TS}).}
\[
\Delta(\pi, \mathcal{B})  = \E_{\pi}\left[ \max_{a\in \mathcal{A}} \frac{1}{T} \sum_{t=1}^{T} R(Y_t^{(a)}) - \frac{1}{T} \sum_{t=1}^{T} R(Y_t^{(A_t)}) \, \bigg| \,\mathcal{B} \right].
\]
The expectation integrates over any internal randomness used in $\pi$. For deterministic policies, no expectation is needed.

We measure performance through expected regret  
\[
\Delta(\pi) \triangleq \E \left[\Delta(\pi, \mathcal{B}) \right],
\]
which averages over the draw of the bandit task $\mathcal{B}$. In our framework, this has the interpretation (by the law of large numbers) of being the long-run regret a fixed policy $\pi$ would incur on average across many i.i.d.  draws of bandit tasks.

\section{Active exploration via generative implementations of ``posterior sampling''}
\label{sec:psar-algorithm}

We now formally introduce our decision-making algorithm. Throughout Section \ref{sec:psar-algorithm}, we assume the algorithm has knowledge of the task distribution $p_{\TN{tasks}}^*$ defined in \eqref{eq:pstar}, from which $\mathcal B$ is drawn.
Specifically, knowledge of $p_{\TN{tasks}}^*$ means that the agent knows at all times the conditional distribution of any entry in potential outcomes table, written as
\begin{equation}
    \label{eq:missing_entry_distribution_general}    
    p_{\TN{tasks}}^* \big(\tau[a, i] = y \mid Z, (\tau[a',i'])_{(a',i') \in \mathcal{O}} \big)
\end{equation} 
for any tuple $(a, i)$ and any set of observed tuples $\mathcal{O}$. The above conditional distribution can be thought of as a sequence model, as it is the probability of one outcome, given a previous sequence of outcomes, and task information $Z$. 
For now, assume $p^*_{\rm{tasks}}$ is known; we discuss the case in which we only have an approximate sequence model later in Section \ref{sec:psar_ptheta}.
In practice, we will make simplifying assumptions on the structure of $p^*_{\TN{tasks}}$, which will make implementation easier (see Section~\ref{subsubsec:assumptions}).

Note %
that \emph{even if the task distribution $p_{\TN{tasks}}^*$ is known, the decision-making problem for our particular task $\MC{B}$ is non-trivial: $p_{\TN{tasks}}^*$ specifies %
the distribution of tasks, but not our particular task $\MC{B}$.} For example, we may know that bandit tasks with certain features $Z$ on average have certain characteristics, but we do not know the realization of
the  potential outcomes $\tau$ for the task %
at hand; this is the missing-data uncertainty introduced in Section~\ref{section:missing-data}.

\subsection{Posterior sampling by imputation when $p_{\TN{tasks}}^*$ is known} %
\label{sec:our-algorithm}
While the agent knows the conditional distributions of missing outcomes according to $p_{\TN{tasks}}^*$, the agent does not know the realization of $\tau$. All decisions are made under uncertainty about missing entries in this \emph{potentially} observable data table $\tau$. As decisions are made sequentially, each observed outcome exactly reveals one entry in $\tau$, resolving uncertainty about that entry while potentially informing predictions about the remaining unobserved entries. The agent may also resolve substantial uncertainty by leveraging prior information $Z$. One way to reason about this uncertainty is to imagine what the full table could be, drawing a hypothetical realization of $\tau$ that is consistent with what has already been observed. We call this a \emph{posterior sample}, which we define formally in Definition~\ref{lemma:posteriorSample} below. %

\begin{definition}\label{def:posterior_sample}
	 A random variable $\what{\tau}_t$ is an \bo{\emph{exact posterior sample}} of the potential outcomes table $\tau$ at time $t$ if $\what{\tau}_t$ and $\tau$ have %
     the same conditional distributions given all observations available before that time:
	\begin{equation}\label{eq:posterior-sampling-def}
		 \mathbb{P}\left(\what{\tau}_t \in \cdot \mid \HH_{t-1} \right) = \mathbb{P}\left(\tau \in \cdot \mid \HH_{t-1}\right) \quad  \text{a.s.}
	\end{equation}
    If the generated table $\what{\tau}_t$ is an exact posterior sample, functions of the generated table, like the associated mean-rewards $\mu_{a}(\what{\tau}_t)$ or optimal action $A^*(\what{\tau}_t)$ from \eqref{eq:mean_reward_tau} and \eqref{eq:optimal_action_tau} respectively, are also exact posterior samples of those quantities. In other words, if \eqref{eq:posterior-sampling-def} holds, then for any measurable function $f$, %
    \begin{align*}
        \mathbb{P}\big( f(\what{\tau}_t) \in \cdot \mid \HH_{t-1} \big) = \mathbb{P}\big( f(\tau) \in \cdot \mid \HH_{t-1}\big) \quad  \text{a.s.}
    \end{align*}
\end{definition}

An ability to draw posterior samples enables us to approximate the posterior distribution of any function of $\tau$. For instance, at timestep $t$, the posterior probability some action $a$ is optimal on average across the $T$ rounds could be approximated by Monte-Carlo simulation by repeatedly drawing posterior samples $\what{\tau}_t$, then calculating and recording the fraction of draws under which $A^*(\what{\tau}_t)=a$.

Posterior sampling requires sampling from the joint distribution of the missing entries. This can be implemented by iterating over them sequentially, at each step drawing a value from the conditional distribution \eqref{eq:missing_entry_distribution_general}  given the previously sampled entries, as previewed in Section~\ref{sec:key_ideas}.  Algorithm \ref{alg:posterior_sample_generic} shows pseudocode where we begin with a partially revealed potential outcomes table, then repeatedly select a missing entry (line \ref{line:generic_psar_pick}), sample its value conditioned on \emph{both observed entries and previous %
samples} (line \ref{line:generic_psar_sample}), and then record it into the table (line \ref{line:generic_psar_record}). This is an extension of the imputation procedure in Section~\ref{section:missing-data}, where the procedure from Figure~\ref{fig:autoregressiveGen} is carried out over the potential outcomes table $\tau$ rather than a single sequence.

\begin{algorithm}[h]
\caption{Generic posterior sampling via autoregressive generation (PSAR)}
\label{alg:posterior_sample_generic}
\begin{algorithmic}[1]
\Require $p_{\TN{tasks}}^*$, action set $\Aeval$, time $t$, history $\mathcal{H}_{t-1}$
\State Initialize $\what{\tau}_t$ as an $|\Aeval| \times T$ array with all observed $Y$'s %
from $\mathcal{H}_{t-1}$ filled in.
\State Initialize the set of filled in entries in $\what{\tau}_t$ as $F=\{(A_i,i) : i \leq t-1 \}$. %
\While {missing entries remain} 
\State Pick an index $(a,i)$ of a missing entry.
\label{line:generic_psar_pick}
\State Sample $\what{Y}^{(a)}_i \sim p_{\TN{tasks}}^*\left(\tau[a,i] =\cdot \mid Z, \{\tau[a,i]\}_{(a,i)\in F}\right)$.
\label{line:generic_psar_sample}
\State Record $\what{Y}^{(a)}_i$ in  $\what{\tau}_t$ and  add $(a,i)$ to the set of filled-in entries $F$.
\label{line:generic_psar_record}
\EndWhile\\
\Return $\what{\tau}_t$
\end{algorithmic}
\end{algorithm}
Correctness of Algorithm \ref{alg:posterior_sample_generic} is an immediate consequence of the chain rule of %
probability, 
which factorizes a joint probability distribution into a product of %
conditional distributions. We state the result and omit the proof.  
\begin{lemma}[Correctness]\label{lem:exact_psar}
	 The output $\what{\tau}_t$ returned by Algorithm \ref{alg:posterior_sample_generic} is an exact posterior sample (Definition \ref{def:posterior_sample}).
\end{lemma}

\subsection{A generative implementation of Thompson Sampling} %
\label{sec:psar_pstar}
Here we start by introducing our decision-making algorithm that uses posterior sampling by autoregressive generation %
(Algorithm~\ref{alg:posterior_sample_generic}, ``PSAR'') as a subroutine. 
We will show that the decision-making algorithm implements an analogue of Thompson sampling (i.e., probability matching) that replaces the usual step of posterior sampling of latent parameters $(\eta)$ for a posterior sample of the potential outcomes table $(\tau)$. %

The algorithm maintains an internal copy of the potential outcomes table, initialized with all values set to missing (line \ref{line:generic_tspsar_init}). As actions are selected and outcomes are observed, they are recorded in the table. To select an action in any period $t$, the algorithm generates a posterior sample of the potential outcomes table $\what{\tau}_t$ by copying observed entries, and sequentially imputing missing ones (line \ref{line:generic_tspsar_impute}%
), calculates the action with maximal average reward under that table and chooses to explore that action (line \ref{line:generic_tspsar_calculate_select}). Note that imputations generated at time $t$ are used only to select an action for that timestep and are not used thereafter.

\begin{algorithm}[h]
	\caption{Generic implementation of Thompson sampling via autoregressive generation (TSAR).} %
	\label{alg:Thompson_sampling_generic}
	\begin{algorithmic}[1]
		\Require Posterior sampling subroutine satisfying \eqref{eq:posterior-sampling-def}, known reward function $R(\cdot)$, action set $\Aeval$, time horizon $T$. 
		\State Initialize internal copy of $\tau$ as $|\Aeval| \times T$ array with all values set to missing. \label{line:generic_tspsar_init} 
        \State Initialize history as $\mathcal{H}_{0} \gets \{ Z^{(a)}: a \in \Aeval\}$. %
		\State 	Observe unstructured prior information $Z$.
		\For{$t=1,\ldots, T$}
		\State Generate posterior sample $\what{\tau}_t$ given $\HH_t$, satisfying 
        \eqref{eq:posterior-sampling-def}, e.g., using PSAR Alg. \ref{alg:posterior_sample_generic}. 
        \label{line:generic_tspsar_impute}
        \State Select action $A_t \gets A^*(\what{\tau}_t)$ that is optimal under $\what{\tau}_t$ and then discard %
        table $\what{\tau}_t$.
        \label{line:generic_tspsar_calculate_select}
        \State Observe outcome $Y_t \gets Y_t^{(A_t)}$ from action $A_t$.
        
		\State Update history $\HH_{t} \gets \HH_{t-1} \cup \{ (A_t, Y_t)\}$ and record outcome $Y_t$ in internal copy of $\tau$.
		\EndFor
	\end{algorithmic}
\end{algorithm}

We call this procedure Thompson sampling because it satisfies the probability matching property that is often taken as the definition for Thompson sampling \citep{thompson1933likelihood, scott2010modern,russo2016information}, which is that %

\begin{center}
    \textit{
The probability any action is chosen is equal to the posterior probability that it is optimal.
    }
\end{center}
\begin{lemma}[Probability matching, i.e., Thompson Sampling]
\label{lem:prob_matching} Under Algorithm \ref{alg:Thompson_sampling_generic}, for every time $t$ and action $a$, %
$\PP\left(A_t = a \mid \HH_{t-1}\right) = \PP\left(A^*(\tau)=a \mid \HH_{t-1}\right)$ a.s.
\end{lemma}
\begin{proof}
	We have $\PP\left(A_t = a \mid \HH_{t-1}\right) = \PP\left(A^*(\what{\tau}_t)=a \mid \HH_{t-1}\right) =\PP\left(A^*(\tau)=a \mid \HH_{t-1}\right)$, where the first equality is line \ref{line:generic_tspsar_calculate_select} of Algorithm \ref{alg:Thompson_sampling_generic} and the second equality uses the definition of a posterior sample \eqref{eq:posterior-sampling-def}. 
\end{proof}

\subsubsection{Regret bounds for online decision-making using $p^*$ (Algorithm~\ref{alg:Thompson_sampling_generic}).}
\label{sec:regret-correctly-specified}
The next result applies pre-existing analyses to bound the regret of this generative 
Thompson sampling variant. In particular, per-period regret vanishes if the number of interactions per action, $T/|\Aeval|$ is large, which are settings with ample opportunity for exploration. Note that Proposition \ref{prop:regret_TS_exact} below does not make a stationarity assumption on the reward distribution over time. %
\begin{prop}[Regret of exact Thompson sampling via autoregressive generation]
\label{prop:regret_TS_exact} Let $\pi_{\rm TSAR}$ denote Algorithm \ref{alg:Thompson_sampling_generic}. Assume rewards are bounded and normalized so that $R(y)\in [0,1]$ for any outcome $y$. Then, 
	\[
	\Delta(\pi_{\rm TSAR}) \leq \sqrt{\frac{|\Aeval| \log(|\Aeval|)}{2T} }.
	\]
\end{prop}
\begin{proof}
This result is an immediate consequence of the information-theoretic analysis of Thompson sampling developed by \citet{russo2016information}. 
Note \citet{bubeck2015bandit} and \citet{bubeck2016multi} observed that the proof approach used in \citet{russo2016information} applies without modification beyond the stationary bandit setting, as long as the probability matching property in Lemma \ref{lem:prob_matching} holds. See Appendix~\ref{sec:app_exact_ts} for more details.
\end{proof}

\subsubsection{The benefit of prior information: Regret with effective screening}\label{sec:screening}

The value of prior information $Z$ in our framework is that it lets the agent largely avoid exploring unpromising actions.  Probability matching makes this concrete: %
under Thompson sampling with a correctly specified model, 
since the probability an action is played equals the model's posterior probability that it is optimal, an action assigned prior probability below $1/T$ of being optimal is played less than once in expectation over the whole horizon. %
In regimes that motivate informed exploration---many candidate actions relative to the horizon---this screening is what makes strong performance possible at all.

The following result shows that this screening effect is captured automatically by our regret bound, without any modification to the algorithm.  Although the intuition is natural---Thompson sampling should not waste effort on arms the prior already rules out---we are not aware of a similar result in the literature, and provide our own proof based on the information-ratio technique of \cite{russo2016information}.

For a threshold $\epsilon > 0$, define the \emph{candidate set}
\[
  S(Z) \;=\; \bigl\{a \in \cA : \PP(A^*\!=\!a \mid Z) > \epsilon\bigr\},
\]
the actions that the prior information $Z$ leaves plausibly optimal.  Note that the size of the set $|S(Z)|$ is a random variable determined by $Z$; we write $\E[|S(Z)|]$ for its expectation over the task distribution.

\begin{corollary}[Regret of Thompson sampling with effective screening]\label{cor:screening}
Let $\pi_{\rm TSAR}$ denote Algorithm~\ref{alg:Thompson_sampling_generic} applied with exact posterior samples satisfying~\eqref{eq:posterior-sampling-def}.  For threshold $\epsilon = 1/T$,
\begin{equation}\label{eq:screening-bound}
  \Delta\bigl(\pi_{\rm TSAR}\bigr)
  \;\le\;
  \frac{|\cA|-\E[|S(Z)|]}{T}
  \;+\;
  \sqrt{\frac{\E[|S(Z)|]\cdot\log|\cA|}{2T}}\,.
\end{equation}
\end{corollary}

\noindent
Compared with Proposition~\ref{prop:regret_TS_exact}, which gives $\Delta(\pi_{\rm TSAR}) \le \sqrt{|\cA|\log|\cA|/(2T)}$, the bound~\eqref{eq:screening-bound} replaces $|\cA|$ in the leading $\sqrt{T}$ term with the expected number of actions that survive screening, $\E[|S(Z)|]$.  The arms screened out by $Z$ contribute only a $T$-independent additive cost of $(|\cA| - \E[|S(Z)|])/T$ per period.  When $Z$ is highly informative---so that $\E[|S(Z)|] \ll |\cA|$---this represents a substantial improvement. The proof in Appendix \ref{app:screening} utilizes information theoretic techniques to analyze Thompson sampling from \citet{russo2016information}.

\subsubsection{Online decision-making beyond Thompson sampling}\label{subsec:beyond_ts} %
The missing-data view of uncertainty applies more generally to other active exploration algorithms. In particular, our posterior sampling via autoregressive generation approach (Algorithm \ref{alg:posterior_sample_generic}) can be used as a replacement for posterior sampling of latent variables in many common decision-making algorithms.
For example, we can implement upper confidence bound algorithms by simulating several imputed average rewards for each action  and taking an action that maximizes their quantile; see Algorithm \ref{alg:bayesUCB} for pseudocode for BayesUCB \citep{kaufmann2012bayesian}.
Similarly, we can implement look-ahead procedures (e.g., Knowledge Gradients \citep{ryzhov2012knowledge}) by imputing an outcome from an action taken this timestep, and appending it to the context of the sequence model to simulate outcomes for future timesteps. Our missing data perspective allows one to extend these existing exploration algorithms to utilize potentially complex prior information through the pretrained sequence model.

\begin{algorithm}[h]
	\caption{%
    BayesUCB with posterior sampling via autoregressive generation} %
	\label{alg:bayesUCB}
	\begin{algorithmic}[1]
		\Require $p_{\TN{task}}$, actions $\Aeval$, prior information $\{ Z^{(a)}\}_{a \in \Aeval}$, number of generations $k$.
		\State Initialize history as $\mathcal{H}_{0} \gets \{ Z^{(a)}: a \in \Aeval\}$
		\For{$t \in \{1, \ldots, T\}$}
		\State Autoregressively generate $k$ conditionally i.i.d. data completions $\what{\tau}_{t,1}, \dots, \what{\tau}_{t,k}$ by repeatedly applying Algorithm \ref{alg:posterior_sample_generic} using $p_{\TN{task}}$, $\Aeval$ and $\mathcal{H}_{t-1}$. %
		\State Set quantile level $q_t \gets 1 - \frac{1}{t \log T}$.
		\State Let $q_{t,k} :=\lceil q_t\cdot k \rceil$, i.e., $q_t \cdot k$ rounded up to the nearest integer.
		\State For each $a \in \Aeval$ set $\TN{UCB}_{t,a}$ to the $q_{t,k}^{\TN{th}}$ smallest value in $\big\{ \mu_{a}(\what{\tau}_{t,j}) \big\}_{j=1}^k$.
		\State Select action $A_t \gets \TN{argmax}_{a \in \Aeval} \big\{ \TN{UCB}_{t,a} \big\}$. 
		\State Observe outcome $Y_t \gets Y_t^{(A_t)}$ from action $A_t$.
		\State Update history $\HH_{t} \gets \HH_{t-1} \cup \{ (A_t, Y_t)\}$.
		\EndFor 
	\end{algorithmic}
\end{algorithm}

\subsection{Autoregressive posterior sampling under structural assumptions}
\label{subsubsec:assumptions}

In this section, we present a specific instantiation of our posterior sampling procedure, Algorithm \ref{alg:posterior_sample_generic}, under two specific structural assumptions: \textit{independence across actions} (Assumption \ref{assump:articlesiid}) and \textit{exchangeability over time} (Assumption \ref{assump:exchangeable}). 
Additionally, we assume that the prior information $Z=\{Z^{(a)}\}_{a\in \Aeval}$ %
consists of prior information for each action. %
Since these structural assumptions make implementing our algorithm more straightforward, the remainder of this paper will focus on the setting in which these two assumptions hold.

Assumption \ref{assump:articlesiid} implies that given $p_{\TN{tasks}}^*$ is known, 
the prior information and potential outcomes for action $a$, $\big( Z^{(a)}, Y_{1:T}^{(a)}\big)$, do not provide any additional information about $\big( Z^{(a')}, Y_{1:T}^{(a')}\big)$ for a different action $a'$. %
This assumption is reasonable in certain domains (like product recommendations) and simplifies the generative posterior sampling procedure substantially because it allows us to draw inferences independently across actions.

\begin{algorithm}[b!]
\caption{Posterior sampling via autoregressive generation under Assumptions \ref{assump:articlesiid} and \ref{assump:exchangeable}}
\label{alg:posterior_sample}

\begin{algorithmic}[1]
\Require $p_{\TN{per-action}}$, action set $\Aeval$, time $t$, history $\mathcal{H}_{t-1}$
\State Initialize $\what{\tau}_t$ as an $|\Aeval| \times T$ array with all observed $Y$'s %
filled in.
\For{action $a \in \Aeval$}
    \State Initialize the set of indices for filled-in entries for action $a$ as $F^{(a)} = \{(a,i) : i\leq t-1, A_i=a\}$.
    \While{missing entries remain, i.e. $|F^{(a)}| < T$ }
    \State Pick a missing %
    index $(a,i)$ not in %
    $F^{(a)}$. 
    \State Sample $\what{Y}^{(a)}_i \sim p_{\TN{per-action}}\left(\tau[a,i] =\cdot \mid Z^{(a)}, \{\tau[a,i]\}_{(a,i)\in F^{(a)}}\right)$.
    \State Record $\what{Y}^{(a)}_i$ in $\what{\tau}_t$ and  add $(a,i)$ to the set of filled-in entries $F^{(a)}$.
    \EndWhile
\EndFor \\
\Return $\what{\tau}_t$
\end{algorithmic}
\end{algorithm}

\begin{assumption}[Independence across actions]
	\label{assump:articlesiid}   
    There exists some distribution $p_{\TN{per-action}}^*$ such that 
	\begin{align*}
		\big( Z^{(a)}, Y_{1:T}^{(a)}\big) \sim p_{\TN{per-action}}^* \quad \TN{independently across} \quad a \in \Aeval.
	\end{align*}
    This means $p^*_{\TN{tasks}}$ can be decomposed as follows: $p^*_{\TN{tasks}} \big( \{ Z^{(a)}, Y_{1:T}^{(a)} \}_{a \in \MC{A}} \big) = \prod_{a \in \MC{A}} ~ p^*_{\TN{per-action}} \big( Z^{(a)}, Y_{1:T}^{(a)} \big)$.
\end{assumption}

\begin{figure}[b!]
    \centering
    \vspace{-4mm}
      \begin{subfigure}{0.85\linewidth}
    \centering
    \includegraphics[width=\linewidth]{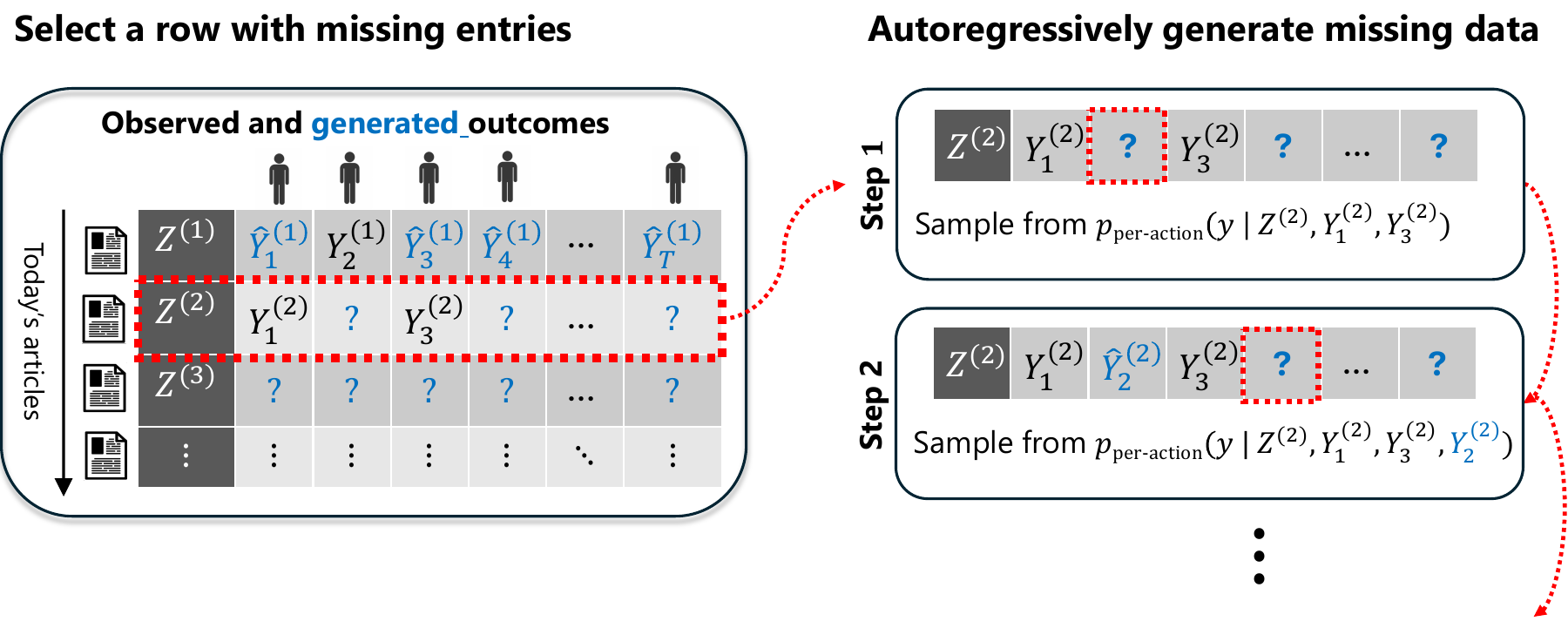}
    \caption{Imputing potential outcomes: posterior sampling via autoregressive generation (PSAR)}
    \label{subfig:psar}
  \end{subfigure}

  \begin{subfigure}{0.85\linewidth}
    \centering
    \includegraphics[width=\linewidth]{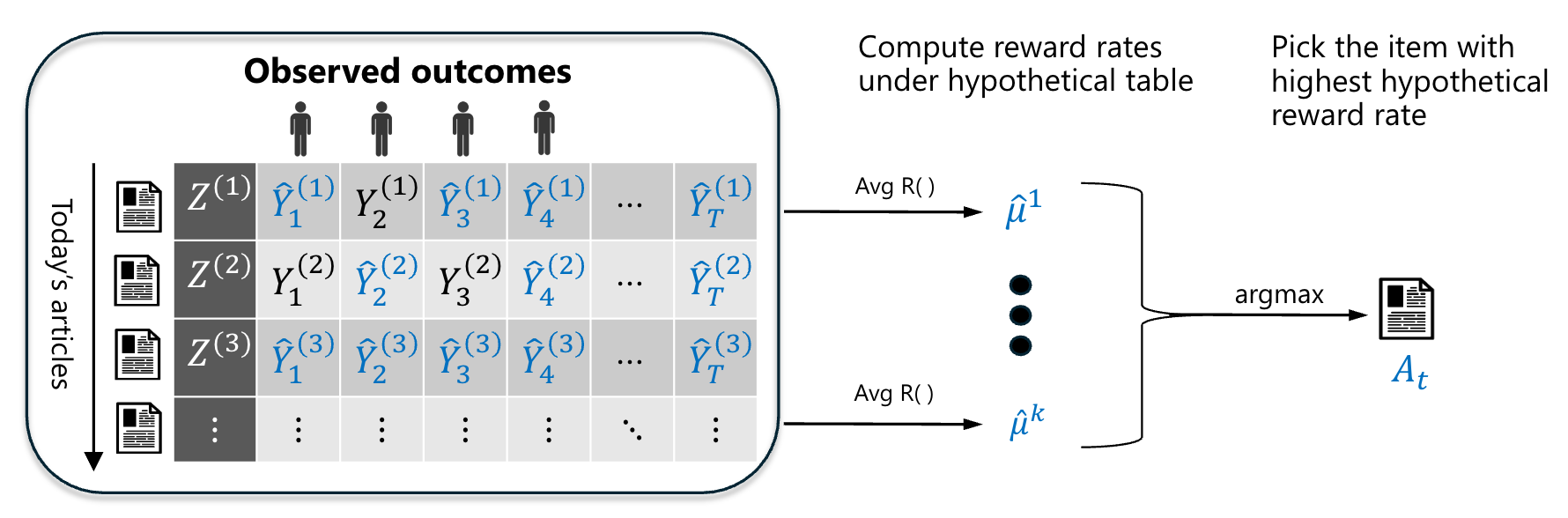}
    \caption{Making a decision given imputed potential outcomes: use the imputed potential outcomes table to choose the action with the largest average outcome over users.}
    \label{subfig:tsar}
  \end{subfigure}
    
    \caption{\bo{Thompson Sampling via Autoregressive Generation.} Subfigure (a) depicts imputing potential outcomes via autoregressive generation using Algorithm~\ref{alg:posterior_sample}. Subfigure (b) depicts making a decision using Thompson Sampling (probability matching) with the imputed potential outcomes  (Algorithm~\ref{alg:Thompson_sampling_generic}). 
}
    \label{fig:autoregressive_generation}
\end{figure}

Next, we make an exchangeability over time assumption, which means 
that the distribution of the potential outcomes table is invariant under permutation of its columns (timesteps). In our setting, where the columns correspond to the time periods in a bandit problem, this amounts to a \emph{stationarity} assumption that is inherent to most studies of multi-armed bandit problems\footnote{%
Somewhat remarkably, Thompson sampling satisfies a regret bound (relative to the best action in hindsight $A^*$) even in nonstationary settings where the exchangeability assumption fails (see Proposition \ref{prop:regret_TS_exact} and the references therein), but other standard bandit algorithms are only studied in stationary settings.}. 
\begin{assumption}[Exchangeability]
	\label{assump:exchangeable}
	For any $a\in \Aeval$ and permutation $\sigma$, \\
    $\big( Z^{(a)},  Y_{\sigma(1)}^{(a)}, Y_{\sigma(2)}^{(a)}, \ldots, Y_{\sigma(T)}^{(a)} \big)$ has the same joint distribution as $\big( Z^{(a)},  Y_1^{(a)}, Y_2^{(a)}, \ldots, Y_T^{(a)} \big)$. 
\end{assumption}
One example of when exchangeability holds is when the outcomes $(Y_1^{(a)}, Y_2^{(a)}, \ldots, Y_T^{(a)})$ are independent and identically distributed (i.i.d.), but assuming outcomes that are i.i.d. is too strong for our purposes.
For example, in the classical Bayesian setting described in Section \ref{section:classical}, the outcomes are not i.i.d.---they are potentially highly correlated due to their common dependence on the latent factor $\eta$---but they are exchangeable.
Exchangeability encodes a temporal stability that ensures early observations from playing an action are informative about future performance, allowing for a stable ``best action'' $A^*$ that an agent could hope to identify and consistently play over time.

Under Assumptions \ref{assump:articlesiid} and \ref{assump:exchangeable}, the %
procedure in Algorithm \ref{alg:posterior_sample_generic} reduces to Algorithm \ref{alg:posterior_sample} (depicted in Figure \ref{fig:autoregressive_generation}).  The main difference between these algorithms is that Assumptions \ref{assump:articlesiid} and \ref{assump:exchangeable} allow us to generate outcomes $\what{Y}^{(a)}_i$ independently across actions, and outcomes for action $a$ to depend on %
the prior information for only that action, $Z^{(a)}$.

\section{Autoregressive posterior sampling with approximate sequence models}
\label{sec:psar_ptheta}
In this section, we provide theoretical guarantees for our autoregressive generation approach to forming posterior samples when we do not know the true task distribution $p_{\TN{tasks}}^*$, but we instead have access to an approximation $\widehat{p}_{\TN{task}}$. Throughout, we work under Assumptions \ref{assump:articlesiid} and \ref{assump:exchangeable}. Specifically, we provide two main theoretical results:
\begin{enumerate}[leftmargin=*]
    \item Forming posterior samples with a sequence model $\widehat{p}_{\TN{task}}$ approximates posterior sampling up to an error term that depends on the excess prediction loss of $\widehat{p}_{\TN{task}}$ compared to $p_{\TN{tasks}}^*$ (Section \ref{sec:psar-ts}). Notably, this  guarantee holds even when sampling is conditioned on a history produced by an \emph{arbitrary}  adaptive data-collection policy.
    \item For the regret of online decision-making, we first prove a general reduction that holds for \emph{any} policy: regret against the true environment is controlled by regret in an environment consistent with $\widehat{p}_{\TN{tasks}}$, plus a penalty for the excess offline prediction loss. Specializing this reduction to Thompson sampling via autoregressive generation (TSAR) then yields a concrete regret bound in terms of the offline prediction loss of $\widehat{p}_{\TN{tasks}}$ (Section~\ref{section:regret-bounds}). %
\end{enumerate}

\subsection{Theory Part I: Autoregressive generation from approximate sequence models approximates posterior sampling}
\label{sec:psar-ts}
When can we trust that autoregressive generation will produce faithful posterior samples to guide exploration? While it is natural to expect high-quality samples when the sequence model exactly matches the data-generating distribution, this section establishes a far more general guarantee: \textit{the quality of generated posterior samples is controlled by the model's offline prediction performance, even when sampling is performed conditional on strategically collected observations}. This provides a concrete foundation for our algorithm's core premise—that accurate next-outcome prediction enables reliable posterior sampling during online exploration.

We quantify the quality of a sequence model $\widehat{p}_{\TN{per-action}}$ by its log-loss,  %
\begin{align}
    \label{eq:pop_loss}
    \ell(\widehat{p}_{\TN{per-action}}) &\triangleq  
    \E\bigg[ -\sum_{t=1}^{T} \log \, \widehat{p}_{\TN{per-action}} \Big( Y_t^{(a)} \mid Z^{(a)}, Y_{1:t-1}^{(a)} \Big) \bigg],
\end{align}
where the expectation above is  taken over $(Z^{(a)},Y^{(a)}_{1:T})\sim p_{\TN{per-action}}^*$.
We use this metric since sequence prediction models have had immense success in optimizing this exact quantity. %

First, note the following identity, which expresses the excess loss of a sequence model, $\ell(\widehat{p}_{\TN{per-action}})-  \ell(p_{\TN{per-action}}^*)$, as the average KL divergence between the prediction of $\widehat{p}_{\TN{per-action}}$ and those of the true data-generating model $p_{\TN{per-action}}^*$. \
\begin{multline}
    \ell(\widehat{p}_{\TN{per-action}}) = \ell(p_{\TN{per-action}}^*) \\
    + \E_{Z^{(a)} \sim P_Z} \left[D_{\rm KL}\left( p_{\TN{per-action}}^* \big(Y_1^{(a)}, \ldots, Y_T^{(a)} \mid Z^{(a)} \big) \; \Big\| \; \widehat{p}_{\TN{per-action}} \big( Y_1^{(a)}, \ldots, Y_T^{(a)} \mid Z^{(a)} \big)  \right) \right]. 
    \label{eq:log-loss-is-kl}
\end{multline}
For completeness, see Lemma \ref{lem:log-loss-is-kl} in Appendix \ref{app:log-loss-is-kl} for the derivation of the above identity. Informally, we say a sequence model has been successfully \emph{optimized} for offline predictions if $\ell(p_{\TN{per-action}}^*)-\ell(\widehat{p}_{\TN{per-action}}) \leq \epsilon$, for some small $\epsilon > 0$.

Our main result formalizes how offline prediction quality controls the quality of generated samples, even under adaptive data collection. Recall that  $\HH_{t-1} \triangleq \big\{ Z, \, (A_1, Y_1),\ldots, (A_{t-1}, Y_{t-1}) \big\}$  denotes the history of observations available at the start of time $t$, including unstructured prior information %
$Z=\{Z^{(a)}\}_{a\in \Aeval}$. The proof is given in Appendix \ref{app:proof_of_posterior_sample}.
\begin{prop}\label{prop:posterior_sample}
    Let Assumptions \ref{assump:articlesiid} and \ref{assump:exchangeable} hold. If, after employing some policy $\pi$ for $t-1$ time periods, Algorithm \ref{alg:posterior_sample} is applied to the history $\HH_{t-1}$ with sequence model $\widehat{p}_{\TN{per-action}}$ to generate a potential outcomes table  $\widehat{\tau}_t$, then, 
    \[
    \underbrace{ \E_\pi\big[ D_{\rm KL}\big(\mathbb{P}(\widehat{\tau}_t \in \cdot  \mid \HH_{t-1}) \, \| \, \mathbb{P}\left(\tau \in \cdot  \mid \HH_{t-1} \right) \big) \big] }_{\TN{Discrepancy %
    in online generated samples}} ~~ \leq ~~ \underbrace{|\Aeval| \left[\ell(\widehat{p}_{\TN{per-action}}) - \ell(p_{\TN{per-action}}^*) \right].}_{\TN{Discrepancy 
    in offline predictions}}
    \]
    Moreover, the data processing inequality implies that for any function $f$,
        \[
 \E_\pi\big[ D_{\rm KL}\big(\mathbb{P}(f(\widehat{\tau}_t) \in \cdot \mid \HH_{t-1} ) \, \| \, \mathbb{P}(f(\tau) \in \cdot \mid \HH_{t-1} \big) \big] ~~ \leq ~~  |\Aeval| \left[ \ell(\widehat{p}_{\TN{per-action}}) - \ell(p_{\TN{per-action}}^*) \right]
    \]
\end{prop}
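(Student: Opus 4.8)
The plan is to prove the first (KL) inequality by lifting the comparison from a conditional-law comparison to a comparison of two \emph{joint} laws over the pair $(\HH_{t-1},\tau)$, applying the chain rule for KL divergence, and then evaluating the resulting joint divergence using independence across actions together with the excess-loss identity \eqref{eq:log-loss-is-kl}; the ``Moreover'' statement then follows from a single application of the data-processing inequality. Throughout I rely on two facts already available: Assumption~\ref{assump:articlesiid}, which makes the rows $(Z^{(a)},Y_{1:T}^{(a)})$ independent across $a$, and the posterior formula \eqref{eq:formula-for-posterior}, which writes $\mathbb{P}(\tau\in\cdot\mid\HH_{t-1})$ as a per-row autoregressive product of the \emph{true} conditionals $p^*$ in the order $\prec_a$. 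The generation kernel defined by Algorithm~\ref{alg:posterior_sample} has the identical product form with $p^*$ replaced by the learned $p_\theta$, and \eqref{eq:log-loss-is-kl} supplies the per-row budget $\ell(p_\theta)-\ell(p^*)=\E_{Z^{(a)}}\!\big[D_{\rm KL}\big(p^*(Y_{1:T}^{(a)}\mid Z^{(a)})\,\|\,p_\theta(Y_{1:T}^{(a)}\mid Z^{(a)})\big)\big]$.

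First I would introduce two joint laws on $(\HH_{t-1},\tau)$ that share the \emph{same} policy $\pi$: a real law $P$, under which each row of $\tau$ is drawn from $p^*$ and $\HH_{t-1}$ is produced by running $\pi$; and a model law $Q$, under which each row is drawn from $p_\theta$ and $\HH_{t-1}$ is produced by the identical policy map. The crucial identification is that, given a realized history $h$, the conditional $Q(\tau\mid h)$ coincides with the law of $\hat\tau_t$ generated by Algorithm~\ref{alg:posterior_sample}: both fix the observed entries and complete the missing ones by the autoregressive $p_\theta$-product in the order $\prec_a$. This step is where Assumption~\ref{assump:exchangeable} enters, exactly as in the derivation of \eqref{eq:formula-for-posterior}, to justify treating observed outcomes as a prefix regardless of the timesteps at which they were revealed. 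Likewise $P(\tau\mid h)=\mathbb{P}(\tau\in\cdot\mid\HH_{t-1}=h)$ is the true posterior.

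Next I would apply the chain rule for KL and discard the nonnegative history-marginal term,
\begin{equation*}
\E_{h\sim P_{\HH_{t-1}}}\!\big[D_{\rm KL}\big(P(\tau\mid h)\,\|\,Q(\tau\mid h)\big)\big]
\;\le\;
D_{\rm KL}\big(P_{(\HH_{t-1},\tau)}\,\|\,Q_{(\HH_{t-1},\tau)}\big),
\end{equation*}
so that the left side is precisely the quantity in the proposition (the chain rule places the true posterior $\mathbb{P}(\tau\mid\HH_{t-1})$ as the first argument and the generated law $\mathbb{P}(\hat\tau_t\mid\HH_{t-1})$ as the second, with bounding quantity the nonnegative excess offline loss $\ell(p_\theta)-\ell(p^*)$). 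To evaluate the joint divergence I would write $\HH_{t-1}=g(\tau,\xi)$ for the common, possibly randomized, policy map $g$ with randomness $\xi\perp\tau$ drawn identically under $P$ and $Q$; because $\xi$ and $g$ are shared and $\tau$ is recoverable from $(\HH_{t-1},\tau)$, the data-processing inequality collapses the joint divergence to the table-marginal divergence $D_{\rm KL}(P_\tau\,\|\,Q_\tau)$. Finally, independence across rows (Assumption~\ref{assump:articlesiid}) together with the shared marginal $P_Z$ makes this a sum of $|\Aeval|$ identical per-row terms, each equal to $\E_{Z^{(a)}}[D_{\rm KL}(p^*\,\|\,p_\theta)]=\ell(p_\theta)-\ell(p^*)$ by \eqref{eq:log-loss-is-kl}, yielding the claimed $|\Aeval|\,[\ell(p_\theta)-\ell(p^*)]$.

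For the ``Moreover'' claim, since $f(\tau)$ and $f(\hat\tau_t)$ are pushforwards of $\tau$ and $\hat\tau_t$ under the common map $f$, the data-processing inequality gives, pointwise in $h$, $D_{\rm KL}\big(\mathbb{P}(f(\tau)\in\cdot\mid\HH_{t-1}=h)\,\|\,\mathbb{P}(f(\hat\tau_t)\in\cdot\mid\HH_{t-1}=h)\big)\le D_{\rm KL}\big(\mathbb{P}(\tau\in\cdot\mid\HH_{t-1}=h)\,\|\,\mathbb{P}(\hat\tau_t\in\cdot\mid\HH_{t-1}=h)\big)$; taking expectations over $h$ and invoking the first part closes the argument. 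The main obstacle I anticipate is the step collapsing the joint divergence to the table marginal while correctly accounting for the \emph{adaptivity} of $\pi$: the set of observed entries is data-dependent, so one must argue that the policy, being an identical measurable function of the table under shared randomness, contributes no divergence --- this is exactly what lets us avoid conditioning on the adaptive observation pattern. A secondary subtlety worth flagging is that the identification of $Q(\tau\mid h)$ with the Algorithm~\ref{alg:posterior_sample} kernel requires the prefix-reordering to be distributionally innocuous for the \emph{learned} model $p_\theta$ as well as for $p^*$; this holds whenever $p_\theta$ is (exactly or approximately) exchangeable, which can be ensured architecturally or through the permutation-augmented training described in Section~\ref{section:pretraining}.
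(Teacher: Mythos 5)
Your overall architecture is close to the paper's own proof (Appendix \ref{app:proof_of_posterior_sample}): the paper likewise compares the true process against an all-$p_\theta$ ``misspecified'' process, uses the chain rule plus nonnegativity of a history-marginal KL term to discard it, and evaluates a joint divergence as $|\Aeval|$ times the excess loss via \eqref{eq:log-loss-is-kl}. Your data-processing collapse of $D_{\rm KL}\big(P_{(\HH_{t-1},\tau)}\,\|\,Q_{(\HH_{t-1},\tau)}\big)$ to the table marginal through the shared map $\HH_{t-1}=g(Z,\tau,\xi)$ with $\xi\perp\tau$ is a legitimate and arguably slicker substitute for the paper's explicit revelation-order KL computation (their second lemma, which chains over all $|\Aeval|T$ adaptively ordered revelations), and it does correctly dispose of the adaptivity of $\pi$ --- the obstacle you flagged as primary is handled. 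Two cosmetic points you share with the paper: the bound you prove places the true posterior in the first KL slot and the generated law in the second, which is also what the paper's appendix actually establishes even though the proposition displays the reverse order; and your $\ell(p_\theta)-\ell(p^*)$ is the correct nonnegative excess loss, the proposition's $\ell(p^*)-\ell(p_\theta)$ being a sign typo.

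The genuine gap is the step you demoted to a ``secondary subtlety'': identifying $Q(\tau\mid h)$ with the kernel of Algorithm \ref{alg:posterior_sample} requires $p_\theta$ itself to be exchangeable, and the proposition assumes no such thing --- Assumptions \ref{assump:articlesiid} and \ref{assump:exchangeable} constrain only $p^*$. Under your $Q$ the table is drawn up front with columns indexed by time; given a history in which arm $a$'s observations sit at scattered timesteps, the $Q$-conditional of the missing entries is the conditional of a non-exchangeable joint law given scattered coordinates, which in general does \emph{not} equal the prefix-reordered autoregressive completion that the algorithm actually samples, so your left-hand side is no longer the proposition's left-hand side. Since the learned $p_\theta$ is a fitted neural sequence model and the paper's permutation/bootstrap augmentation only \emph{promotes} exchangeability rather than enforcing it, your proof as written establishes the proposition only under an added hypothesis. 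The paper's proof is engineered precisely to avoid this: it never interprets the generation as a posterior under a model environment. Instead it defines three sequential-revelation processes over a generalized history --- true, misspecified, and a hybrid that switches from $p^*$ to $p_\theta$ at step $t$ --- in which each revealed outcome is drawn conditioned only on the previously revealed same-row outcomes \emph{in revelation order}; the hybrid equals the actual generation procedure by construction, and the bound $D_{\rm KL}(\mathrm{true}\,\|\,\mathrm{hybrid}) \le D_{\rm KL}(\mathrm{true}\,\|\,\mathrm{misspecified}) = |\Aeval|\left[\ell(p_\theta)-\ell(p^*)\right]$ then holds for arbitrary, possibly non-exchangeable $p_\theta$. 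To repair your argument without the extra hypothesis, replace the posterior identification of $Q(\tau\mid h)$ with such a hybrid-process comparison.
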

Note that the expectation in the proposition is taken over the history under $\pi$. This result relates offline and online performance. When the model achieves near-optimal prediction loss, the bound implies that in an information-theoretic sense it is difficult to distinguish our algorithm's samples from true posterior samples. The second part of the proposition extends this guarantee to any derived quantity, like mean rewards or optimal actions, that we might compute from these samples.

This result formalizes %
two deep properties of 
our algorithm. First, it shows that incremental errors in autoregressive generation do not severely compound across iterations. Second, it shows that a single measure of offline sequence prediction loss directly controls the quality of posterior generations for \textit{any} given process for generating the history $\HH_{t-1}$. This may be surprising, as theory in reinforcement learning typically requires certain distributions under the policy used to collect offline data to nearly match those under the online policy \citep{kakade2002approximately, antos2008learning, ross2010efficient}.

\subsection{Theory Part II: Exploration using approximate sequence models $\widehat{p}_{\TN{per-action}}$ results in low regret}
\label{section:regret-bounds}
This section formally shows how successfully optimizing next-outcome prediction on an offline dataset reliably enables online exploration with low regret. Recall from Section \ref{sec:meta-bandit-formulation}, we measure the agent's performance through expected per-period regret:
\begin{equation}\label{eq:regret}
    \Delta(\pi; \MC{B})= \E_{\pi}\left[ \max_{a\in \Aeval} \frac{1}{T} \sum_{t=1}^{T} \left(R(Y_t^{(a)}) - R(Y_t^{(A_t)})\right)\right].
\end{equation}
The expectation above is taken over the joint draw of unstructured prior information and potential outcomes $(Z^{(a)}, Y_{1:T}^{(a)})_{a\in \Aeval}$ from $p^*_{\TN{per-action}}$ as well as any additional randomness in the actions selected by policy $\pi$.

\subsubsection{A general reduction: optimized simulators reflect true regret}
Our analysis relies on interpreting an approximate sequence model as inducing a simulator for bandit tasks, as in the definition below. One could use such a simulator to train policies offline using reinforcement learning techniques, or, as we do, to perform online planning.

\begin{definition}[Bandit simulator]
\label{def:simulator} Let Assumptions \ref{assump:articlesiid} and \ref{assump:exchangeable} hold. A (possibly approximate) sequence model $\widehat{p}_{\TN{per-action}}$, together with the true marginal distribution $Z^{(a)}$, induces a stochastic simulator of bandit policies as follows. First draw  $Z^{(a)}$ independently for each $a \in \Aeval$. Then, at any time $t$, given the history $\HH_{t-1}$, algorithm $\pi$ selects some action $A_t$ and observes an outcome
    \[ 
    Y_t \mid \HH_{t-1}, A_t \sim \widehat{p}_{\TN{per-action}}\Big( \cdotspace \mid Z^{(A_t)}, \, (Y_j)_{j\in F_t^{(A_t)}} \Big),
    \]
    which is added to the history. Above, we use $F_t^{(a)} = \{(a,i) : i\leq t-1, A_i=a\}$ to denote the ordered sequence of time indices at which action $a$ has been chosen in the history $\HH_{t-1}$.
\end{definition}
It is not hard to show that, according to this definition, the simulator induced by $p_{\TN{per-action}}^*$ perfectly matches the true data-generating process. 
Let $\Delta(\pi; \widehat{p}_{\TN{per-action}})$ denote the expected regret policy $\pi$ incurs under the simulated environment and reserve $\Delta(\pi; p_{\TN{per-action}}^*)$ for the true regret defined earlier in \eqref{eq:regret}. 
\begin{prop}\label{prop:sim-to-real-gap} For any policy $\pi$ and any sequence model $\widehat{p}_{\TN{per-action}}$, 
    \begin{align}
    \underbrace{ \Delta \big( \pi; p_{\TN{per-action}}^* \big) }_{\TN{Deployment regret}} 
    ~~ \leq ~~ \underbrace{ \Delta \big( \pi; \widehat p_{\TN{per-action}} \, \big) }_{\TN{Regret under simulator}} 
    + \underbrace{ \sqrt{ (|\Aeval| / 2) \left\{ \ell(\widehat{p}_{\TN{per-action}}) - \ell(p_{\TN{per-action}}^*) \right\} } }_{\TN{Penalty for sub-optimal simulator}}.
    \label{eqn:deploymentRegret}
\end{align}
\end{prop}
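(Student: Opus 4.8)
The plan is to recognize both the deployment regret $\Delta(\pi;p^*)$ and the simulator regret $\Delta(\pi;p_\theta)$ as expectations of one and the same bounded functional --- the per-episode regret $g(\tau,A_{1:T}) := \max_{a}\frac{1}{T}\sum_t R(Y_t^{(a)}) - \frac{1}{T}\sum_t R(Y_t^{(A_t)})$ --- taken under two different laws of the \emph{complete} potential-outcomes table $\tau$. Under the real environment the rows of $\tau$ are i.i.d.\ draws from $p^*(\cdot\mid Z^{(a)})$ by Assumption \ref{assump:articlesiid}, while under the simulator of Definition \ref{def:simulator} they are i.i.d.\ draws from the autoregressive law $p_\theta(\cdot\mid Z^{(a)})$. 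In both cases the policy $\pi$ together with its internal randomness $\omega$ acts as a common, measurable post-processing of the realized table. Writing $P^*$ and $P_\theta$ for the induced joint laws of $(\tau,\omega)$, the two regrets are exactly $\E_{P^*}[g]$ and $\E_{P_\theta}[g]$, so the entire claim reduces to controlling how far apart $P^*$ and $P_\theta$ are.

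Concretely, I would first bound the difference of expectations of a bounded functional by total variation: since the per-episode regret takes values in $[0,1]$ (cf.\ the pseudo-regret discussion and Appendix \ref{app:finite_pop_TS}), $\Delta(\pi;p^*)-\Delta(\pi;p_\theta) \le \mathrm{TV}(P^*,P_\theta)$. Next I would invoke Pinsker's inequality, $\mathrm{TV}(P^*,P_\theta)\le \sqrt{\tfrac12\, D_{\rm KL}(P^*\,\|\,P_\theta)}$. The final ingredient is to evaluate this divergence. Because $\omega$ is independent of $\tau$ with a law shared by the two measures, and the policy enters only as shared post-processing, the data-processing inequality collapses $D_{\rm KL}(P^*\|P_\theta)$ to the KL between the two \emph{table} laws. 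Independence across actions (Assumption \ref{assump:articlesiid}) then tensorizes this into $|\Aeval|$ identical per-row terms, and the log-loss identity \eqref{eq:log-loss-is-kl} identifies each term with $\ell(p_\theta)-\ell(p^*)$, giving $D_{\rm KL}(P^*\|P_\theta) = |\Aeval|\{\ell(p_\theta)-\ell(p^*)\}$. Substituting into Pinsker yields precisely the stated penalty $\sqrt{(|\Aeval|/2)\{\ell(p_\theta)-\ell(p^*)\}}$.

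The step I expect to be most delicate is the reduction of the comparison to the product table laws, for two reasons. The regret functional depends on the entire table, including counterfactual entries $Y_t^{(a)}$ for arms $a\neq A_t$ that are never observed, whereas Definition \ref{def:simulator} only generates outcomes for arms actually played; I would therefore first argue that completing each unplayed row autoregressively from $p_\theta$ --- which, by cross-action independence and exchangeability (Assumption \ref{assump:exchangeable}), leaves the joint law of every row intact --- produces a well-defined simulator law in which each row of $\tau$ is an i.i.d.\ draw from $p_\theta(\cdot\mid Z^{(a)})$. Second, and more conceptually, the adaptivity of $\pi$ makes the observation \emph{order} of each row policy-dependent, so one must verify that the per-row KL is insensitive to this ordering; this is exactly where exchangeability and the order-invariance of the chain-rule factorization of KL are used, and it is the same mechanism that lets Proposition \ref{prop:posterior_sample} hold for \emph{any} history-generating process. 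Once the comparison is anchored to the policy-independent product laws over rows, the remaining tensorization and Pinsker steps are routine bookkeeping.
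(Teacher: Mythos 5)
Your skeleton is the same as the paper's: write both regrets as expectations of one bounded functional of a completed outcomes table plus shared policy randomness, bound the difference via Pinsker's inequality, collapse the joint KL by the chain rule to the KL between the table laws, tensorize across actions using independence, and finish with the log-loss identity \eqref{eq:log-loss-is-kl}. The gap lies in the step you yourself flagged as delicate, and your proposed resolution of it does not work: you invoke exchangeability (Assumption \ref{assump:exchangeable}) to argue that the simulator of Definition \ref{def:simulator} induces a time-indexed table whose rows are i.i.d.\ draws from $p_\theta(\cdot \mid Z^{(a)})$, and that the per-row KL is insensitive to the policy-dependent revelation order. But Assumption \ref{assump:exchangeable} is a hypothesis on the \emph{true} distribution $p^*$ only; the proposition quantifies over \emph{any} sequence model $p_\theta$, and a learned $p_\theta$ need not be exchangeable (the paper stresses that accommodating non-exchangeable $p_\theta$ is precisely why its proof introduces a special construction). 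For non-exchangeable $p_\theta$, the simulator reveals on the $k$-th play of arm $a$ the $k$-th term of a $p_\theta$-sequence; placing those values at the adaptive, history-dependent times at which arm $a$ happened to be played and filling the unplayed slots autoregressively does \emph{not} yield a row whose time-indexed law is $p_\theta(\cdot \mid Z^{(a)})$. So the product-law description of the simulator table --- the anchor of your whole comparison --- fails exactly in the regime the statement must cover.

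The paper's fix is a re-indexing rather than an invariance argument: Definition \ref{def:non-exchangeable-dgp} builds a table $\tilde{\tau}$ whose columns index \emph{play counts} rather than time. Under that indexing the simulator's table is a product of $p_\theta$-rows \emph{by construction}, with no symmetry assumption on $p_\theta$ whatsoever; exchangeability is then needed, and is available, only on the $p^*$ side, to argue that the deployment regret \eqref{eq:regret} (defined through the time-indexed table) coincides with the regret of the same policy played against a play-count-indexed table with rows i.i.d.\ $p^*$ --- note the benchmark $\max_{a}\frac{1}{T}\sum_{t}R(\cdot)$ is a symmetric function of each row, so re-indexing does not affect it. Once the comparison is set up on $\tilde{\tau}$, your remaining steps (Pinsker, chain rule to discard the shared seed and $Z$'s, tensorization across arms, and Lemma \ref{lem:log-loss-is-kl}) go through verbatim and are exactly the paper's steps. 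So the gap is localized and fixable, but as written your argument proves the proposition only for exchangeable $p_\theta$, which is not the claim.
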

\noindent The proof is in Appendix  \ref{app:proof_of_sim_to_real_gap}. 
Similar statements apply to simulated quantities beyond regret.  %
Informally, if one were training a sequence model $\hat{p}_{\TN{per-action}}$ from scratch, we expect $\ell(\widehat{p}_{\TN{per-action}}) - \ell(p_{\TN{per-action}}^*)$ goes to zero as the size of the dataset used to fit $\widehat{p}_{\TN{per-action}}$ grows, as long as the sequence model class chosen is sufficiently flexible.

As a corollary of the general reduction above, we attain a regret guarantee for our algorithm when deployed with an offline-optimized sequence model. 
\begin{corollary}
    \label{thm:psarRegret}
    Under Algorithm \ref{alg:Thompson_sampling_generic} applied with $\widehat{p}_{\TN{per-action}}$ (denoted $\psar(\widehat{p}_{\TN{per-action}})$),
    \begin{align*}
        \Delta \big( \psar(\widehat{p}_{\TN{per-action}} ); \, p_{\TN{per-action}}^* \big) 
        \leq 
        \underbrace{ \sqrt{ \frac{ |\Aeval| \log (|\Aeval|) }{2 T} } }_{\TN{Regret bound for TS}}
        ~ + ~~ \underbrace{ \sqrt{ \frac{ |\Aeval| }{2} \big\{ \ell(\widehat{p}_{\TN{per-action}}) - \ell(p_{\TN{per-action}}^*) \big\} } }_{\TN{Penalty for sub-optimal prediction}}.
    \end{align*}
\end{corollary}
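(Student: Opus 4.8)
The plan is to chain the sim-to-real reduction of Proposition \ref{prop:sim-to-real-gap} with the exact Thompson sampling interpretation of Corollary \ref{cor:exact_ts}, thereby reducing the whole claim to a single classical information-theoretic regret bound. First I would instantiate Proposition \ref{prop:sim-to-real-gap} with the policy $\pi = \psar(p_\theta)$ run against the very same model $p_\theta$ that induces the simulator. This immediately gives
\[
\Delta\big(\psar(p_\theta); p^*\big) \leq \Delta\big(\psar(p_\theta); p_\theta\big) + \sqrt{\tfrac{|\Aeval|}{2}\{\ell(p_\theta) - \ell(p^*)\}},
\]
whose second summand is exactly the ``penalty for sub-optimal prediction'' term in the statement. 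It therefore remains only to show that the regret incurred inside the simulator, $\Delta(\psar(p_\theta); p_\theta)$, is at most $\sqrt{|\Aeval|\log|\Aeval|/(2T)}$.

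The key conceptual step is to recognize that, within the simulator of Definition \ref{def:simulator}, the model $p_\theta$ \emph{is} the ground-truth outcome distribution. Consequently the excess offline loss of $p_\theta$ measured against itself is zero, so Proposition \ref{prop:posterior_sample} applies with a vanishing right-hand side: the tables $\hat\tau_t$ produced by Algorithm \ref{alg:posterior_sample} are exact posterior samples of $\tau$ under the simulator, and by Corollary \ref{cor:exact_ts} the selected actions $A_t = A^*(\hat\tau_t)$ obey the probability-matching identity $\PP(A_t = a \mid \HH_{t-1}) = \PP(A^*(\tau) = a \mid \HH_{t-1})$ exactly. In other words, $\psar(p_\theta)$ is an exact implementation of informed Thompson sampling in the simulated environment, so $\Delta(\psar(p_\theta); p_\theta)$ is precisely the Bayesian regret of Thompson sampling for an $|\Aeval|$-armed bandit with rewards in $[0,1]$ over horizon $T$.

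Bounding this Bayesian regret is then a direct appeal to the information-ratio analysis of Thompson sampling. I would bound the per-step conditional information ratio — the squared expected single-period regret divided by the conditional mutual information between $A^*(\tau)$ and the observed pair $(A_t, Y_t)$ — by $|\Aeval|/2$, the standard value for finitely many actions. Summing over $t$, invoking Cauchy--Schwarz, and using that the total information gained about $A^*(\tau)$ cannot exceed its entropy $H(A^*(\tau)) \leq \log|\Aeval|$ yields cumulative regret at most $\sqrt{(|\Aeval|/2)\,T\log|\Aeval|}$; dividing by $T$ (our measure is per-period) gives the claimed $\sqrt{|\Aeval|\log|\Aeval|/(2T)}$, and combining with the displayed reduction finishes the proof.

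I expect the main obstacle to lie in this last step, specifically in reconciling the information-ratio argument with our benchmark, which measures regret against the best action \emph{in hindsight}, $\max_a \tfrac1T\sum_t R(Y_t^{(a)})$, rather than against the best action in conditional expectation. The textbook information-ratio bound is naturally phrased for the latter (pseudo-)regret, so the work is to verify that the finite-population, exchangeable structure of $\tau$ lets us move between the two at no cost — treating each arm's empirical reward distribution as the latent object that Thompson sampling matches — and to confirm the information ratio is still at most $|\Aeval|/2$ when outcomes are drawn \emph{without} replacement from a finite table rather than i.i.d.\ with replacement. This is exactly the finite-population Thompson sampling accounting deferred to Appendix \ref{app:finite_pop_TS}.
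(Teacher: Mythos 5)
Your skeleton matches the paper's proof: instantiate Proposition \ref{prop:sim-to-real-gap} with $\pi = \psar(p_\theta)$, observe that inside the simulator $p_\theta$ is the ground truth so that Algorithm \ref{alg:Thompson} performs exact probability matching there, and then invoke the information-ratio machinery (entropy at most $\log|\Aeval|$, information ratio at most $|\Aeval|/2$) to show $\Delta(\psar(p_\theta); p_\theta) \leq \sqrt{|\Aeval|\log(|\Aeval|)/(2T)}$. Your justification of exact probability matching (zero excess loss plus Proposition \ref{prop:posterior_sample}) is a legitimate variant of the paper's by-construction argument, with one caveat: since the learned $p_\theta$ need not be exchangeable, the object being posterior-sampled in the simulator is the play-count-indexed table of Definition \ref{def:non-exchangeable-dgp}, not the time-indexed $\tau$; the paper introduces that reindexing precisely so this step is well-defined.

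The genuine gap is in how you propose to close the step you correctly flag as the main obstacle, namely that regret is measured against the best action in hindsight. You propose to bridge hindsight regret and pseudo-regret via the finite-population accounting of Appendix \ref{app:finite_pop_TS}, asserting it can be done ``at no cost.'' That route fails twice over. First, that appendix's comparison is not free: it bounds the gap between the two benchmarks by $2\sqrt{2\log(|\Aeval|)/T}$, an additive term of the same order as the main term, so following it literally would yield a bound strictly weaker than the one claimed in the corollary. Second, that accounting presumes a mixture (conditionally i.i.d.) structure, i.e.\ exchangeability, which the simulated environment induced by a learned, possibly non-exchangeable $p_\theta$ need not satisfy. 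The paper's resolution requires no bridging at all: as observed by \citet{bubeck2016multi}, the analysis of \citet{russo2016information} (their Propositions 1 and 5) relies only on the probability-matching property and bounded rewards, and therefore bounds expected regret \emph{directly against the best fixed action in hindsight}, in arbitrary, even nonstationary, environments. This single observation simultaneously dissolves both of your worries --- the benchmark mismatch and the without-replacement sampling --- and is what delivers the clean constant $\sqrt{|\Aeval|\log(|\Aeval|)/(2T)}$.
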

\textit{Proof sketch.} Corollary \ref{thm:psarRegret} follows from Proposition \ref{prop:sim-to-real-gap} after bounding \\
$\Delta \big( \psar(\widehat{p}_{\TN{per-action}} ); \, \widehat{p}_{\TN{per-action}} \big)$. Our proof interprets this as studying the regret of correct probability matching, 
$\mathbb{P}(A_t = a \mid \HH_{t-1}) = \mathbb{P}( A^*=a \mid \HH_{t-1})$ applied in the environment $\widehat{p}_{\TN{per-action}}$ defined in Definition \ref{def:simulator}. Although the simulated environment $\widehat{p}_{\TN{per-action}}$ could be complex and nonstationary, such a regret bound is already known in the literature. \citet{bubeck2015bandit} and \citet{bubeck2016multi} observed that the information theoretic analysis of \citet{russo2016information}, originally derived for more restricted class of models, applies essentially without modification to bound regret in this setting. See Appendix \ref{app:proof_of_TS_regret} for details. \hfill $\blacksquare$ \\

\citet{wen2021predictions} prove a regret bound that superficially looks similar to ours, however their result only applies to binary outcomes and requires a specific conditional KL divergence to be small, which does not appear to follow from training a model with low validation loss.
Moreover, we use a very different proof technique. Our Corollary \ref{thm:psarRegret} relies on Proposition \ref{prop:sim-to-real-gap}, which is a result that may be of independent interest.

\subsubsection{The price of informed exploration: prior sensitivity of Thompson sampling and the tightness of our regret bound.}
\label{sec:misspecifiedTS}

Under the simulator induced by $\widehat{p}_{\TN{per-action}}$, actions that the model believes are unlikely to be optimal given prior information $Z$ are therefore effectively screened out and receive little direct exploration. As formalized in Section \ref{sec:screening}, in the regimes that motivate informed exploration---many candidate actions relative to the horizon, such as recommendation systems---this screening effect is precisely what can enable strong performance. 

The natural question, then, is what this aggressive screening costs when the model's initial beliefs are wrong. An action that is screened out generates no data that could correct the model's mistaken beliefs about it, so such errors are never washed out by additional interactions, even over long horizons. The persistent impact of misspecified priors differs markedly from passive learning, where prior effects typically fade with more data. This intuition helps explain why the penalty term in \eqref{eqn:deploymentRegret} does not decay with $T$, and why per-period regret need not vanish under misspecification. Indeed, this phenomenon is closely related to the known sensitivity of Thompson sampling to prior misspecification. Classical frequentist regret bounds for Thompson sampling establish sublinear cumulative regret under fixed priors with full support \citep{agrawal2012analysis}, which may suggest that the cost of prior misspecification is limited because the influence of the prior is eventually overcome by data. However, this interpretation can be misleading: worst-case analyses show that the sensitivity of Thompson sampling to prior misspecification can scale quadratically with the horizon \citep{liu2016prior,simchowitz2021bayesian}.\footnote{For example, \citet{liu2016prior} show that cumulative regret in the bad-prior regime scales as $O(\sqrt{T/\rho})$, where $\rho$ is the prior mass assigned to the true model. Thus, if $\rho=1/T^3$, this dependence yields a $T^2$ cumulative regret penalty.} The apparent tension is resolved by noting that classical regret guarantees fix a problem instance and study performance as the horizon grows, whereas worst-case misspecification analyses allow the hardest problem instance to change with the horizon. These hardest instances can place progressively less prior mass on the truth, increasing the regret incurred before the data can overcome the prior.

To contextualize our result within this literature, and to verify that the non-vanishing penalty in Proposition~\ref{prop:sim-to-real-gap} reflects the problem rather than slack in our analysis, we specialize to the classical setting of Bayesian bandits with a misspecified prior. In this special case, autoregressive generation reduces to sampling from a posterior predictive distribution akin to \eqref{eq:posterior_formula}, and Corollary~\ref{thm:psarRegret} yields a regret bound for Thompson sampling under prior misspecification. This provides a short information-theoretic route to a prior-sensitivity guarantee. In contrast, \citet[Corollary~3.1]{simchowitz2021bayesian} study prior sensitivity for algorithms including a ``$k$-shot'' variant of Thompson sampling through an intricate analysis, showing that $\varepsilon$ misspecification in total variation distance can inflate per-period regret by a term growing linearly with the horizon $T$. In a few lines, our reduction shows that $\varepsilon$ misspecification in KL divergence inflates per-period regret by a term independent of $T$.

\begin{example}[Prior Sensitivity of Thompson sampling]\label{ex:prior-sensitivity}   
Consider a pair of Bayesian generative models that differ only in their prior distributions:
    \begin{enumerate}[leftmargin=*]
        \item {\bf True Bayesian model:} A sequence $(Y_{1}^{(a)}, \ldots, Y_{T}^{(a)}) \mid Z^{(a)} \sim p_{\TN{per-action}}^*( \cdotspace \mid Z^{(a)})$ is generated by first drawing a latent parameter $\eta^{(a)} \sim \nu_*( \cdotspace \mid Z^{(a)})$, then sampling i.i.d. outcomes $Y_{1}^{(a)}, \ldots, Y_{T}^{(a)} \mid \eta^{(a)} \overset{\text{i.i.d.}}{\sim} q(\cdotspace \mid \eta^{(a)})$.
        \item {\bf Model with an incorrect prior:} A sequence $(Y_{1}^{(a)}, \ldots, Y_{T}^{(a)}) \mid Z^{(a)} \sim \widehat{p}_{\TN{per-action}}( \cdotspace \mid Z^{(a)})$ follows the same generative process but with prior $\eta^{(a)} \sim \nu_{\theta}( \cdotspace \mid Z^{(a)})$, while maintaining an identical likelihood $q(\cdotspace \mid \eta^{(a)})$.
    \end{enumerate}
    Then,\begin{align}\label{eq:mispecified-prior-regret}
        \Delta \big( \psar(\widehat{p}_{\TN{per-action}} ); \, p_{\TN{per-action}}^* \big) \leq 
        \underbrace{ \sqrt{ \frac{ |\Aeval| \log (|\Aeval|) }{2 T} } }_{\TN{Regret bound for Thompson sampling}}
        + \underbrace{ \sqrt{ \frac{ |\Aeval| }{2} D_{\rm KL}(  \nu_{*}  \; \big\| \; \nu_{\theta}  )}  }_{\TN{Penalty for misspecified prior}}.
    \end{align}
    Above, \eqref{eq:mispecified-prior-regret} holds by Corollary \ref{thm:psarRegret} due to the following bound on $\ell(p_{\TN{per-action}}^*)-\ell(\widehat{p}_{\TN{per-action}})$:
    \begin{align*}
    \ell(p_{\TN{per-action}}^*)&-\ell(\widehat{p}_{\TN{per-action}}) = \E_{Z^{(a)}\sim P_Z}\left[ D_{\rm KL}\left( \mathbb{P}_{p_{\TN{per-action}}^*}\big(Y_{1:T}^{(a)} \mid Z^{(a)} \big) \; \Big\| \; \PP_{\widehat{p}_{\TN{per-action}}} \big(  Y_{1:T}^{(a)}  \mid Z^{(a)}\big)  \right) \right]\\
    &\leq  \E_{Z^{(a)}\sim P_Z}\left[D_{\rm KL}\left( \mathbb{P}_{p_{\TN{per-action}}^*} \big(\eta^{(a)}, Y_{1:T}^{(a)}  \mid Z^{(a)}\big) \; \Big\| \; \mathbb{P}_{\widehat{p}_{\TN{per-action}}}\big(\eta^{(a)},   Y_{1:T}^{(a)} \mid Z^{(a)} \big)  \right) \right] \\ 
    &= \E_{Z^{(a)}\sim P_Z}\left[D_{\rm KL}\left(  \nu_{*}(\cdotspace \mid Z^{(a)}) \; \Big\| \; \nu_{\theta}(\cdotspace \mid Z^{(a)})    \right)\right] 
    \end{align*}
    The inequality is the data-processing inequality and the final equality uses the chain rule together with the fact that both generative models share a common likelihood. To simplify further, assume prior information $Z^{(a)}$ takes on only one possible value.
\end{example}

The bound \eqref{eq:mispecified-prior-regret} allows the per-period
expected regret to remain inflated by a non-vanishing KL divergence term. An upper bound alone cannot show that this persistence is real rather than an artifact of the analysis; our lower bound does. In Appendix \ref{app:lower_bound_instance}, we provide a lower bound example
where even with minimal prior misspecification (differing only by a $\log(T)$
term), the per-period regret remains constant, aligning with known lower bounds
in related settings \citep{liu2016prior}. In this sense the
$T$-independence of the penalty term in Corollary \ref{thm:psarRegret} is
tight; we do not claim the bound is tight in all settings.

\section{Related work}
\label{section:related-work}
\vspace{-1.5mm}

\paragraph{Core ideas from statistical sequence prediction.} As discussed in Section \ref{sec:key_ideas} and Figure \ref{fig:missing-data-view}, our work builds on a classical vision in statistics where inference is performed solely through potentially observable quantities, rather than through latent parameters. This perspective traces back to de Finetti's work on exchangeable sequences \citep{DeFinetti33} and appears across multiple literatures \citep{Dawid84, Geisser70, BertiReRi98, FortiniLaRe00, FortiniPe14, HahnMaWa18, BertiDrPrRi21, BertiDrLePrRi22, FongHoWa23, osband2022neural, lee2023martingale, osband2024epistemic}.  Among these, our initial efforts on this direction were especially inspired by \citet{ osband2024epistemic}, who uses sequence prediction loss to \emph{evaluate} the quality of  epistemic uncertainty quantification. Several authors recently proposed neural network architectures for learning the joint distribution / posterior predictives~\citep{NguyenGr22, MullerHoArGrHu22, GarneloRoMaRaSaShTeReEs18}. While some of these papers include Bayesian bandit experiments, they do not use autoregressive generation to quantify epistemic uncertainty (see Section \ref{subsec:failure_of_other_generation} for further discussion). %

\vspace{-1.5mm}
\paragraph{Sequential decision-making with foundation models.} Recent work leveraging foundation models for sequential decision-making broadly falls into three categories \citep{yang2023foundation}. The first involves training sequence models on expert demonstrations. Some of these works try to directly imitate the typical expert, while others uses goal-conditioned action selection to improve upon demonstrated behavior \citep{janner2021offline,decisionTransformer,ding2019goal}; while effective in some settings, this approach can provably fail when reward signals are noisy \citep{brandfonbrener2022does,malenica2023causality,russo2026success}. 
The second category explores variants of in-context decision making---decision-making where a fixed sequence model adapts by conditioning on its interaction history with the current environment---where pretrained foundation models are carefully prompted without additional fine-tuning
\citep[see e.g.][]{huang2022language, yao2023react,song2026reward}; these include papers recent work that discuss the challenges of using LLMs for in-context
exploration in bandit problems \citep{nie2025evolve,krishnamurthy2024can,harris2025should}. Another closely related work by
\citet{tan2026pfn} leverages TabPFN \citep{HollmannMuPuKrKoHoScHu25} for exploration; our approach differs from theirs because they quantify uncertainty using a sub-sampled central limit theorem, rather than through our autoregressive generation approach. %
The third category develops generative models as learned environment simulators \citep[see e.g.][]{ha2018world, kaisermodel,hafner2021mastering}. Section \ref{section:regret-bounds} interprets our sequence models as simulators, but our work distinctively uses these simulator models for online autoregressive generation to quantify epistemic uncertainty and enable principled exploration.

\vspace{-1.5mm}
\paragraph{Sequence modeling in bandits.} 
A particularly relevant approach in this space is the Decision Pretrained Transformer (DPT) \cite{lee2023incontext}, which implements Thompson sampling by pretraining a sequence model to predict the reward-maximizing action. This elegant approach works by observing that optimal in-context 
prediction of the reward-maximizing action necessarily models its posterior distribution. Our work develops a broader framework for uncertainty quantification through autoregressive generation of missing outcomes, an idea that is applicable well beyond Thompson sampling. While the works share connections in implementing Thompson sampling, their core insights and implementations differ substantially, each with distinct advantages. DPT learns a ``generative model of behavior'' \cite[Sec.~3.2]{yang2023foundation}---predicting the best action based on past histories. %
In contrast, we learn a ``generative model of the world'' \cite[Sec.~3.3]{yang2023foundation}---predicting reward/outcome sequences for each action individually. We conjecture this is key to our unique theoretical guarantees linking \emph{offline} sequence modeling to \emph{online} performance.
 Given these complementary strengths, it would be interesting to explore hybrid approaches, where our models could serve as offline simulators to distill DPT-style policies, which reduce online computation.

Autoregressive predictive models are also employed by \citet{liu2023nonstationary}, but with very different goals and insights. Their work recognizes that Thompson sampling can over-explore in nonstationary environments, where constant environmental changes introduce fresh uncertainty that may not be worth resolving. They address this through a clever conditioning of imagined reward trajectories in their Bayesian model. While our work shares the use of generating potential rewards, we do so with an entirely different objective---replacing explicit Bayesian inference with a more scalable approach rather than moderating Thompson sampling's exploration. Later works that approximate predictive sampling with neural networks \citep{zhu2023non} rely on deep ensembles trained via online stochastic gradient descent, whereas our methods do not train/fine-tune the sequence model at decision-making time; our sequence models are fixed prior to the line decision-making phase and learn in-context. 

\vspace{-1.5mm}
\paragraph{Other approaches to neural network inference in bandit problems.} 
Beyond approaches that leverage sequence modeling, there are several approaches to implement Thompson sampling with neural networks. The first places a Bayesian prior on the neural network weights themselves, either through Bayesian linear regression on the final layer \citep{riquelme2018deep,snoek2015scalable} or full Bayesian neural networks \citep{zhang2020neural}. A second thread uses ensembles of neural networks to approximate posterior sampling \citep{osband2018randomized,lu2017ensemble,ensembleSampling}, including more efficient variants like Epinets \citep{osband2024epistemic,zhu2023scalable,osband2023approximate} and HyperModels \citep{dwaracherla2020hypermodels,hyperagent} that reduce computational overhead. Our work offers a different perspective entirely: rather than placing uncertainty on network parameters or using ensembles, we model uncertainty through autoregressive generation of missing outcomes, allowing us to leverage the full power of modern sequence models while maintaining principled uncertainty quantification. Moreover, we emphasize informed meta-problems where leveraging rich priors from unstructured information is crucial. This setting poses fundamental challenges for existing approaches. Ensemble methods typically rely on bootstrapping data to inject variance, but at the start of a decision problem, there is no data to bootstrap, leaving no basis for principled exploration. While randomized prior functions \cite{osband2018randomized} attempt to address this through random initialization, our approach is fundamentally different---extracting meaningful priors implicitly through training and scaling laws, and updating beliefs by through in-context learning 
rather than incremental gradient descent.

\vspace{-1.5mm}
\paragraph{Meta-bandit learning.} 
Prior work on meta-bandit %
learning has primarily focused on parametric settings, studying how to fit hyperparameters for Thompson sampling \citep{bastani2022meta, kveton2021meta, metalearningSong} or linear UCB algorithms \citep{cella2020meta}. While these approaches demonstrate convergence to oracle performance given sufficient meta-examples, they differ fundamentally from our work which tackles informed exploration with unstructured information and relies on autoregressive generation rather than parameter inference. An alternative approach learns exploration policies from scratch using reinforcement learning, potentially discovering novel exploration strategies but current approaches require more complex training procedures and larger sample sizes than our sequence prediction framework \citep{duan2016rl,wang2016learning,oh2020discovering,oh2025discovering}.

\section{Experiments}
\label{sec:experiments}
We evaluate our algorithm in a synthetic setting and in a semi-realistic news recommendation setting. 
We focus on settings with binary outcomes and rewards ($R(Y_t)=Y_t \in \{0, 1\}$) 
since the news dataset we build on 
has binary click/no-click outcomes. 
Both settings satisfy the structural assumptions introduced in Section~\ref{subsubsec:assumptions}, so that the version of TSAR implemented is Algorithm~\ref{alg:Thompson_sampling_generic} using Algorithm~\ref{alg:posterior_sample} as a subroutine.
Additionally, in both settings, the methodology for obtaining $\widehat p$ is as described in Section~\ref{sec:pretrainHistorical}. %
Additional details and results are in Appendix \ref{sec:exp_details}. 

In Section~\ref{sec:synthetic_experiments} we use a synthetic setting to validate our approach and see that with our pretraining strategy, our algorithm almost exactly matches the decision-making performance of a classical version of Thompson sampling with a correctly specified Bayesian model. %
Then, in Section~\ref{sec:MIND}, we demonstrate our method scales to %
a news recommendation setting where articles are associated with text headlines, and where $\widehat p$ incorporates a language model. %

\subsection{Learning an approximate sequence model $\widehat{p}_{\TN{per-action}}$ from historical data}
\label{sec:pretrainHistorical}

In this section, we present an approach to learning an approximate sequence model $\widehat{p}_{\TN{per-action}}$ using historical bandit task data under Assumptions~\ref{assump:articlesiid} (independence across actions) and \ref{assump:exchangeable} (exchangeability) as described in Section~\ref{subsubsec:assumptions}. We will use this approach throughout our experiments. %
\emph{Note that this is just one approach among many to form $\widehat{p}_{\TN{per-action}}$.} Some additional ways to obtain $\what p_{\text{per-action}}$ are discussed in Section~\ref{section:approximate}.

We assume the agent has access to a large amount of historical data $\Dtrain$ from past bandit tasks, with data also drawn from $p_{\TN{per-action}}^*$. 
This historical data is used to form $\what p_{\text{per-action}}$ in an offline learning phase, prior to decision-making. %
Note that this historical data is distinct from recent history within the current online bandit task, $\mathcal H_{t-1}$, as defined in \eqref{eqn:history}. %
Specifically, this historical data consists of
\begin{align}
    \Dtrain \triangleq \big\{ \big( Z^{(a)}, Y_{1:T}^{(a)}\big) \}_{a\in \mathcal \Ahist } %
    \qquad \TN{where} \qquad 
    \big( Z^{(a)}, Y_{1:T}^{(a)}\big) \iidsim p_{\TN{per-action}}^* %
\end{align}
and $a\in \Ahist$ indexes a large historical set of previous actions, their features $Z^{(a)}$, and their potential outcomes $Y_{1:T}^{(a)}$, with actions $a$ sampled across many previous bandit tasks. In practice, to augment the training data, and also to promote exchangeability (Assumption~\ref{assump:exchangeable}), we also train on bootstrapped samples $\tilde{Y}_{1:T}^{(a)}$ formed as follows:
\begin{align}
    \tilde{Y}_1^{(a)}, \dots, \tilde{Y}_T^{(a)} \mid Y_{1:T}^{(a)} \iidsim \frac{1}{T} \sum_{t=1}^T \delta_{Y_t^{(a)}}.
    \label{eqn:bootstrap}
\end{align}
Note that by assuming we have access to $Y_{1:T}^{(a)}$ for each action $a$, we are assuming we have access to ``complete'' historical bandit task data, in the sense that we observe all potential outcomes for any given action. If such ``complete'' historical bandit data is not available, in practice, during %
training, we can impute a full dataset by bootstrap resampling rewards across timesteps $t$, where the bootstrap resampling is with replacement and done independently for each action.

Using $\Dtrain$, we fit a sequence model $\widehat p_{\TN{per-action}}$ to approximate $p_{\TN{per-action}}^*$ by minimizing the empirical version of the %
log-loss of $\widehat{p}$ from \eqref{eq:pop_loss}: 
\begin{align}
    \label{eq:train_loss}
    \ell(\widehat p_{\TN{per-action}}; \Dtrain) \triangleq - \sum_{a \in \Ahist} \sum_{t=1}^T \log \widehat p_{\TN{per-action}} \big( Y_t^{(a)} \mid Z^{(a)}, Y_{1:t-1}^{(a)} \big).
\end{align}
\noindent Figure 
\ref{fig:masking}
visualizes the training objective, and Algorithm~\ref{alg:offline} describes how to optimize $\widehat p_{\TN{per-action}}$ via stochastic gradient descent-based approaches. %

\begin{figure}[h]
    \centering
    \includegraphics[width=0.9\linewidth]{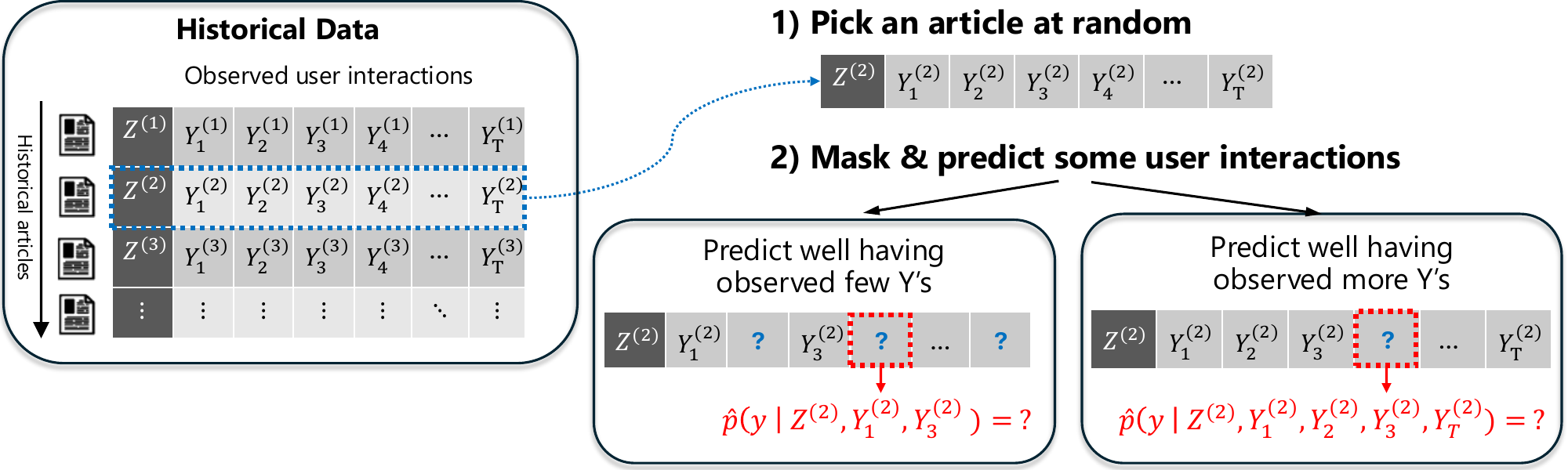}
    \caption{\bo{Training objective.} To attain low prediction loss, the sequence model must \textit{implicitly} perform Bayesian inference, i.e., the model must learn to balance information from a ``prior'' based on the text $Z$ and information as it observes more $Y$'s. Intimate connections between sequence prediction and Bayesian inference are studied in \citep{dawid1984present, barron1998minimum, garnelo2018neural,jha2022neural,NguyenGr22,MullerHoArGrHu22,lee2023martingale}. }
    \label{fig:masking}
\end{figure}

\begin{algorithm}[h]
  \caption{Offline training of a sequence model}
  \label{alg:offline}
  \begin{algorithmic}[1]
    \Require Training data $\Dtrain$, Sequence model class $\{\widehat p_\theta\}_{\theta\in\Theta}$
    \For{epoch in $1,\ldots,E$}
        \State Sample a mini-batch $\MC{A}^{\TN{mini-batch}} \subset \Ahist$
        \State Compute loss \eqref{eq:train_loss} with $\big\{ Z^{(a)}, Y_{1:T}^{(a)} \big\}_{a \in \MC{A}^{\TN{mini-batch}}}$ (or bootstrapped samples $\tilde{Y}_{1:T}^{(a)}$ from \eqref{eqn:bootstrap})
        \State Backpropagate and take a gradient descent step to update $\theta$ in $\widehat p_\theta$ %
     \EndFor
  \end{algorithmic}
\end{algorithm}
\subsubsection{Sequence model architectures}
\label{sec:seq_model_architectures}
In our experiments, we focus on settings where $Z^{(a)}$ consists of either numeric or text features (e.g., headlines in a news recommendation setting). The choice of architecture for the sequence model $\widehat p_{\TN{per-action}}$ is flexible. Our theoretical results in Sections \ref{sec:psar-ts} and \ref{section:regret-bounds} establish that any architecture achieving strong predictive performance is suitable. Below, we describe two architectural approaches that we use, which offer different tradeoffs; this is by no means an exhaustive list of potential implementations for $\what p_{\text{per-action}}$ %
(for more examples, see Section~\ref{section:approximate}).  %

\paragraph{Structured Bayesian models.}  When strong domain knowledge exists about the outcome distribution, we can incorporate it %
into the model architecture. For binary outcomes ($Y_{t}^{(a)}\in \{0,1\}$), we can use a Beta-Bernoulli model where a neural network maps $Z^{(a)}$ (e.g. text features) to the Beta prior parameters $\widehat \alpha(Z^{(a)})$ and $\widehat \beta(Z^{(a)})$: %
\begin{equation}\label{eq:ppd}
\widehat p_{\TN{per-action}} \big( Y_{t+1}^{(a)} = 1 \mid Z^{(a)}, Y_{1:t}^{(a)} \big) = \frac{\widehat \alpha(Z^{(a)}) + \sum_{j=1}^{t} Y_j^{(a)}}{\widehat \alpha(Z^{(a)}) + \widehat \beta(Z^{(a)}) + t}
\end{equation}
Here, $\widehat\alpha(Z^{(a)})$ and $\widehat \beta(Z^{(a)})$ are learned functions that transform text features into Beta prior parameters and \eqref{eq:ppd} is the corresponding ``posterior predictive'' distribution. This architecture explicitly encodes conjugate Bayesian updating while allowing the prior to be fit from data in the style of empirical Bayes methods \citep{casella1985introduction}. For a more thorough discussion on the connection to Bayesian sequence modeling, see Appendix \ref{subsec:bayes-seq-models}.

\paragraph{Simple neural network architecture.} 
One approach %
uses %
separate 
encoders for prior information $Z^{(a)}$ 
and outcome sequences $Y_{1:t}^{(a)}$, whose outputs are combined by a final multi-layered perceptron (MLP) to 
form predictions. The encoder for $Z^{(a)}$ depends on the input type: a language model (e.g. DistilBERT, \citep{sanh2019distilbert}) 
for text, or an MLP for numeric inputs. This encoder and the final MLP are fine-tuned end-to-end. 
In contrast, the encoder for outcomes is a fixed function extracting sufficient statistics, 
which is appropriate because the outcome distribution has known sufficient statistics. 
Figure~\ref{fig:babysequence} visualizes this architecture for the case where 
$Z^{(a)}$ is text (news experiments, Section~\ref{sec:MIND}); the synthetic 
experiments (Section~\ref{sec:synthetic_experiments}) in which $Z^{(a)}$ is a vector use the same architecture 
except with an MLP as the $Z^{(a)}$ encoder.

\begin{figure}[t!]
    \centering
    \includegraphics[width=0.55\linewidth]{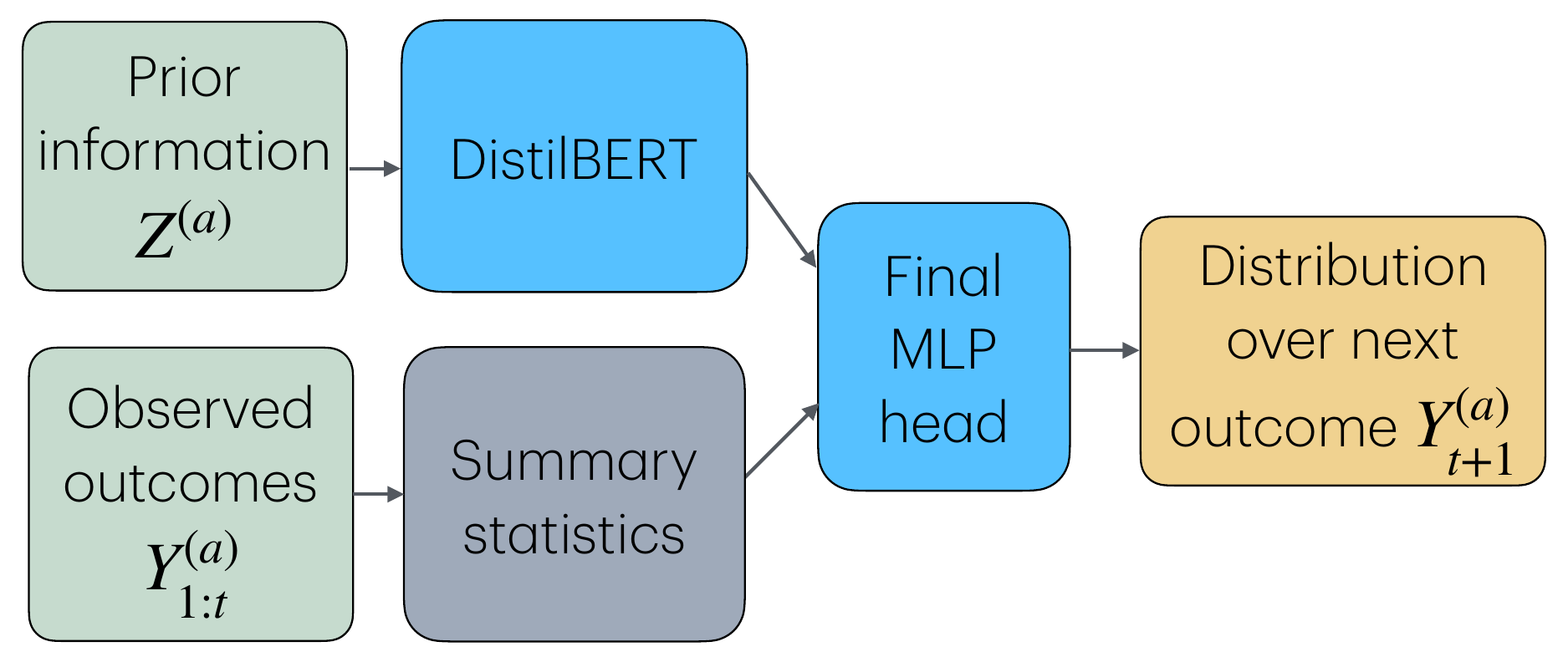}
    \caption{Simple neural network sequence model architecture for when $Z^{(a)}$ is text. %
    Each box, aside from ``Outcome encoder'', represents a neural network with trainable parameters; the outcome encoder is a fixed function that extracts sufficient statistics of the history. %
    Arrows indicate data flow between inputs/outputs and components of the neural network model. %
    The components depicted in blue boxes (models) are end-to-end fine-tuned in the offline learning phase. \vspace{-3mm}
    }
    \label{fig:babysequence}
\end{figure}

\subsection{Synthetic setting: Mixture Beta-Bernoulli}
\label{sec:synthetic_experiments}
Despite never defining a Bayesian model with explicit latent parameters, we find that our Thompson sampling via autoregressive generation (TSAR) algorithm matches the performance of an oracle Thompson sampling method that has %
access to the true prior and likelihood. We show this in a synthetic environment where the true data-generating process is known. The true data-generating process uses a Bernoulli likelihood for outcomes, with a mixture of Beta distributions as the prior, where the Beta parameters are determined by $Z^{(a)} \in \real^2$. See Appendix~\ref{app:synthetic_appendix} for more details. 

\subsubsection{Decision-making algorithms}

\paragraph{\textsc{TSAR} variants.} 
We implement \textsc{TSAR} %
with two different types of sequence models as described in Section~\ref{sec:seq_model_architectures}:
\begin{enumerate*}[label=(\roman*)] 
    \item \textsc{Flexible NN} is an MLP that takes  $Z^{(a)}$ and a summary of past outcomes for the action as input to predict probabilities, and
    \item \textsc{Beta-Bernoulli NN} is a model matching the Beta-Bernoulli posterior predictive form, where a neural network maps  $Z^{(a)}$ to the two Beta parameters. %
\end{enumerate*}
Both models are trained offline using gradient descent on loss~\eqref{eq:train_loss} and later used to power online decision-making under new bandit tasks (details in Appendix~\ref{app:synthetic_appendix}).

Our primary comparison is between \textsc{TSAR Flexible NN}, which uses the trained \textsc{Flexible NN} for autoregressive generation, and \textsc{TS Oracle}, which performs Thompson sampling with exact Bayesian inference using the true generative model.

\paragraph{Truncating generation lengths.} 
For large population sizes $T$, generating all $T$ missing outcomes during online decision-making (Algorithm~\ref{alg:posterior_sample}) is computationally expensive. We therefore modify PSAR to generate only $m=500$ missing outcomes per action and use their average to estimate the mean reward $\mu^{(a)}(\tau)$ from \eqref{eq:mean_reward_tau}.
Our experiments show this approximation remains effective when $m$ is sufficiently large (see  Section~\ref{sec:truncationExp}). %

\paragraph{Baselines.} 
We consider two additional Thompson Sampling (TS) based baselines, none of which match the performance of the oracle Bayesian procedure.
\begin{enumerate*}[label=(\roman*)] 
    \item \textsc{TS Beta-Bernoulli (Uniform Prior)} performs exact Thompson sampling with a misspecified uniform prior that ignores $Z^{(a)}$;
    \item %
    \textsc{TS Neural Linear} implements Thompson sampling with a Gaussian linear regression, where the regression features consist of the output from a neural network trained to predict outcomes given $Z^{(a)}$ as input, adapting ideas from neural contextual bandits  \citep{riquelme2018deep}.
\end{enumerate*}
Finally, we also consider the \textsc{UCB} algorithm from  Section 6 of \citet{Abbasi-YadkoriPaSz11}. 
Full details on baselines are in Appendix~\ref{sec:bandit_baselines}. 

\begin{figure}[t]
    \vspace{-4mm}
\vspace{-8pt}
    \centering
  \begin{minipage}[b]{0.45\textwidth}
    \begin{minipage}[b]{\linewidth}
      \centering
      \includegraphics[trim=5pt 0pt 0pt 0pt, clip,width=\linewidth]{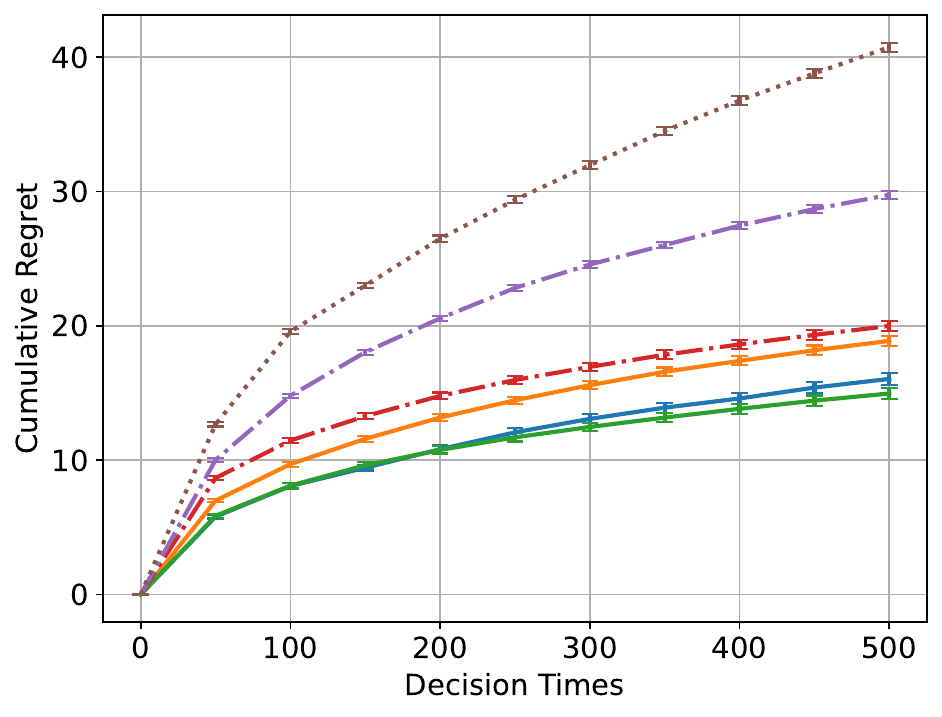}
    \end{minipage}

  \end{minipage}
      \begin{minipage}[b]{0.53\textwidth}
    \centering
    \includegraphics[width=\linewidth]{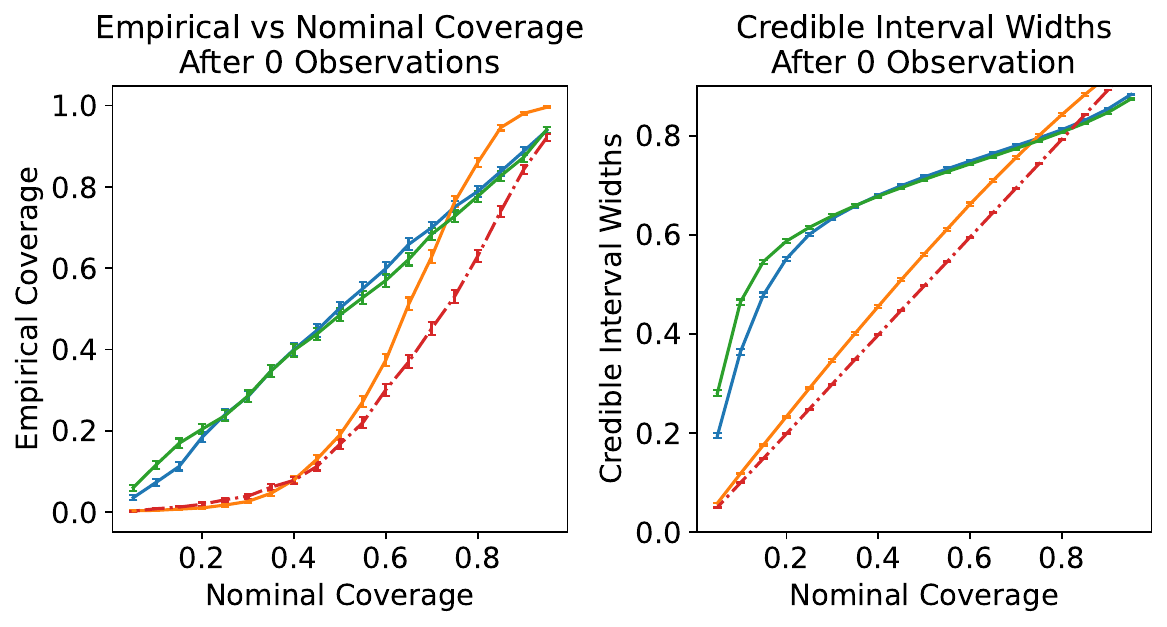}
    \begin{minipage}[b]{\linewidth}
      \centering
      \includegraphics[width=\linewidth]{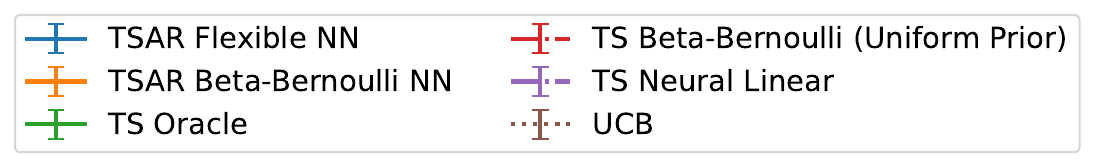}

    \end{minipage}
    
  \end{minipage}
    \caption{\bo{Mixture Beta-Bernoulli setting.}
    Left: cumulative regret averaged over $500$ repetitions ($|\Aeval| = 10$). 
    Right: evaluating uncertainty quantification (coverage and interval width) averaged over $1000$ actions not seen in training.
    Error bars are $\pm 1$ s.e. \vspace{-3mm}
    }
    \label{fig:synthetic_evals}
\end{figure}

\subsubsection{Simulation Results}

\paragraph{TSAR with a well-fitted sequence model has regret close to the TS Oracle: Figure \ref{fig:synthetic_evals} (Left).}
\textsc{TS Oracle} achieves the lowest regret, with \textsc{TSAR Flexible NN} performing nearly as well. Methods using misspecified unimodal priors---\textsc{TSAR Beta-Bernoulli NN} and \textsc{TS Beta-Bernoulli (Uniform Prior)}---achieve intermediate performance, significantly better than both \textsc{UCB} and Thompson sampling with a misspecified Bayesian model. %

\paragraph{Our posterior sampling via autogressive generation (PSAR, Algorithm~\ref{alg:posterior_sample}) procedure with a well-fitted sequence model accurately quantifies epistemic uncertainty: Figure \ref{fig:synthetic_evals} (Right).} 
For each of 1000 unseen articles, we generate 250 posterior samples $\mu^{(a)}(\widehat{\tau}_1)$ of the mean action reward $\mu^{(a)}(\tau)$ by autoregressively sampling outcomes conditioned $Z^{(a)}$. Using quantiles of these samples, %
 we construct credible intervals and measure coverage, i.e., how often the true $\mu^{(a)}(\tau)$ falls within these intervals. The \textsc{Flexible NN} nearly matches  \textsc{TS Oracle}'s theoretically ideal performance in both coverage and interval width. Methods using misspecified Beta priors show substantially worse coverage, even with fitted parameters. While these results reflect inference conditioned only on $Z^{(a)}$ without observations of $Y$, similar patterns hold when conditioning on %
 additional observations
 (see Appendix~\ref{app:synthetic_appendix}).

\subsection{News recommendation setting}
\label{sec:MIND}

Next, we build a semi-realistic news recommendation task using the MIcrosoft News Dataset (MIND) \citep{wu2020mind}. Our Thompson sampling via autoregressive generation algorithm %
scales effectively to scenarios where performance depends on unstructured information like article headlines. The method maintains accurate uncertainty quantification and low-regret exploration even when incorporating and finetuning a language model.

For each article $a$, $Z^{(a)}$ represents either headline text (in our best methods) or categorical information like ``politics'' or ``sports'' (in simpler variants). The dataset contains approximately 11,000 articles after preprocessing, with binary click/no-click rewards. 

\subsubsection{Decision-making algorithms}
\paragraph{TSAR variants.} 
We consider TSAR with two different types of sequence models: %
(i) \textsc{Flexible NN (Text)} and (ii) \textsc{Beta-Bernoulli NN (Text)} are analogous to those from Section \ref{sec:synthetic_experiments}, but we modify them to use article text $Z^{(a)}$ embedded using DistilBERT \citep{sanh2019distilbert}, which is fine-tuned end-to-end during the offline stage. 
(iii) \textsc{Flexible NN (Category)} treats $Z^{(a)}$ as a one-hot encoding of category information (e.g. ``politics" or ``sports") instead of headline text. As in Section~\ref{sec:synthetic_experiments}, we generate $m=500$ missing outcomes per action and use their average to estimate $\mu^{(a)}(\tau)$, instead of generating all $T$ missing outcomes to reduce computation costs.

\paragraph{Baselines.} 
To our knowledge, this kind of informed bandit problem is new and lacks established baselines. %
We therefore adapt several popular approaches from related areas. %
\textsc{TS Neural Linear} implements Thompson sampling with a Gaussian linear regression, where the regression features consist of the output from a neural network trained to predict outcomes given DistilBERT article embeddings as input, adapting ideas from neural contextual bandits  \citep{riquelme2018deep}. %
We also implement %
an ensemble of 50 neural network models following \cite{lakshminarayanan2017simple}  and  \cite{osband2016deep} as baselines for uncertainty quantification at the $t=0$ timestep. For comparison to methods that ignore prior information $Z^{(a)}$, we include both the UCB algorithm in Section 6 of \citet{Abbasi-YadkoriPaSz11} and \textsc{TS Beta-Bernoulli (Uniform Prior)}, which performs exact Thompson sampling with a misspecified uniform prior.

\subsubsection{Simulation Results}
\label{sec:newsSimulationResults}
\begin{figure}[t]
    \vspace{-4mm}
    \centering
  \begin{minipage}[b]{0.45\textwidth}
    \begin{minipage}[b]{\linewidth}
      \centering
      \includegraphics[trim=5pt 0pt 0pt 0pt, clip,width=\linewidth]{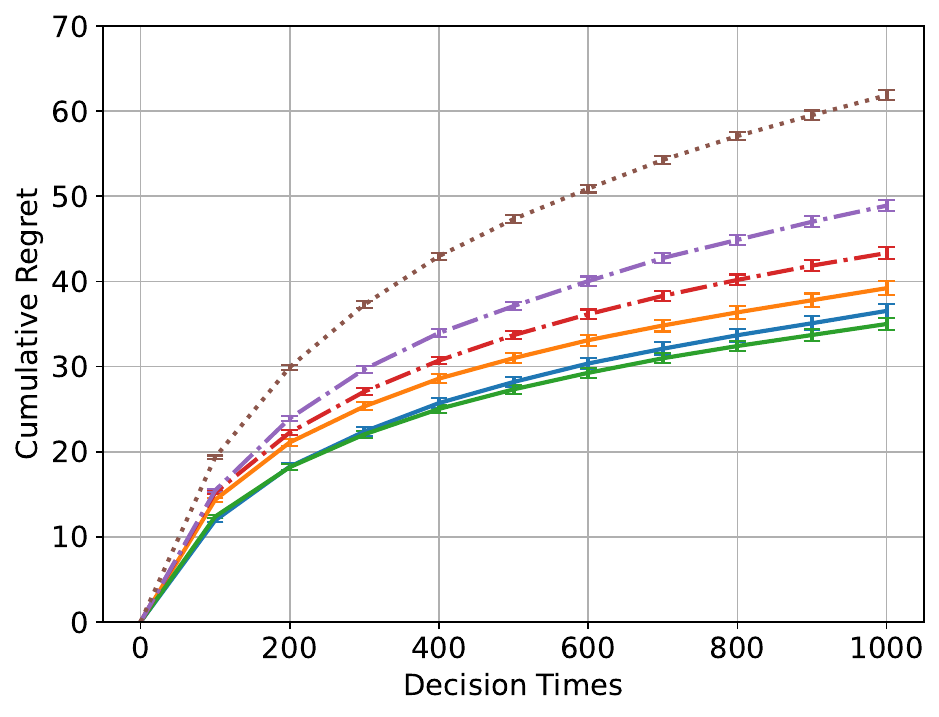}
    \end{minipage}

  \end{minipage}
      \begin{minipage}[b]{0.53\textwidth}
    \centering
    \includegraphics[width=\linewidth]{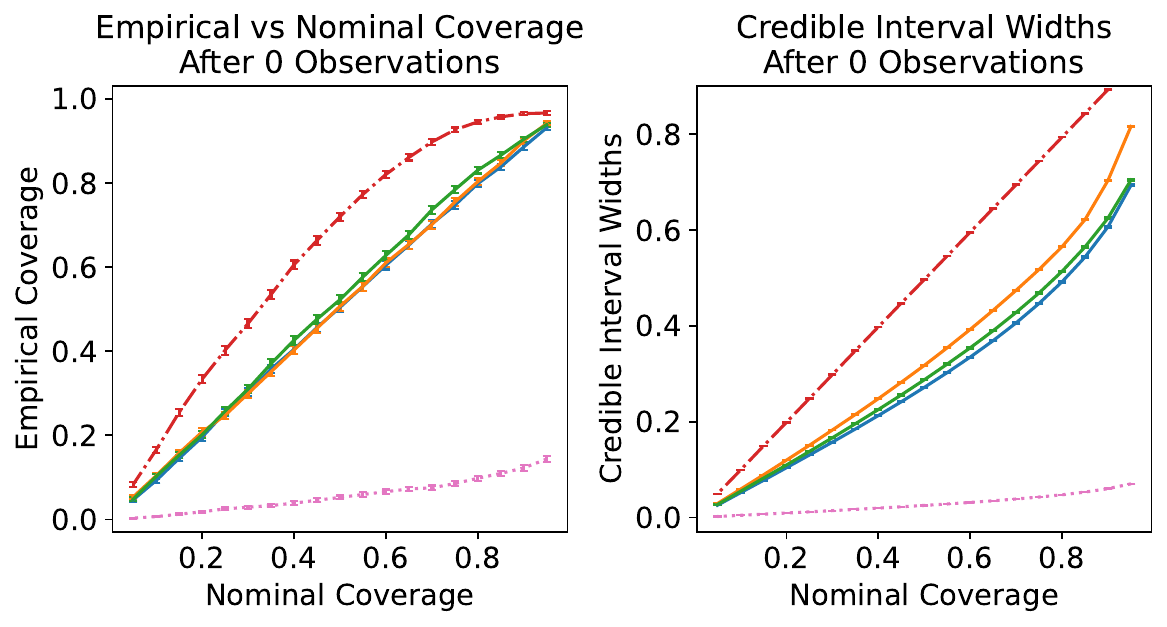}
    \begin{minipage}[b]{\linewidth}
      \centering
      \includegraphics[width=\linewidth]{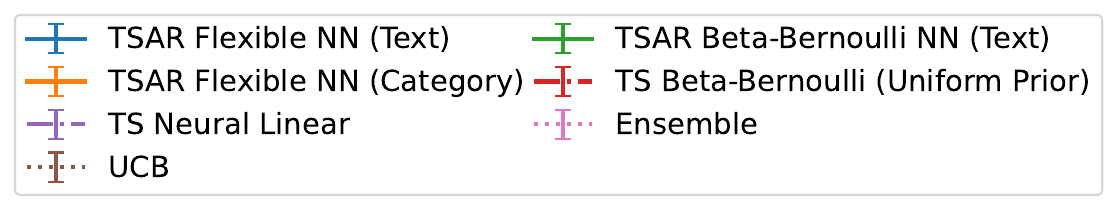}
      
    \end{minipage}
    
  \end{minipage}
    \caption{\bo{Evaluation 
    on news data.} 
    Left: cumulative regret with $|\Aeval| = 10$, averaged over $500$ repetitions.
    Right: evaluating uncertainty quantification (coverage and interval width), averaged over 2280 actions not seen in training.
    Error bars are $\pm 1$ s.e. \vspace{-3mm}
    }
    \label{fig:mind_evals}
    \vspace{-3pt}
\end{figure}

\paragraph{%
PSAR provides accurate uncertainty quantification with language models %
in the loop: Figure \ref{fig:mind_evals} (right).} 
A longstanding challenge in ML is implementing uncertainty quantification with neural networks, especially with large foundation models in the loop \citep{riquelme2018deep,gawlikowski2023survey}. Using the same methodology as Section \ref{sec:synthetic_experiments}, we evaluate uncertainty quantification by constructing intervals for the mean action reward %
across 2,280 articles held out from the offline training data. Notably, all %
\textsc{PSAR}
variants achieve the nominal coverage level out of the box \textit{without any post-hoc adjustments to improve calibration}. Also note that the text-based models produce slightly narrower intervals compared to ones that do not use text information. 

Many of the baseline approaches have extremely poor coverage, including deep ensembling, which is often considered a state-of-the-art approach for uncertainty quantification with neural networks. We believe ensembling performs poorly in our informed exploration setting where, loosely speaking, the relevant uncertainty is the mean action reward for \emph{this} specific action, rather than the mean action reward for actions with this action's prior information $Z^{(a)}$. Ensembling focuses on uncertainty for the latter quantity. Additional details and analysis are provided in Appendix~\ref{app:MIND_appendix}.

\paragraph{TSAR using %
headline text has the best regret, but gains are modest: Figure \ref{fig:mind_evals} (left).} 
\textsc{TSAR} methods that incorporate headline text achieve the lowest regret among all approaches. 
However, comparing our best \textsc{TSAR} variant against \textsc{TS Beta-Bernoulli (Uniform Prior)} demonstrates modest gains from leveraging offline data and article text. While these results confirm such gains are achievable, this dataset likely understates the potential value of unstructured prior information; in some applications, recent work suggests such information alone can drive effective decision-making \citep{yuan2023go}. 
\textsc{TS Neural Linear} perform poorly in terms of regret, showing that simply adding neural-network-based predictions to standard Thompson sampling, for instance through a Gaussian linear model on top of trained neural network outputs, is not enough to ensure good bandit performance.

\subsection{Examining the impact of truncating generation length}
\label{sec:truncationExp}

\bo{Empirically comparing full imputation vs. truncated generation (Figure \ref{fig:full_vs_truncated_imputation}).} 
We empirically compare our proposed method (Algorithm \ref{alg:Thompson_sampling_generic}), i.e., ``full imputation'', with the computationally cheaper version of Thompson sampling with autoregressive generation that limits %
generation to $m=500$ future timesteps, on the news setting (Section~\ref{sec:MIND}). We follow the procedure described in Appendix \ref{app:MIND_appendix} in forming the regret plots: we run $500$ repetitions of each bandit algorithm and in each repetition we draw a new set of $10$ actions/articles from the validation set to represent a ``new task''. We find that the two versions have similar performance. %

\begin{figure}[t]
\centering
\begin{subfigure}[b]{0.45\textwidth}
  \centering
  \includegraphics[width=1\linewidth]{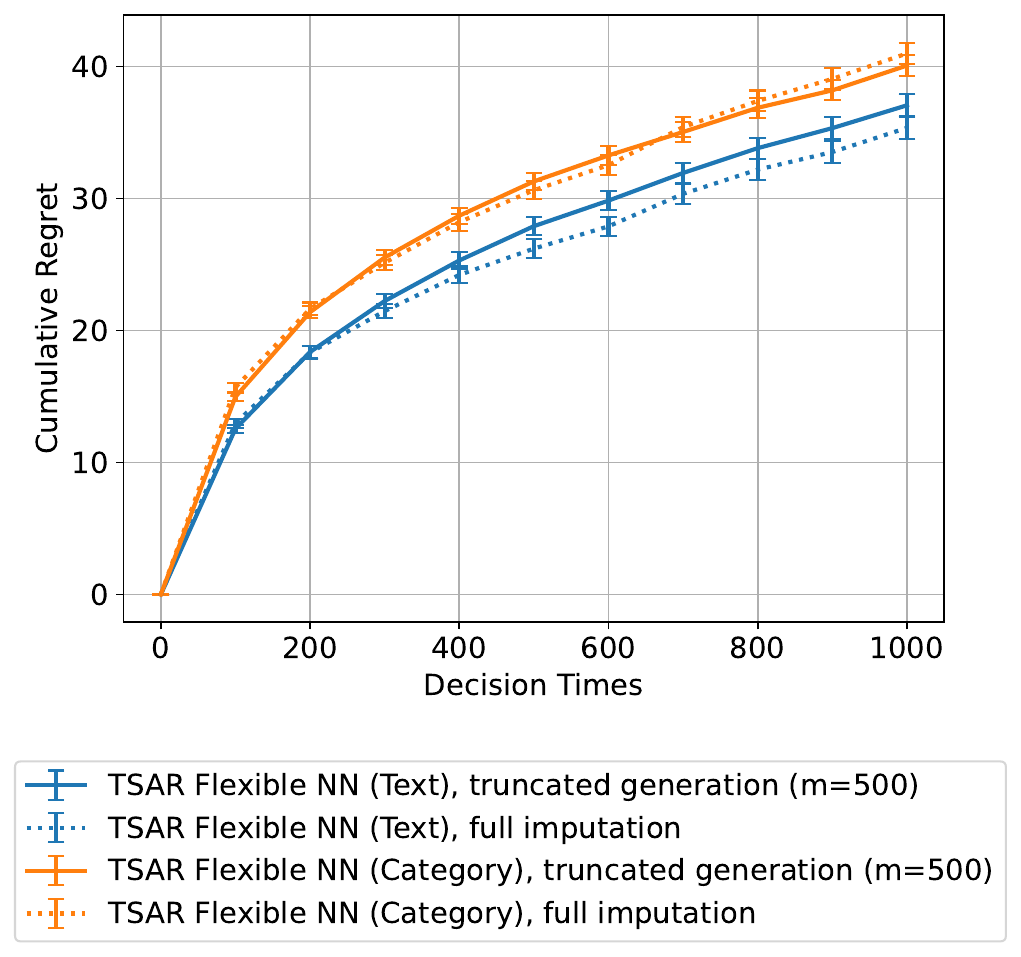}
  
  \caption{\bo{Full imputation vs. truncated generation of future rewards.} Error bars are $\pm 1$ s.e. averaged over $500$ runs.}
  \label{fig:full_vs_truncated_imputation}
\end{subfigure}
\qquad
\begin{subfigure}[b]{0.45\textwidth}
  \centering
  \includegraphics[width=1\textwidth]{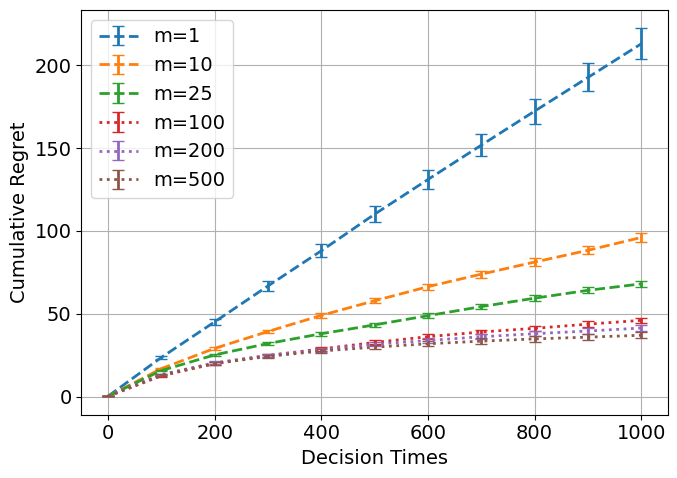}
  \vspace{3mm}
  \caption{\bo{Varying truncated generation length ($|\Aeval|=10$).} Error bars are $\pm 1$ s.e. averaged over $500$ runs.}
  \label{fig:mind_m}
\end{subfigure}
\caption{Generation length truncation experiments.}
\end{figure}

\paragraph{Varying the truncation generation length (Figure \ref{fig:mind_m}).} 
Throughout all our previous experiments (aside from Figure \ref{fig:full_vs_truncated_imputation}), we set $m=500$. In Figure \ref{fig:mind_m}, we examine the impact of varying $m$ on the regret of the algorithm with the \textsc{Flexible NN (text)} sequence model in the news recommendation setting. As before, we follow the procedure described in Appendix \ref{app:MIND_appendix} in forming the regret plots. %
We find that increasing $m$ reduces the regret of the algorithm; however, when $m$ is sufficiently large, the benefit of increasing $m$ further is negligible.

\subsection{Important subtleties in autoregressive generation}\label{subsec:failure_of_other_generation}

Our implementation of autoregressive generation in Section~\ref{sec:pretrainHistorical} makes three critical choices:
\begin{enumerate}[leftmargin=0.2in]
\item Potential outcomes are computed autoregressively, meaning we condition each newly generated outcome on those previously generated. This captures persistent patterns in performance, where actions that have demonstrated strong results are more likely to continue performing well.
\item Autoregressive generation is performed through random sampling rather than selecting the most likely next outcome. This preserves the model's implicit understanding of uncertainty, whereas always choosing the most likely outcome does not. %
\item At each decision-time, instead of sampling a single reward per action, we consider an aggregate statistic: the average of the imputed rewards over time. This approach better captures uncertainty about an action's overall performance, which is most relevant for exploration.
\end{enumerate}

\begin{figure}[t]
    \vspace{-2mm}
    \centering
    \begin{minipage}[b]{0.4\linewidth}
      \centering
      \includegraphics[width=\linewidth]{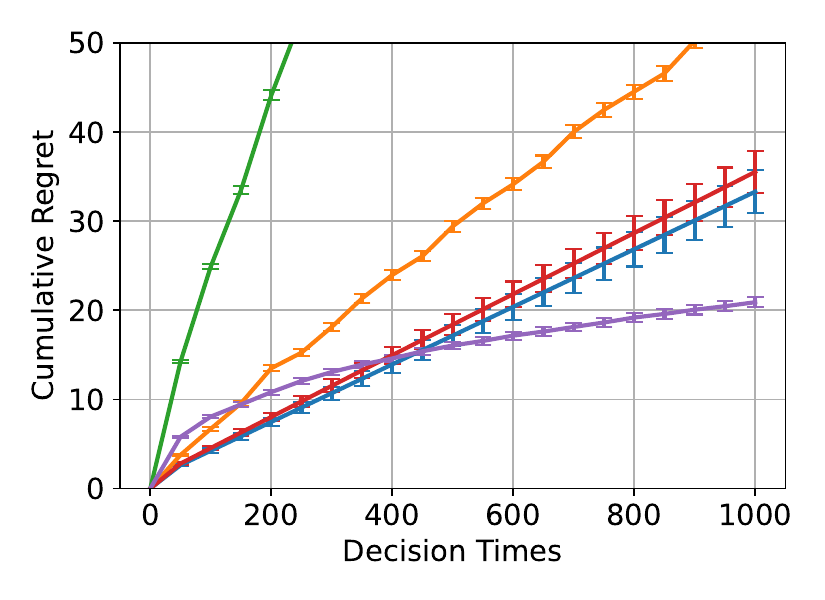}
    \end{minipage}
    \begin{minipage}[b]{0.35\linewidth}
      \centering
        \includegraphics[width=\linewidth]{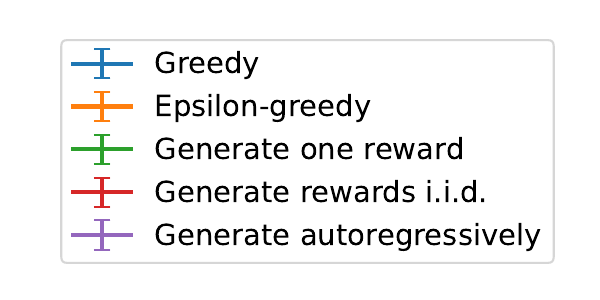}
        \vspace{5mm}
    \end{minipage}
    \vspace{-2mm}
    \caption{\bo{Comparing to alternative sampling approaches.} All approaches use the same pretrained sequence model for decision-making. \textsc{Greedy} selects the action with the greatest predicted mean reward; \textsc{Epsilon-greedy} selects the action with the greatest predicted mean reward 90\% of the time and a random action 10\% of the time;
    \textsc{Generate one reward} attempts to mimic Thompson sampling by selecting the action with the highest sampled one next reward; \textsc{Generate rewards i.i.d.} instead generates many i.i.d. samples from each reward and selects the action with the highest predicted mean. Our approach, \textsc{Generate rewards autoregressively}, uses the sequence model to generate rewards autoregressively and selects the action with the highest empirical mean.
    }
    \label{fig:altSampling}
\end{figure}

These implementation choices are critical to the algorithm's correctness. Figure \ref{fig:altSampling} demonstrates that two alternative approaches, including one recently proposed in several published papers (to be discussed shortly), lead to poor decision-making by either over- or under-exploring. Each tested variant modifies the steps in our proposed algorithm where  
we average autoregressively sampled and observed rewards to form a ``posterior sample" of the mean reward, while keeping the remainder of the algorithm unchanged. See Appendix~\ref{sec:altsampling_details} for additional details on the experimental setup. Our analysis reveals two key findings:    
\begin{enumerate}[leftmargin=*]
\item \textbf{The importance of averaging out idiosyncratic noise.}  The method labeled \textsc{Generate one reward} implements a proposal from recent works~\citep{NguyenGr22, MullerHoArGrHu22, GarneloRoMaRaSaShTeReEs18}. These works propose generating only the outcome/reward from \emph{the next} action draw, and taking the action with the highest sample reward. While this captures overall \textit{predictive uncertainty} in the next outcome---including both uncertainty in an action's overall quality (epistemic uncertainty) and uncertainty in the idiosyncratic noise of a single outcome (aleatoric uncertainty)---it fails in settings with substantial idiosyncratic uncertainty.\footnote{In domains like recommender systems, there is substantial ``noise" in individual user interactions. The aforementioned papers apply their method to benchmark problems with almost no reward noise.} Figure \ref{fig:altSampling} shows this approach incurs linear regret due to endless randomized exploration.
\item \textbf{The importance of autoregressive generation.}   The method labeled \textsc{Generate rewards i.i.d.} averages rewards across many independent (non-autoregressive) draws of the next outcome and takes the action with the highest average sampled reward. With sufficient generations, this average converges to the mean of the predictive distribution, effectively selecting the action currently believed to be best without purposeful exploration. Such ``greedy" algorithms perform poorly in the long run because they fail to gather the data needed to resolve uncertainty about action quality.
\end{enumerate}

\section{Discussion}

By viewing uncertainty through the lens of missing data rather than latent parameters, our work establishes a principled bridge between modern sequence models and exploration algorithms, demonstrating through algorithms, theory, and experiments that success in offline sequence prediction translates to effective online decision-making and uncertainty quantification. We believe our missing data view of uncertainty extends well beyond the setting studied in this work, including to sequences with predictable time-dependence. 
We are optimistic that operationalizing this viewpoint beyond the setting studied in this paper will bear fruit. 

Several important challenges remain open. While enormous investments in both software and hardware are going toward speeding up autoregressive generation (e.g. for language model inference), we still expect that repeatedly generating %
posterior samples online can be %
computationally expensive. Future work could address this computational overhead by distilling exploration policies into neural networks through imitation learning~\citep{NamkoongDaBa20}. Second, our experiments assume access to sufficient historical data for offline learning. In practice, available data may be limited and partially confounded by previous action selection policies. While many deployed systems face and overcome similar challenges, carefully evaluating our methods' robustness to these issues at scale remains important. %

\appendix
\addtocontents{toc}{\protect\setcounter{tocdepth}{0}}

\appendix
\paragraph{Overview of Appendices} 
\begin{itemize}[leftmargin=0.5cm]
    \item Appendix \ref{app:proofs}: Theoretical results
    \item Appendix \ref{sec:exp_details}: Experimental details
    \item Appendix \ref{subsec:bayes-seq-models}: Bayesian sequence models
    \item Appendix \ref{app:finite_pop_TS}: Finite vs infinite population formulations and Thompson Sampling variants
\end{itemize}

\section{Theoretical results}
\label{app:proofs}

\subsection{ Proof of Proposition~\ref{prop:regret_TS_exact}: Regret of exact Thompson Sampling by sequential imputation}
\label{sec:app_exact_ts}
\begin{proof}
Proposition~\ref{prop:regret_TS_exact} follows from \citet{bubeck2015bandit} and \citet{russo2016information}. 
For convenience, write $\mathbb{E}_t[\,\cdot\,] \triangleq \mathbb{E}[\,\cdot \mid \mathcal H_{t-1}]$ for the expectation given history $\mathcal H_{t-1}$ and $I_t\!\left(A^*;\,(A_t, Y_t)\right) \;\triangleq\; I\!\left(A^*;\,(A_t, Y_t) \mid \mathcal H_{t-1}\right)$
for the mutual information between the optimal action $A^* = A^*(\tau)$ and the observed action-outcome
pair $(A_t, Y_t)$, conditional on the history $\mathcal H_{t-1}$. Finally, we define the information ratio at timestep $t$ as $\Gamma_t \;\triangleq\;
    \frac{\left(\mathbb{E}_t \big[R \big(Y_t^{(A^*)} \big) - R(Y_t)\big]\right)^2}
    {I_t\!\left(A^*;\,(A_t,\,Y_t)\right)}$.
\noindent By Proposition~3 in \citet{russo2016information}, %
$\frac{\bigl(\mathbb{E}_t \big[R\big(Y_t^{(A^*)}\big)
          - R(Y_t)\big] \bigr)^2}
         {I_t\!\left(A^*;\,(A_t,\,Y_t)\right)}
    \;\leq\; \frac{|\mathcal{A}|}{2}$.
Rearranging,
\begin{align}
    \label{eq:prop1_proof}
\mathbb{E}_t \left[R\big(Y_t^{(A^*)}\big) - R(Y_t)\right]
    \;\leq\;
    \sqrt{\frac{|\mathcal{A}|\;I_t\!\left(A^*;\,(A_t,\,Y_t)\right)}{2}}.
\end{align}
\begin{lemma}[\citet{bubeck2015bandit} Lemma~13]
\label{lem:bubeck_info_regret_bound}
For any algorithm in any Bayesian bandit problem with a finite action set and arbitrary prior over reward distributions over time, $\mathbb{E}\!\left[\,\sum_{t=1}^{T}
    \sqrt{I_t\!\left(A^*;\,(A_t,\,Y_t)\right)}\,\right]
    \;\leq\; \sqrt{T \log |\mathcal{A}|}$.
\end{lemma}

By summing \eqref{eq:prop1_proof} over timesteps $t$ and taking expectations to obtain regret $\Delta(\pi_{TSAR}, \MC{B})$, and then applying Lemma~\ref{lem:bubeck_info_regret_bound} at each timestep to obtain the desired regret bound
\begin{align*}
    \Delta(\pi_{TS}, \MC{B}) =
    \frac{1}{T}\sum_{t=1}^T \mathbb{E}\!\left[\mathbb{E}_t\!\left[
        R\!\left(Y_t^{(A^*)}\right) - R\!\left(Y_t^{(A_t)}\right)
    \right]\right]
    &\leq \frac{\sqrt{|\mathcal{A}|/2}}{T}\;
    \mathbb{E}\!\left[\sum_{t=1}^T \sqrt{I_t\!\left(A^*;\,(A_t, Y_t^{(A_t)})\right)}\right]\\
    &\leq \frac{\sqrt{|\mathcal{A}|/2}}{T} \cdot \sqrt{T \log |\mathcal{A}|}
    = \sqrt{\frac{|\mathcal{A}| \log |\mathcal{A}|}{2T}}.
\end{align*}
\end{proof}

\vspace{-5mm}
\subsection{Deriving Eqn \eqref{eq:log-loss-is-kl}: KL divergence representation of the offline loss}
\label{app:log-loss-is-kl}
The next lemma is a standard result connecting the excess expected loss of a sequence model $ n$ to its KL divergence from the true sequence model $p^*$. In this section, for notational convenience, we use $p^*$ and $\widehat{p}$ to denote $p_{\TN{per-action}}^*$ and $\widehat{p}_{\TN{per-action}}$, respectively. Recall, the expected loss of a sequence model $\widehat{p}$ is denoted $\ell(\widehat{p})$, defined in \eqref{eq:pop_loss}. To (nearly) minimize loss, $\widehat{p}$ the learner needs to closely approximate the true sequence model $p^*$.
\begin{lemma}
    \label{lem:log-loss-is-kl}
    For any sequence model $\widehat{p}$,
    \[
    \ell(\widehat{p}) = \ell(p^*) + \E_{Z^{(a)} \sim P_Z} \left[D_{\rm KL}\left( p^* \big(Y_1^{(a)}, \ldots, Y_T^{(a)} \mid Z^{(a)} \big) \; \Big\| \; \widehat{p} \big( Y_1^{(a)}, \ldots, Y_T^{(a)} \mid Z^{(a)} \big)  \right) \right]. 
    \]    
\end{lemma}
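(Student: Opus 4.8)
The plan is to reduce the identity to the chain rule of probability followed by a single application of the tower property. First I would rewrite the per-step loss as a joint loss. Because the conditional densities multiply to the joint density, the telescoping identity
\[
\sum_{t=1}^{T} \log p_{\theta}\big(Y_t^{(a)} \mid Z^{(a)}, Y_{1:t-1}^{(a)}\big) = \log p_{\theta}\big(Y_1^{(a)},\ldots,Y_T^{(a)} \mid Z^{(a)}\big)
\]
holds pointwise for every realized sequence, and the same holds with $p_\theta$ replaced by $p^*$. Substituting into the definition \eqref{eq:pop_loss} gives $\ell(p_\theta) = \E\big[-\log p_\theta(Y_{1:T}^{(a)} \mid Z^{(a)})\big]$ and likewise for $\ell(p^*)$, where throughout the expectation is over the true draw $(Z^{(a)}, Y_{1:T}^{(a)}) \sim p^*$.

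Next I would form the difference and collect terms into a single log-ratio:
\[
\ell(p_\theta) - \ell(p^*) = \E\left[\log \frac{p^*(Y_{1:T}^{(a)} \mid Z^{(a)})}{p_\theta(Y_{1:T}^{(a)} \mid Z^{(a)})}\right].
\]
The final step is to condition on $Z^{(a)}$ via the tower property. For a fixed value of $Z^{(a)}$, the inner expectation is taken with respect to $Y_{1:T}^{(a)} \sim p^*(\cdot \mid Z^{(a)})$, so it is by definition the KL divergence $D_{\rm KL}\big(p^*(\cdot \mid Z^{(a)}) \,\|\, p_\theta(\cdot \mid Z^{(a)})\big)$. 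Taking the remaining expectation over $Z^{(a)} \sim P_Z$ and rearranging yields exactly the claimed identity.

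There is no substantive obstacle here; this is the standard excess-log-loss-equals-KL identity, and the heavy lifting is done entirely by the chain rule. The only points requiring care are bookkeeping: (i) ensuring the expectation defining both $\ell(p_\theta)$ and $\ell(p^*)$ is understood to be under the true distribution $p^*$, which is precisely what turns the log-ratio into a genuine KL divergence (recall $D_{\rm KL}(P\|Q) = \E_P[\log(P/Q)]$); and (ii) justifying the subtraction of $\ell(p^*)$, which implicitly requires $\ell(p^*)$ to be finite (equivalently, the entropy term well-defined) and $p^*(\cdot \mid Z^{(a)})$ to be absolutely continuous with respect to $p_\theta(\cdot \mid Z^{(a)})$ for the KL to be finite --- otherwise both sides are $+\infty$ and the identity holds trivially.
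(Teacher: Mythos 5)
Your proposal is correct and follows essentially the same route as the paper's own proof: both arguments combine the chain rule of probability (collapsing the per-step conditionals into the joint conditional), the tower property to condition on $Z^{(a)}$, and the definition of KL divergence; the only difference is cosmetic ordering (you telescope each loss into a joint log-loss before subtracting, while the paper subtracts first and then collapses the product of ratios). Your added caveats about finiteness and absolute continuity are reasonable bookkeeping but do not change the substance.
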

\begin{proof}
By the definition of the expected loss in  \eqref{eq:pop_loss}, and the chain rule of KL divergence:
    \begin{align*}
        \ell(\widehat{p}) - \ell(p^*) 
        &= \E\left[ -\sum_{t=1}^T \log \widehat{p} \big( Y_t^{(a)} \mid Z^{(a)}, Y_{1:t-1}^{(a)} \big) \right] - \E\left[ -\sum_{t=1}^T \log p^* \big( Y_t^{(a)} \mid Z^{(a)}, Y_{1:t-1}^{(a)} \big) \right] \\
        &=  \E\left[ \E \left[  \log \left\{ \prod_{t=1}^{T} \frac{p^* \big( Y_t^{(a)} \mid Z^{(a)}, Y_{1:t-1}^{(a)} \big)}{\widehat{p} \big( Y_t^{(a)} \mid Z^{(a)}, Y_{1:t-1}^{(a)} \big)} \right\} \, \Bigg| \, Z^{(a)} \right]  \right] 
    \end{align*}
    \begin{align*}
        &=  \E\left[ \E \left[  \log \left\{   \frac{p^* \big( Y_{1}^{(a)}, \dots, Y_{T}^{(a)} \mid Z^{(a)} \big)}{\widehat{p} \big( Y_{1}^{(a)}, \dots, Y_{T}^{(a)} \mid Z^{(a)} \big)} \right\} \, \Bigg| \, Z^{(a)} \right]  \right]\\
        &= \E_{Z^{(a)} \sim P_Z} \left[D_{\TN{KL}} \big( p^*( Y_1^{(a)}, \dots, Y_T^{(a)} \mid Z^{(a)} ) ~ \big\| ~ \widehat{p} ( Y_1^{(a)}, \dots, Y_T^{(a)} \mid Z^{(a)} ) \big)\right].
    \end{align*}
    The second equality uses the chain rule and properties of logarithms. The third equality uses the chain rule of probability. The last equality uses the definition of KL divergence.  
\end{proof}

\vspace{-2mm}
\subsection{Proof of Proposition \ref{prop:posterior_sample}}\label{app:proof_of_posterior_sample}
    Consider the full process of generating a table $\widehat{\tau}_t$. This involves first i) the sequential revelation of $t-1$ (real) potential outcomes determined by selected actions under $\pi$ and ii) the sequential revelation of $T\times |\Aeval|-(t-1)$ (imagined) potential outcomes produced by the autoregressive generation procedure in Algorithm \ref{alg:posterior_sample} (see also Figure \ref{fig:autoregressive_generation}.). 

    We introduce an abstraction which subsumes both kinds of revelation. The revelation of a full potential outcome table proceeds across $|\Aeval|\times T$ rounds; we use the letter $i$ to index these rounds, $a$'s to index rows of the potential outcomes table, and $j$'s to index columns. Let $(A_i, J_i) \in \Aeval \times [T]$ denote the index of the entry selected in round $i$ and $\widehat{\tau}[A_i, J_i]$ to denote an entry itself in sampled potential outcomes table. For $i\leq t-1$, the column is simply $J_i =i$ and $A_i$ is determined adaptively according to the policy $\pi$; for $i\geq t$, the choice of $(A_i, J_i)$ proceeds across missing entries according to the deterministic order of Alg. \ref{alg:posterior_sample}.

    We can represent this procedure abstractly as an adaptive selection rule $\psi = (\psi_1, \ldots, \psi_{|\Aeval|T})$. For any $i$, $\psi_i$ is a function which selects $A_i,J_i$ as a function of the revealed/generated outcomes so far and possibly some exogenous noise. Writing it formally, let the (generalized) history $\HH^+_{i-1} \in  \mathcal{Z}^{|\Aeval|} \times \left( \Aeval \times [T] \times \mathcal{Y}\right)^{i-1}$ denote the sampled unstructured inputs $(Z's)$ together with a  sequence of table indices and entries; take $\xi_i$ to be a random seed, independent of all else. The procedure described above can be thought of as corresponding to a rule which determines selections $(A_i, J_i)=\psi_i(\HH_{i-1}, \xi_i)$ for each $i \leq |\Aeval|T$. While the $\psi$ we are studying has many special properties, our analysis does not rely on these. Instead, our proof leverages only one property of $\psi$: that it never selects a previously revealed entry of the table and hence reveals all entries of the table after $|\Aeval|T$ revelations. We call this such a selection rule \emph{non-repeating}.
    
    The proof compares three procedures for sampling outcomes of selected entries conditioned on prior selections and samples.
       \begin{enumerate}[leftmargin=*]
        \item {\bf True environment}: Conditioned on $\HH^{+}_{i-1}$ and $(A_i, J_i)$, the revealed outcome $\widehat{\tau}[A_i, J_i]$ is drawn according to $p^*$. That is, $\widehat{\tau}[a,j] \mid \HH^{+}_{i-1}, A_i=a, J_i=j \sim p^*\left( \cdotspace \mid Z^{(a)}, (\widehat{\tau}[A_k, J_k])_{k<i, A_k=a} \right)$. We denote probabilities under $\psi$ employed in this environment by $\mathbb{P}_{*,\psi}$. 
        \item {\bf Misspecified environment}: Conditioned on $\HH^{+}_{i-1}$ and $(A_i, J_i)$, the revealed outcome $\widehat{\tau}[A_i, J_i]$ is drawn according to $\widehat{p}$. That is, $\widehat{\tau}[a,j] \mid \HH^{+}_{i-1}, A_i=a, J_i=j \sim \widehat{p}\left( \cdotspace \mid Z^{(a)}, (\widehat{\tau}[A_k, J_k])_{k<i, A_k=a} \right)$. We denote probabilities under $\psi$ employed in this environment by $\mathbb{P}_{\theta,\psi}$. 
          \item  {\bf Hybrid environment}: Conditioned on $\HH^{+}_{i-1}$ and $(A_i, J_i)$, the revealed outcome $\widehat{\tau}[A_i, J_i]$ is drawn according to $p^*$ if $i<t$ and drawn according to $\widehat{p}$ otherwise. We denote probabilities under $\psi$ employed in this environment by $\mathbb{P}_{h,\psi}$.  This represents the procedure used to generate $\widehat{\tau}_{t}$, where the first $t-1$ entries are revealed according the true environment, and the remainder are drawn by using Algorithm \ref{alg:posterior_sample} with $\widehat{p}$ for generation.
    \end{enumerate}
    With the abstractions and notations above, Proposition \ref{prop:posterior_sample} can be rewritten as
    \begin{equation}\label{eq:posterior_sample_bound_rewritten}
     \mathbb{E}_{\psi, *} \left[ D_{\rm KL}\left( \PP_{\psi, *}\left( \widehat{\tau} \in \cdot \mid \HH^+_{t-1} \right) \, \big\|  \, \PP_{\psi, h}\left( \widehat{\tau} \in \cdot  \mid \HH^+_{t-1} \right)   \right) \right] \leq |\Aeval| \left( \ell(\widehat{p}) - \ell(p^*) \right).
    \end{equation}
    We establish this result in three steps. The first bounds the KL divergence on the left hand side of \eqref{eq:posterior_sample_bound_rewritten} by the KL divergence between full histories. 
    \begin{lemma} For any \emph{non-repeating} selection rule $\psi$, 
        \[
             \mathbb{E}_{\psi, *} \left[ D_{\rm KL}\left( \PP_{\psi, *}\left( \widehat{\tau} \in \cdot \mid \HH^+_{t-1} \right) \, \big\|  \, \PP_{\psi, h}\left( \widehat{\tau} \in \cdot  \mid \HH^+_{t-1} \right)   \right) \right] \leq D_{\rm KL}\left( \PP_{\psi, *}\left( \HH_{|\Aeval|T}^+ \in \cdot   \right) \, \Big\|  \, \PP_{\psi, h}\left( \HH_{|\Aeval|T}^+ \in \cdot  \right)   \right).    
        \]
    \end{lemma}
    \begin{proof}
        Since the complete table $\widehat{\tau}$ is a function of the final history $\HH_{|\Aeval|T}^+$ under any non-repeating $\psi$, the data processing inequality yields
        \begin{align*}
        &\mathbb{E}_{\psi, *} \left[ D_{\rm KL}\left( \PP_{\psi, *}\left( \widehat{\tau} \in \cdot \mid \HH^+_{t-1} \right) \, \big\|  \, \PP_{\psi, h}\left( \widehat{\tau} \in \cdot  \mid \HH^+_{t-1} \right)   \right) \right] \\
        \leq & \mathbb{E}_{\psi, *} \left[ D_{\rm KL}\left( \PP_{\psi, *}\left( \HH_{|\Aeval|T}^+ \in \cdot \mid \HH^+_{t-1} \right) \, \big\|  \, \PP_{\psi, h}\left( \HH_{|\Aeval|T}^+ \in \cdot  \mid \HH^+_{t-1} \right)   \right) \right] \\
        =&   D_{\rm KL}\left( \PP_{\psi, *}\left( \HH_{|\Aeval|T}^+ \in \cdot   \right) \, \Big\|  \, \PP_{\psi, h}\left( \HH_{|\Aeval|T}^+ \in \cdot  \right)   \right)  - \underbrace{ D_{\rm KL}\left( \PP_{\psi, *}\left( \HH_{t-1}^+ \in \cdot   \right) \, \big\|  \, \PP_{\psi, h}\left( \HH_{t-1}^+ \in \cdot  \right)   \right) }_{=0} \\
        =& D_{\rm KL}\left( \PP_{\psi, *}\left( \HH_{|\Aeval|T}^+ \in \cdot   \right) \, \Big\|  \, \PP_{\psi, h}\left( \HH_{|\Aeval|T}^+ \in \cdot  \right)   \right). 
    \end{align*}
    The first equality is the chain rule of KL divergence and the second equality uses that the hybrid and true environments coincide for the first $t-1$ rounds.
    \end{proof}
    The next step, stated as a standalone lemma, bounds the KL divergence between the true and hybrid environments by the KL divergence between the true and misspecified environments. 
    \begin{lemma} For any \emph{non-repeating} selection rule $\psi$, 
    \[
        D_{\rm KL}\left( \PP_{\psi, *}\left( \HH_{|\Aeval|T}^+ \in \cdot   \right) \, \Big\|  \, \PP_{\psi, h}\left( \HH_{|\Aeval|T}^+ \in \cdot  \right)   \right) \leq        D_{\rm KL}\left( \PP_{\psi, *}\left( \HH_{|\Aeval|T}^+ \in \cdot   \right) \, \Big\|  \, \PP_{\psi, \theta}\left( \HH_{|\Aeval|T}^+ \in \cdot  \right)   \right).
    \]
    \end{lemma}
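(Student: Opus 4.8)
The plan is to prove both sides through the chain rule for KL divergence, decomposing each joint divergence over the $|\Aeval|T$ revelation rounds and then exploiting the fact that the first $t-1$ rounds of the hybrid environment agree with the true environment. Writing $X_i := \big( (A_i, J_i), \hat{\tau}[A_i, J_i] \big)$ for the index-outcome pair revealed in round $i$, the generalized history is $\HH^+_{|\Aeval|T} = (X_1, \ldots, X_{|\Aeval|T})$ together with the sampled $Z$'s. The chain rule expresses each joint divergence as a sum of expected per-round conditional divergences, where --- crucially --- the outer expectation is taken under the \emph{first} argument $\PP_{\psi,*}$ in both comparisons:
\[
D_{\rm KL}\left( \PP_{\psi,*} \, \| \, \PP_{\psi, h} \right) = \sum_{i=1}^{|\Aeval|T} \E_{\PP_{\psi,*}}\Big[ D_{\rm KL}\big( \PP_{\psi,*}(X_i \mid \HH^+_{i-1}) \, \| \, \PP_{\psi,h}(X_i \mid \HH^+_{i-1}) \big) \Big],
\]
and identically with $\PP_{\psi,h}$ replaced throughout by $\PP_{\psi,\theta}$.

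First I would reduce each per-round conditional divergence to a divergence in the \emph{outcome} alone. Because the selection $(A_i, J_i) = \psi_i(\HH^+_{i-1}, \xi_i)$ is produced by the same rule $\psi$ with the same exogenous seed distribution in all three environments, the conditional law of $(A_i, J_i)$ given $\HH^+_{i-1}$ is environment-independent; hence the conditional divergence of $X_i$ collapses to that of the outcome $\hat{\tau}[A_i, J_i]$ given $\HH^+_{i-1}$ and the selected index. By the definitions of the three environments, this outcome divergence is $D_{\rm KL}\big( p^*(\cdot \mid Z^{(A_i)}, \cdot) \, \| \, q(\cdot \mid Z^{(A_i)}, \cdot) \big)$, where $q = p^*$ when round $i$ uses the true model and $q = p_\theta$ otherwise, and in all cases the conditioning is only on $Z^{(A_i)}$ and the previously revealed outcomes of arm $A_i$, which are recorded identically in $\HH^+_{i-1}$.

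The decisive step is a term-by-term comparison of the two sums. In the hybrid environment, rounds $i < t$ draw from $p^*$, so each such term is $D_{\rm KL}(p^* \, \| \, p^*) = 0$ and only the terms with $i \geq t$ survive; in the misspecified environment every round uses $p_\theta$, so all terms are present. Since both decompositions share the \emph{same} outer measure $\PP_{\psi,*}$, and the surviving $i \geq t$ terms are the \emph{same} function of the history in both (each comparing $p^*$ to $p_\theta$ at the selected entry), those terms coincide exactly. Subtracting, the difference equals $\sum_{i < t} \E_{\PP_{\psi,*}}\big[ D_{\rm KL}(p^* \, \| \, p_\theta) \big] \geq 0$ by nonnegativity of the KL divergence, which gives the claimed inequality.

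I expect the main obstacle to be careful bookkeeping rather than any conceptual leap: one must verify that the outer expectation in the chain rule is governed by $\PP_{\psi,*}$ in \emph{both} divergences (so that the $i \geq t$ terms are averaged under a common distribution and cancel), and that the per-round outcome law depends on the history only through $Z^{(A_i)}$ and arm $A_i$'s revealed outcomes, so the conditional divergences are literally identical functions across the two comparisons. A minor technical caveat worth noting is absolute continuity --- these divergences are well-defined only when $p^* \ll p_\theta$ on the relevant conditionals --- but this holds in the regime of interest, namely whenever the excess loss $\ell(p_\theta) - \ell(p^*)$ is finite.
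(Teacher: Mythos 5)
Your proposal is correct and takes essentially the same route as the paper's proof: both rest on the chain rule for KL divergence together with the two key observations that the hybrid environment coincides with the true one on rounds $i<t$ (so those terms vanish) and coincides with the misspecified one on rounds $i\geq t$ conditional on the history (so those terms, averaged under the common outer measure $\PP_{\psi,*}$, are literally identical across the two divergences), leaving a difference equal to a sum of nonnegative KL terms. The only distinction is granularity --- the paper splits the history into two blocks at round $t-1$ while you decompose round by round and compare term by term --- which changes nothing substantive.
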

    \begin{proof}
    \vspace{-5mm}
    \begin{align*}
    & D_{\rm KL}\left( \PP_{\psi, *}\left( \HH_{|\Aeval|T}^+ \in \cdot   \right) \, \big\|  \, \PP_{\psi, h}\left( \HH_{|\Aeval|T}^+ \in \cdot  \right)   \right) \\ 
    = \quad &   \underbrace{D_{\rm KL}\left( \PP_{\psi, *}\left(  \HH^+_{t-1} \in \cdot  \right) \, \big\|  \, \PP_{\psi, h}\left( \HH^+_{t-1} \in \cdot \right) \right) }_{=0}   \\
    & \quad  + \mathbb{E}_{\psi, *}\left[ D_{\rm KL}\left( \PP_{\psi, *}\left( \HH_{|\Aeval|T}^+ \in \cdot \mid \HH^+_{t-1} \right) \, \Big\|  \, \PP_{\psi, h}\left( \HH_{|\Aeval|T}^+ \in \cdot \mid \HH^+_{t-1} \right)   \right)  \right] \\
    \leq \quad &   \underbrace{D_{\rm KL}\left( \PP_{\psi, *}\left(  \HH^+_{t-1} \in \cdot  \right) \, \big\|  \, \PP_{\psi, \theta}\left( \HH^+_{t-1} \in \cdot \right) \right) }_{\geq 0}   \\
    & \quad  + \mathbb{E}_{\psi, *}\left[ D_{\rm KL}\left( \PP_{\psi, *}\left( \HH_{|\Aeval|T}^+ \in \cdot \mid \HH^+_{t-1} \right) \, \Big\|  \, \PP_{\psi, h}\left( \HH_{|\Aeval|T}^+ \in \cdot \mid \HH^+_{t-1} \right)   \right)  \right] \\
    = \quad &   D_{\rm KL}\left( \PP_{\psi, *}\left(  \HH^+_{t-1} \in \cdot  \right) \, \big\|  \, \PP_{\psi, \theta}\left( \HH^+_{t-1} \in \cdot \right) \right)  \\
    & \quad  + \mathbb{E}_{\psi, *}\left[ D_{\rm KL}\left( \PP_{\psi, *}\left( \HH_{|\Aeval|T}^+ \in \cdot \mid \HH^+_{t-1} \right) \, \Big\|  \, \PP_{\psi, \theta}\left( \HH_{|\Aeval|T}^+ \in \cdot \mid \HH^+_{t-1} \right)   \right)  \right] \\
    =\quad & D_{\rm KL}\left( \PP_{\psi, *}\left( \HH_{|\Aeval|T}^+ \in \cdot   \right) \, \Big\|  \, \PP_{\psi, \theta}\left( \HH_{|\Aeval|T}^+ \in \cdot  \right)   \right).
    \end{align*}
    We make repeated use of the chain rule and the definitions of true, hybrid, and misspecified environment. The inequality uses that the hybrid environment is identical to the true environment for the first $t-1$ samples to conclude a second KL divergence is zero, and the non-negativity of KL divergences to claim that zero is a lower bound on the divergence under the misspecified environment.  
    \end{proof}
    Our next lemma calculates the KL divergence between (generalized) histories under the true and misspecified environments, finding it is \emph{exactly} equal to the number of actions times the excess offline prediction loss of the misspecified sequence model $\widehat{p}$.

    \begin{lemma}
        Under any non-repeating selection rule $\psi$,
        \[
    D_{\rm KL}\left( \PP_{\psi, *}\left( \HH_{|\Aeval|T}^+ \in \cdot   \right) \, \Big\|  \, \PP_{\psi, \theta}\left( \HH_{|\Aeval|T}^+ \in \cdot  \right)   \right) = |\Aeval| \left( \ell(\widehat{p}) - \ell(p^*) \right).
            \]
    \end{lemma}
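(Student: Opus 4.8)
The plan is to decompose the relative entropy between the two full generalized histories via the chain rule for KL divergence, show that every term cancels except those arising from the outcome draws, and then regroup the surviving terms by action into $|\Aeval|$ copies of the per-action excess prediction loss identified in Lemma~\ref{lem:log-loss-is-kl}.

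First I would write the generalized history $\HH^+_{|\Aeval|T}$ as the initial draw of $(Z^{(a)})_{a\in\Aeval}$ followed by the $|\Aeval|T$ revelation blocks, where block $i$ consists of the selection $(A_i,J_i)=\psi_i(\HH^+_{i-1},\xi_i)$ together with the revealed entry $\hat{\tau}[A_i,J_i]$. The chain rule for relative entropy then expresses $D_{\rm KL}(\PP_{\psi,*}(\HH^+_{|\Aeval|T}\in\cdot)\,\|\,\PP_{\psi,\theta}(\HH^+_{|\Aeval|T}\in\cdot))$ as an expectation under the true environment of a sum of one-step conditional divergences. Two families of terms vanish: the $Z$-draws are $P_Z^{\otimes|\Aeval|}$ in both environments, and, crucially, the selection rule $\psi$ and seeds $\xi_i$ are shared, so conditioned on $\HH^+_{i-1}$ the law of $(A_i,J_i)$ is identical under $\PP_{\psi,*}$ and $\PP_{\psi,\theta}$. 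Hence only the outcome draws contribute, each equal to $\mathbb{E}_{\psi,*}\!\left[D_{\rm KL}\!\left(p^*(\cdot\mid Z^{(A_i)},O)\,\|\,p_\theta(\cdot\mid Z^{(A_i)},O)\right)\right]$, where $O$ denotes the outcomes of action $A_i$ revealed so far.

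Next I would regroup the outcome terms by action. Here Assumption~\ref{assump:articlesiid} is essential: under both $p^*$ and $p_\theta$ the conditional law of a revealed outcome for action $a$ depends only on $Z^{(a)}$ and the previously revealed outcomes of that same action, so the contributions for $a$ never reference any other row. Because $\psi$ is non-repeating, each action $a$ is selected in exactly $T$ rounds and its outcomes are revealed sequentially; writing $(\tilde Y_1^{(a)},\dots,\tilde Y_T^{(a)})$ for these in revelation order, the chain rule collapses the $T$ conditional divergences for $a$ into the single joint divergence $\mathbb{E}_{Z^{(a)}}\!\left[D_{\rm KL}\!\left(p^*(\tilde Y_{1:T}^{(a)}\mid Z^{(a)})\,\|\,p_\theta(\tilde Y_{1:T}^{(a)}\mid Z^{(a)})\right)\right]$. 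By exchangeability of $p^*$ (Assumption~\ref{assump:exchangeable}) the revealed sequence is distributed exactly as $p^*(Y_{1:T}^{(a)}\mid Z^{(a)})$, so by Lemma~\ref{lem:log-loss-is-kl} this per-action term equals the excess prediction loss $\ell(p_\theta)-\ell(p^*)$. Summing over the $|\Aeval|$ actions gives the divergence as $|\Aeval|\bigl(\ell(p_\theta)-\ell(p^*)\bigr)$, i.e.\ the number of actions times the excess offline prediction loss, which is the stated bound (with equality).

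The main obstacle I anticipate is the bookkeeping around the adaptive, interleaved revelation order: which rounds reveal a given action, and the order of its outcomes, are random and can depend on other actions' realized outcomes. The resolution is that Assumption~\ref{assump:articlesiid} decouples the rows, so the within-action telescoping is valid conditionally on the (random) revelation order, and the resulting joint divergence is invariant to that order --- for $p_\theta$ because the product of its autoregressive conditionals taken along any revelation order telescopes to the same joint by the chain rule, and for $p^*$ because of exchangeability. I would therefore take care to state the chain rule at the level of the block decomposition, so that the cancellation of the selection terms is fully rigorous, before carrying out the per-action regrouping.
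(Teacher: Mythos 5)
Your proposal is correct and follows essentially the same route as the paper's proof: a chain-rule decomposition of the KL divergence over revelation rounds, cancellation of the selection terms because $\psi$ and its seeds are shared across environments, regrouping of the surviving outcome terms by action, an order-invariance argument showing the revealed sequence for each row is distributed as a draw from $p^*(\cdot \mid Z^{(a)})$, and finally Lemma~\ref{lem:log-loss-is-kl} to identify each per-action term with the excess loss $\ell(p_\theta)-\ell(p^*)$. The only cosmetic difference is that you invoke Assumptions~\ref{assump:articlesiid} and~\ref{assump:exchangeable} explicitly where the paper bakes them into the definitions of the true and misspecified environments, and you correctly observe (as the paper's own chain of equalities does) that the bound in fact holds with equality.
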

    \begin{proof}  
    \begin{align*}
       & D_{\rm KL}\left( \PP_{\psi, *}\left( \HH_{|\Aeval|T}^+ \in \cdot   \right) \, \Big\|  \, \PP_{\psi, \theta}\left( \HH_{|\Aeval|T}^+ \in \cdot  \right)   \right) \\
       = &  \sum_{i=1}^{|\Aeval| T} \mathbb{E}_{\psi, *} \left[D_{\rm KL}\Big( \PP_{\psi, *}\left( (A_i, J_i, \widehat{\tau}[A_i, J_i]) \in \cdot \mid \HH_{i-1}^+ \Big) \, \Big\| \,   \mathbb{P}_{\psi, \theta}\Big( (A_i, J_i, \widehat{\tau}[A_i, J_i]) \in \cdot \mid \HH_{i-1}^+ \right) \Big) \right] \\
     = & 0 +  \sum_{i=1}^{|\Aeval| T} \mathbb{E}_{\psi, *} \left[D_{\rm KL}\left( \PP_{\psi, *}\left( \widehat{\tau}[A_i, J_i]) \in \cdot \mid \HH_{i-1}^+, A_i, J_i \right) \, \Big\| \,   \mathbb{P}_{\psi, \theta}\left(\widehat{\tau}[A_i, J_i]  \in \cdot \mid \HH_{i-1}^+, A_i, J_i  \right) \right) \right]\\
     =  & \sum_{i=1}^{|\Aeval| T} \mathbb{E}_{\psi, *} \left[D_{\rm KL}\left( p^*\left( \cdotspace \mid Z^{(A_i)}, (\widehat{\tau}[A_k, J_k])_{k<i, A_k=A_i} \right) \, \Big\| \,   \widehat{p}\left( \cdotspace \mid Z^{(A_i)}, (\widehat{\tau}[A_k, J_k])_{k<i, A_k=A_i} \right) \right)  \right] \\
     =& \sum_{a \in \Aeval}  %
     \left( \sum_{n=1}^{ T} \mathbb{E}_{\psi, *} \left[D_{\rm KL}\left( p^*\left( \cdotspace \mid Z^{(a)}, (\widehat{\tau}[a, J_{k,a}])_{k<n} \right) \, \Big\| \,   \widehat{p}\left( \cdotspace \mid Z^{(a)}, (\widehat{\tau}[a, J_{k,a}])_{k<n} \right) \right)  \right]\right) 
    \end{align*}
    Where the last line introduced the notation $J_{n,a}$ to denote whichever column is revealed/sampled $n^{\rm th}$ among those in row $a$.    The first equality is the chain rule of KL divergence. The second equality uses the chain rule again, together with the fact that the conditional distribution of the selection $(A_i, J_i)$ is determined solely by $\psi$ and not impacted by the distribution used to generate elements of $\widehat{\tau}$, i.e.   
    $\mathbb{P}_{\psi, *}\left( (A_i, J_i) \in \cdot \mid \HH_{i-1}^+ \right)  = \mathbb{P}_{\psi, \theta}\left( (A_i, J_i,) \in \cdot \mid \HH_{i-1}^+ \right)$. The last equality directly applies the definitions of the true and misspecified environments. 
    
    Notice that the final sum iterates over all rows in the table and, within a row, iterates over all columns, but in an unusual adaptive order (Assumption \ref{assump:exchangeable}).  We now replace the adaptive order with a more transparent deterministic, increasing, order. Notice that $\widehat{\tau}[a, J_{1,a}] \mid Z^{(a)}  \sim p^*(\cdotspace \mid Z^{(a)})$. Then $\widehat{\tau}[a, J_{2,a}] \mid Z^{(a)},\widehat{\tau}[a, J_{1,a}] \sim p^*(\cdotspace \mid  Z^{(a)},\widehat{\tau}[a, J_{1,a}])$ and so on, yielding  
    \[
     (\widehat{\tau}[a, J_{k,a}])_{k\leq T}   \mid Z^{(a)} \sim p^*\big(\cdotspace \mid Z^{(a)} \big)
     \]
     To simplify notation, we introduce an equivalent draw $Y_{1:T}^{(a)} \sim p^*(\cdotspace \mid Z^{(a)})$. Thus,
\begin{align*}
    & D_{\rm KL}\left( \PP_{\psi, *}\left( \HH_{|\Aeval|T}^+ \in \cdot   \right) \, \Big\|  \, \PP_{\psi, \theta}\left( \HH_{|\Aeval|T}^+ \in \cdot  \right)   \right) \\
    &= \sum_{a\in \Aeval} \sum_{i=1}^{T} \mathbb{E}_{ (Z^{(a)}, Y_{1:T}^{(a)})\sim p^*} \left[D_{\rm KL}\left( p^*\left( \cdotspace \mid Z^{(a)}, (Y_{k}^{(a)})_{k<i} \right) \, \Big\| \,    \widehat{p}\left( \cdotspace \mid Z^{(a)}, (Y_{k}^{(a)})_{k<i} \right) \right)  \right] 
    \end{align*}
    \begin{align*}
     &=  \sum_{a\in \Aeval}  \E_{Z^{(a)} \sim P_Z} \left[D_{\rm KL}\left( p^* \big(Y_1^{(a)}, \ldots, Y_T^{(a)} \mid Z^{(a)} \big) \; \Big\| \; \widehat{p} \big( Y_1^{(a)}, \ldots, Y_T^{(a)} \mid Z^{(a)} \big)  \right) \right]
     = |\Aeval| \left( \ell(\widehat{p}) - \ell(p^*) \right).
    \end{align*}
    The last equality above uses \eqref{eq:log-loss-is-kl}. Recall $P_Z$ is the marginal distribution of $Z^{(a)}$ under $p^*$. 
\end{proof}

\vspace{-2mm}
\subsection{Proof of Proposition \ref{prop:sim-to-real-gap} (Section \ref{section:regret-bounds}): Bounding the deployment regret in terms of regret on a simulator}
\label{app:proof_of_sim_to_real_gap}

\subsubsection{Implementing the simulator with apriori randomness}\label{subsec:non-exchangeable-sequence-model}
For notational convenience, we use $p^*$ and $\widehat{p}$ to denote $p_{\TN{per-action}}^*$ and $\widehat{p}_{\TN{per-action}}$, respectively.
The simulator corresponding to sequence model $\widehat{p}$, formalized in Definition \ref{def:simulator}, appears to involve sequential draws of outcomes as actions are selected. An alternative, which is useful in the analysis, is to draw all randomness upfront. Doing so in a manner that corresponds to the simulator involves a slightly different potential outcomes table, denoted $\tilde{\tau}$, in which columns are interpreted as indexing arm play-counts rather than time. %

We emphasize that this construction is  {\bf used solely as proof technique, and is not part of the model of the problem.} In particular, it is used for studying simulators corresponding to misspecified non-exchangeable sequence models $\widehat{p}$. Under exchangeable sequence models, like $p^*$, order has no bearing.
\begin{definition}[Potential outcomes table indexed by play count]
\label{def:non-exchangeable-dgp}  Given a (possibly non-exhangeable) sequence model $\widehat{p}$, consider a table 
\[
\tilde{\tau}=\big(\tilde{Y}^{(a)}_1, \ldots, \tilde{Y}_{T}^{(a)}\big)_{a\in \Aeval} 
\]
where, conditioned on $Z^{(a)} \sim P_Z$, each $\big(\tilde{Y}^{(a)}_1, \ldots, \tilde{Y}_{T}^{(a)}\big) \mid Z^{(a)} \sim \widehat{p}$ is drawn independently across $a\in \Aeval$. If $A_t=a$ is selected at time $t$ and this is the $k^{\rm th}$ time that arm is chosen, then $\tilde{Y}_{t} \triangleq \tilde{Y}_k^{(A_t)}$ is revealed.
Let $\HHtilde_{t-1} \triangleq ( \{Z^{(a)}\}_{a \in \Aeval}, A_1, \tilde{Y}_1, \dots, A_{t-1}, \tilde{Y}_{t-1} )$. See Figure \ref{fig:revelation}. %
\end{definition}
\begin{figure}[h]
\centering
\vspace{-5mm}
\begin{subfigure}[b]{0.48\textwidth}
    \centering
     \includegraphics[width=0.65\textwidth]{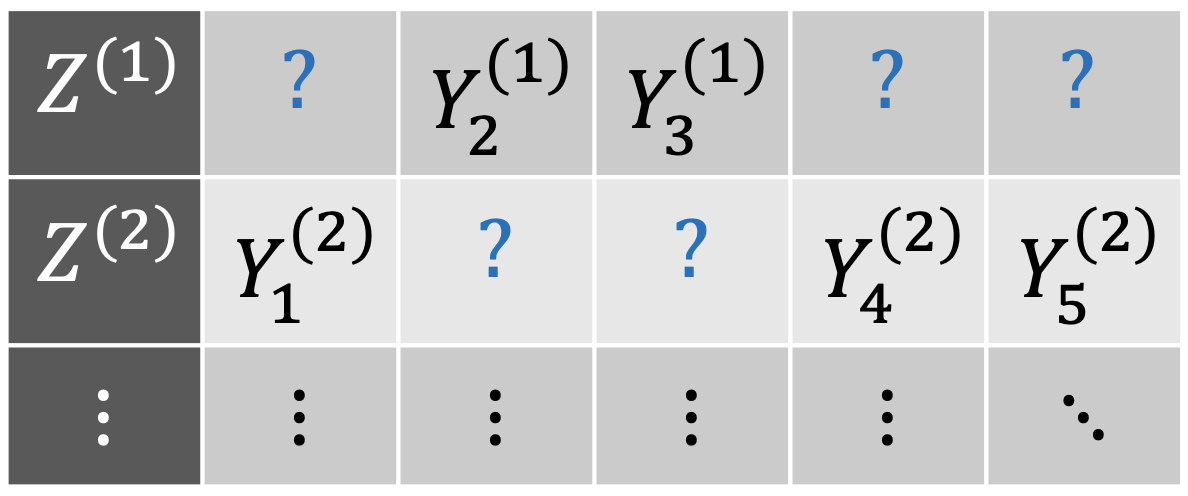}
     \caption{Original revelation order}
\end{subfigure}
\hfill
\begin{subfigure}[b]{0.48\textwidth}
    \centering
     \includegraphics[width=0.65\textwidth]{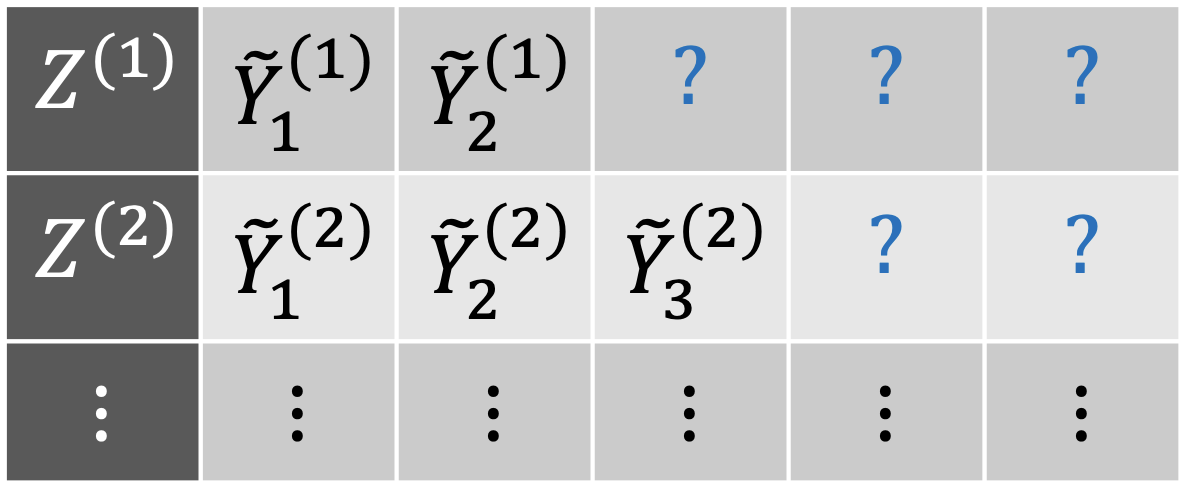}
     \caption{Alternative revelation order.}
 \end{subfigure}
\caption{Suppose the algorithm selects actions $A_1=2, A_2=1, A_3 = 1, A_4=2, A_5=2$. Above, we illustrate the potential outcomes that would be revealed under (a) our original revelation order, and (b) our alternative revelation order according to Definition \ref{def:non-exchangeable-dgp}.}
\label{fig:revelation}
\end{figure}

\vspace{-2mm}
\subsubsection{Completing the proof}
\begin{proof}
Recognize that the regret of any policy $\pi$ is a deterministic function of 
\begin{enumerate*}[label=(\roman*)] 
    \item The potential outcomes $\tilde{\tau}$, which determines the outcome and reward of any possible action selection;
    \item The prior information $Z^{(a)}\sim P_Z$ for $a\in \Aeval$ which may influence action selection;
    \item A random seed $\xi$ that determines any addition randomness in action selection. 
\end{enumerate*}  
That is, for some function $f(\cdot)$ with range $[0,1]$ the per-period regret of any policy $\pi$
under $\widehat{p}$ can be written as 
\[
\Delta( \pi ; \widehat{p}) = \E_{\widehat{p}} \left[f( Z, \tau, \xi ) \right]
\]
where we write $Z = \left(Z^{(a)}\right)_{a\in \Aeval}$. Then 
\begin{align*}
    &\sup_{f : \| f \|_\infty \leq 1} \bigg\{ \E_{p^*} \big[ f\left(  Z, \tilde{\tau}, \xi \right) \big] - \E_{\widehat{p}} \big[ f \left(  Z, \tilde{\tau}, \xi\right) \big] \bigg\} 
    \underbrace{\leq}_{(i)} \sqrt{ \frac{1}{2} D_{\rm KL} \big( \PP_{p^*}( Z, \tilde{\tau}, \xi) ~ \| ~ \PP_{\widehat{p}} ( Z, \tilde{\tau}, \xi) \big) } \\
    \underbrace{=}_{(ii)} &\,  \sqrt{ \underbrace{ \frac{1}{2} D_{\rm KL} \big( \PP_{p^*}\left( (  Z, \xi) \in \cdot \right) ~ \| ~ \PP_{\widehat{p}}\left( ( Z, \xi)\in \cdot \right) \big) }_{=0}
    + \frac{1}{2} D_{\rm KL} \big( \PP_{p^*}( \tilde{\tau} \in \cdot \mid Z,\xi) ~ \| ~ \PP_{\widehat{p}} (\tilde{\tau} \in \cdot \mid Z, \xi) \big) } \\
    \underbrace{=}_{(iii)} &\,  \sqrt{\frac{1}{2} D_{\rm KL} \big( \PP_{p^*}( \tilde{\tau} \in \cdot \mid Z) ~ \| ~ \PP_{\widehat{p}} (\tilde{\tau} \in \cdot \mid Z) \big) }  \\
    \underbrace{=}_{(iv)} &\,  \sqrt{\frac{1}{2}\sum_{a\in \Aeval} D_{\rm KL} \big( \PP_{p^*}( \tilde{Y}_{1:T}^{(a)} \in \cdot \mid Z^{(a)}) ~ \| ~ \PP_{\widehat{p}} (\tilde{Y}_{1:T}^{(a)} \in \cdot \mid Z^{(a)}) \big) }  
    \underbrace{=}_{(v)} \sqrt{ \frac{|\Aeval|}{2} \cdot \left( \ell(p^*) - \ell(\widehat{p}) \right) }
\end{align*}
Here \begin{enumerate*}[label=(\roman*)] 
    \item holds by Fact 9 in \citep{russo2016information} (which uses Pinsker's inequality).
    \item  holds by the chain rule for Kullback Liebler Divergence.
    \item holds because the draw of $(Z,\xi)$ does not depend on the sequence model used and because $\xi$ is independent of $(Z,\tilde{\tau})$. 
    \item  holds because $( Z^{(a)}, \tilde{Y}_{1:T}^{(a)} )$ are i.i.d. across $a \in \Aeval$. 
    \item holds by Lemma \ref{lem:log-loss-is-kl}.
\end{enumerate*}
\end{proof}

\vspace{-5mm}
\subsection{Proof of Corollary \ref{thm:psarRegret}}\label{app:proof_of_TS_regret}
\noindent  In light of Proposition \ref{prop:sim-to-real-gap}, the following proof is largely a review of an information-theoretic analysis of Thompson sampling due to \cite{russo2016information}. It was observed by \cite{bubeck2015bandit,bubeck2016multi} that this analysis can be applied without modification to analyze regret with respect to the  best fixed action ($A^*$) even in nonstationary environments (e.g. non-exchangeable models $\widehat{p}$ as in Definition \ref{def:non-exchangeable-dgp}.) In this section, for notational convenience, we use $p^*$ and $\widehat{p}$ to denote $p_{\TN{per-action}}^*$ and $\widehat{p}_{\TN{per-action}}$, respectively.

\begin{proof}
It follows from Proposition \ref{prop:sim-to-real-gap} that for any sequence model $\widehat{p}$,
\begin{align*}
    \Delta \big( \psar(\widehat{p} ) ; p^* \big) \leq 
    \Delta \big( \psar(\widehat{p}) ; \widehat{p} \big) 
    + \sqrt{ \frac{|\Aeval|}{2} \big\{ \ell(\widehat{p}) - \ell(p^*) \big\} }.
\end{align*}
Our goal is to show $\Delta\big( \psar(\widehat{p}) ; \widehat{p} \big) \leq \sqrt{ \frac{ |\Aeval| \cdot \log(|\Aeval|) }{2 T} }$. 
The key step is to recognize that under the data generating process corresponding to $\widehat{p}$ (see Definition \ref{def:simulator} and especially Definition \ref{def:non-exchangeable-dgp}), if Algorithm \ref{alg:posterior_sample} is applied to the input sequence model $\widehat{p}$ and input history $\HH_{t-1}$, then 
\begin{equation*}
    \PP_{\widehat{p}}\left( \widehat{\tau}_t \in \cdot \mid \HH_{t-1} \right) =  \PP_{\widehat{p}}\left( \tilde{\tau} \in \cdot \mid \HH_{t-1} \right). 
\end{equation*}
This follows by definition as Algorithm \ref{alg:posterior_sample} is constructed to sample directly from the posterior distribution of missing $\tilde{Y}$'s given past observations. 

With the definition of the optimal action $A^*=A^*(\tilde{\tau}) = \argmax_{a\in \Aeval} \sum_{t=1}^{T} R\left(\tilde{Y}_t^{(a)}\right)$, it follows immediately that Algorithm \ref{alg:Thompson_sampling_generic} satisfies
$\PP\left( A_t = a \mid \HH_{t-1}\right) = \PP_{\widehat{p}}\left( A^* = a \mid \HH_{t-1}\right) ~\text{for all } a \in \Aeval$.
In words, the sampling distribution of the action $A_t$ is \emph{matched} to the posterior distribution of $A^*$. From here, 
we use 
 Thompson sampling regret bound techniques from \citet{russo2016information}.
By the proof of Proposition 1 of \cite{russo2016information} (which is general and applies to all policies $\pi$), %
\begin{align*}
    \label{eqn:condZRegret}
    \tilde{\Delta} \big( \psar(\widehat{p} ); \widehat{p} \big)
    = \E_{\widehat{p}} \bigg[ \max_{a\in \Aeval} \bigg\{ \frac{1}{T} \sum_{t=1}^{T} R(Y_t^{(a)}) \bigg\} - \frac{1}{T} \sum_{t=1}^{T} R(\tilde{Y}_{t}) \bigg] 
    &\leq \sqrt{ \frac{ H_{\widehat{p}} (A^* ) \Gamma }{T}} 
    \leq \sqrt{ \frac{  \log(|\Aeval|) \Gamma }{T} }
\end{align*}
where $H_{\widehat{p}} (A^*) \leq \log(|\Aeval|)$ is the Shannon entropy of $A^*$ under the data generating process defined by $\widehat{p}$, and $\Gamma$ is a constant upper bound on the ``information ratio'' such that
\begin{align*}
    \Gamma \geq \max_{t \in [1 \colon T]} \bigg\{ \frac{ \big(\E_t \big[ \tilde{Y}_t^{(A^*)} - \tilde{Y}_t \big] \big)^2 }{ I_t \big( A^*; (A_t, \tilde{Y}_t) \big) } \bigg\} \quad \TN{w.p.~} 1,
\end{align*}
Above we use $\tilde{Y}_t^{(A^*)} \gets Y_{N_t^{(A^*)}}^{(A^*)}$ where $N_{t}^{(a)} \triangleq \sum_{i=1}^{t} \ind(A_t = a)$ denotes the number of times arm $a$ was played up to and including period $t$.
Above, we use $\E_t[\cdotspace] \triangleq \E_{p^*, \psar(\widehat{p})}[\cdotspace \mid \tilde{\HH}_{t-1}]$ to denote that expectations are conditioned on the history $\HHtilde_{t-1}$, and  $I_t \big( A^*; (A_t, \tilde{Y}_t) \big)$ to denote the mutual information between $A^*$ and $(A_t, \tilde{Y}_t)$ conditional evaluated under a base measure ($p^*, \psar(\widehat{p})$) that conditions on $\tilde{\HH}_{t-1}$.

The proof of Proposition 5 of \cite{russo2016information} shows that  one can choose $\Gamma \leq | \Aeval | / 2$ w.p. $1$. As observed in \cite{bubeck2015bandit,bubeck2016multi}, this proof relies only on the probability matching property in Lemma \ref{lem:prob_matching} and hence applies in our setting. 
Combining our results implies
\[
\tilde{\Delta} \big( \psar(\widehat{p} ); \widehat{p} \big)
\leq \sqrt{ \frac{  \log(|\Aeval|) \cdot |\Aeval| }{2 T} }.
\]
\end{proof}

\vspace{-5mm}
\subsection{Proof of Corollary \ref{cor:screening}}
\label{app:screening}

\begin{remark}
The proof technique is more general than the Corollary \ref{cor:screening} statement in two respects.  First, the pointwise bound~\eqref{eq:pointwise} holds for any measurable $S(z) \subseteq \cA$---not just the $1/T$-threshold---with a penalty proportional to $\PP(A^* \notin S(z)\mid Z\!=\!z)$.  Second, the same argument applies to the ``regret under simulator'' term $\Delta(\pi;\what{p}_{\mathrm{per\text{-}action}})$ in Proposition~\ref{prop:sim-to-real-gap}, yielding a version for approximate models; the statement is conceptually identical but notationally heavier since all probabilities are then evaluated under $\what{p}_{\mathrm{per\text{-}action}}$.
\end{remark}

\begin{proof}
Fix a realization $Z = z$ and put $B(z) = \cA \setminus S(z)$.  We bound the cumulative regret $\MC{R}_T := \sum_{t=1}^T \big( R \big(Y_t^{(A^*)} \big) - R \big(Y_t^{(A_t)} \big) \big)$ conditioned on $Z\!=\!z$, then average over $Z$ at the end.

Define the \emph{$z$-coarsened optimal arm} $\tilde{A}^* := A^* \cdot \mathbf{1}(A^* \in S(z)) + a_0 \cdot \mathbf{1}(A^* \notin S(z))$, taking values in $S(z) \cup \{a_0\}$ for a dummy action $a_0$.  Let $p_{t,a} := \PP(A^*\!=\!a \mid \cH_{t-1}, Z = z)$, $\E_t[\cdotspace] := \E[\cdotspace \mid \cH_{t-1}, Z = z]$, and $I_t(\cdotspace;\cdotspace)$ for mutual information under $\PP(\,\cdotspace \mid \cH_{t-1}, Z = z)$.

The conditional one-step regret decomposes as 
\begin{align*}
    \Delta_t := \E_t \Big[ R \big(Y_t^{(A^*)}\big) - R\big(Y_t^{(A_t)}\big) \Big] = \sum_{a \in \MC{A}} p_{t,a}\,\delta_{t,a}
\end{align*}
where $\delta_{t,a} = \E_t \big[R(Y_t^{(a)})\mid A^*\!=\!a \big] - \E_t \big[R(Y_t^{(a)}) \big]$; see \citet[Proposition~2]{russo2016information}.  Splitting $\Delta_t$ over $S(z)$ and $B(z)$ we have that $\Delta_t = \Delta_t^S + \Delta_t^B$, where $\Delta_t^S = \sum_{a \in \MC{A}} p_{t,a}\delta_{t,a} \mathbf 1(a\in S(z))$ and $\Delta_t^B = \sum_{a \in \MC{A}} p_{t,a}\delta_{t,a} \mathbf 1(a\in B(z))$.

\paragraph{Implausible
arms.} Since $\delta_{t,a} \le 1$ and since $P(A^*\!=\!a \mid Z\!=\!z) \le 1/T$ for each $a \in B(z)$,
\begin{align}
    \E \Bigg[\sum_{t=1}^T \Delta_t^B \, \bigg| \, Z\!=\!z \Bigg] \le T \cdot \PP(A^* \in B(z)\mid Z\!=\!z) \le |\cA| - |S(z)|.
    \label{eqn:deltaBbound}
\end{align}

\paragraph{Candidate arms.} %
The following holds by an argument akin to that for \citet[Proposition~3]{russo2016information}, which utilizes the Cauchy-Schwartz inequality, Pinsker's inequality (Fact 9) and \citet[Proposition 2]{russo2016information}.
\begin{align*}
    \E_t \left[ \Delta_t^S \right]^2 
    &= \bigg( \sum_{a \in S(z)} \PP_t(\tilde{A}^* = a ) \Big( \E_t \big[ R(Y_t^{(a)}) \mid \tilde{A}^* = a \big] - \E_t \big[ R(Y_t^{(a)}) \big] \Big) \bigg)^2 
    \leq \tfrac{|S(z)|}{2}\,I_t(\tilde{A}^*; (A_t,Y_t) ).
\end{align*}
Thus, we have that
\begin{align}
    \E \left[ \sum_{t=1}^T \Delta_t^S \,\Bigg|\, Z = z \right]
    &= \E \left[ \sum_{t=1}^T \E_t \left[ \Delta_t^S \right] \Bigg|\, Z = z \right] 
    \leq \E \left[ \sum_{t=1}^T \sqrt{ \tfrac{|S(z)|}{2}\,I_t(\tilde{A}^*; (A_t,Y_t)) } \, \Bigg|\, Z = z \right] \nonumber \\
    &\underbrace{\leq}_{(a)}  \sqrt{ \E \left[ \sum_{t=1}^T  \tfrac{|S(z)| T}{2}\,I_t(\tilde{A}^*; (A_t,Y_t)) \, \Bigg|\, Z = z \right] } \nonumber 
    \underbrace{\leq}_{(b)} \sqrt{ \tfrac{|S(z)| T}{2}\,H(\tilde{A}^* \mid Z = z ) } \nonumber \\
    &\leq \sqrt{\tfrac{|S(z)|\,T}{2}\,\log(|S(z)|+1)}
    \label{eqn:deltaSbound}
\end{align}
Above, inequality (a) holds by Cauchy-Schwartz over $t$, and inequality (b) holds since \\
$\E \left[ \sum_{t=1}^T I_t(\tilde{A}^*; (A_t,Y_t)) \right] \leq H(\tilde{A}^* \mid Z = z)$ by the chain rule for mutual information \cite[Fact~5]{russo2016information} and the non-negativity of entropy \cite[Fact~1]{russo2016information}. The final inequality holds because the coarsened best action $\tilde{A}^*$ takes at most $|S(z)|+1$ different values.
Combining \eqref{eqn:deltaBbound} and \eqref{eqn:deltaSbound},
\begin{align}
    \label{eq:pointwise}
  \E \big[\MC{R}_T \mid Z\!=\!z \big] 
  &= \E\Bigg[\sum_{t=1}^T \Delta_t^B \;\Bigg|\; Z\!=\!z \Bigg] + \E\Bigg[\sum_{t=1}^T \Delta_t^S \;\Bigg|\; Z\!=\!z \Bigg] \nonumber \\
  &\;\le\; \bigl(|\cA|-|S(z)|\bigr) + \sqrt{\tfrac{|S(z)|\,T}{2}\,\log(|S(z)|+1)}\,.
\end{align}

\paragraph{Averaging over $Z$.}  Divide~\eqref{eq:pointwise} by $T$ and take expectations.  Jensen's inequality ($\sqrt{\cdot}$ is concave) and $\log(|S(z)|+1) \le \log|\cA|$ give
\[
  \Delta(\pi_{\rm TSAR})
  \;=\; \tfrac{1}{T}\,\E[R_T]
  \;\le\; \frac{|\cA| - \E[|S(Z)|]}{T} + \sqrt{\frac{\E[|S(Z)|]\cdot\log|\cA|}{2T}}.
\]
\end{proof}

\vspace{-5mm}
\subsection{The tightness of our regret bound with misspecified priors}\label{app:lower_bound_instance}

In this section, for notational convenience, we use $p^*$ and $\widehat{p}$ to denote $p_{\TN{per-action}}^*$ and $\widehat{p}_{\TN{per-action}}$, respectively. In Example \ref{ex:prior-sensitivity} we showed the following bound on regret under Thompson sampling applied with a misspecified prior:
    \begin{align}\label{eq:mispecified-prior-regret-repeat}
        \Delta \big( \psar(\widehat{p} ); \, p^* \big) \leq 
        \underbrace{ \sqrt{ \frac{ |\Aeval| \log (|\Aeval|) }{2 T} } }_{\TN{Regret bound for Thompson sampling}}
        + \underbrace{ \sqrt{ \frac{ |\Aeval| }{2} D_{\rm KL}\big(  \nu_{*}  \; \big\| \; \nu_{\theta}  \big)}  }_{\TN{Penalty for mispecified prior}}.
    \end{align}
As noted below Example \ref{ex:prior-sensitivity}, our upper bound suggests that misspecified priors have persistent effects that do not diminish with $T$. The per-period expected regret remains inflated by a non-vanishing KL divergence term. Here we show this is unavoidable in general. 

\begin{example}
    Now we consider a special case of Example \ref{ex:prior-sensitivity} where the KL divergence term in \eqref{eq:mispecified-prior-regret-repeat} is nearly constant (just $O(\log T)$) and yet per-period regret is also inflated by a constant. 
    Let the latent variable $\eta^{(a)} \in \{1/4, 1/2, 3/4\}$ take on one of three values and take the likelihood $q(\cdot \mid \eta^{(a)}) = {\rm Bern}(\eta^{(a)})$  to be a Bernoulli distribution with success probability $\eta^{(a)}$. Take $R(y)=y$ so there is no distinction between rewards and outcomes. Let $Z^{(a)} \in \{ \mathtt{Safe}, \mathtt{Risky} \}$ take one of two values with equal probability. 
    \begin{enumerate}[leftmargin=*]
        \item When $Z^{(a)}=\mathtt{Safe}$ the arm is known to have 50\% success probability. That is, $\eta^{(a)}=1/2$ with probability 1 (under both $\nu_*$ and $\nu_\theta$).
        \item When $Z^{(a)}=\mathtt{Risky}$, the value of the latent success probability $\eta^{(a)}$ is uncertain. Under $p^*$, it is equally likely to take on a large or small value, with $\nu_*( \{3/4\} \mid \mathtt{Risky})=1/2= \nu_*( \{1/4\} \mid \mathtt{Risky})$. Under the misspecified process, risky arms are unlikely to have high success probability,  with $\nu_\theta( \{3/4\} \mid \mathtt{Risky})= 1/T^2$ and $\nu_\theta( \{1/4\} \mid \mathtt{Risky})=1-1/T^2$.
    \end{enumerate}
     Following the calculation in Example \ref{ex:prior-sensitivity}, one can conclude 
    \begin{align*}
    \ell(\widehat{p})-\ell(p^*)
    &\leq \underbrace{P_Z(\mathtt{Risky})}_{=1/2}\left( (1/2)\log\Big( \frac{1/2}{1/T^2} \Big) + (1/2)\log\Big( \frac{1/2}{1-1/T^2} \Big)  \right) 
    =O(\log(T)). 
    \end{align*}
    The idea of this construction is that $\mathtt{Risky}$ arms have a good chance of high upside under $p^*$, so an optimal policy has strong incentives to explore them in order to learn their true quality. In contrast, Thompson sampling with the misspecified model $\widehat p$ selects actions according to posterior samples from $\widehat p$, not from $p^*$. Under this misspecified posterior, each risky arm has probability only $1/T^2$ of having success probability $3/4$ and hence of beating the safe arms. 

    Since safe arms have known success probability $1/2$, a risky arm can be optimal in a posterior sample only if its sampled success probability exceeds $1/2$. In this construction, the only risky value exceeding $1/2$ is $\eta=3/4$. Therefore, misspecified Thompson sampling can select a risky arm only if at least one risky arm is sampled with $\eta=3/4$. Under $\widehat p$, this event has probability at most $|\Aeval|/T^2$ by a union bound. Hence, when $|\Aeval|=O(1)$, the probability of selecting a risky arm in any such period is $O(1/T^2)$. By a union bound, the probability that misspecified Thompson sampling ever plays a risky arm over $T$ periods is only $O(1/T)$.

    Regret is evaluated under the true model $p^*$. Under $p^*$, with constant probability there exists a risky arm with success probability $3/4$. On this event, the optimal expected reward is $3/4$, while misspecified Thompson sampling plays safe arms with probability $1-O(1/T)$ and obtains expected reward $1/2$. Therefore, for $|\Aeval|=O(1)$, the per-period regret of misspecified Thompson sampling is bounded below by a positive constant:
\[
\Delta \big( \psar(\widehat{p});\, p^* \big) \geq \Omega(1).
\]
\end{example}

\section{Experimental details}
\label{sec:exp_details}

\subsection{Synthetic Experiments: Mixture Beta-Bernoulli}
\label{app:synthetic_appendix}

\paragraph{Data Generating Process.}
We action attributes $Z^{(a)} = \big( Z^{(a)}_1, Z^{(a)}_2 \big) \in \real^2$ where $Z^{(a)}_1, Z^{(a)}_2 \iidsim \TN{Uniform}(0, 0.25)$. 
We sample $Y_{1:T}^{(a)}$ by first sampling a click rate $\mu^{(a)} \in [0, 1]$ from a mixture:  %
\begin{align*}
    \mu^{(a)} \mid Z^{(a)} \sim 
    \begin{cases}
        \TN{Beta} \big( 25 \cdot \frac{1}{4} Z_1^{(a)} + 1, ~ 25 \cdot (1-\frac{1}{4}Z_1^{(a)}) + 1 \big) & \TN{w.p. } 1/2 \\
        \TN{Beta} \big( 25\cdot (1-\frac{1}{4}Z_2^{(a)})  + 1, ~ 25\cdot \frac{1}{4}Z_2^{(a)} + 1 \big) & \TN{w.p. } 1/2
    \end{cases}
\end{align*}
Then, outcomes are sampled as $Y_1^{(a)}, \dots, Y^{(a)}_T \mid \mu^{(a)}, Z^{(a)} \iidsim \TN{Bernoulli}(\mu^{(a)})$. We use the identity reward mapping $R(y) = y$.

\paragraph{Additional training details.} 
The training and validation datasets contain 2500 and 1000 actions each, respectively. 
During offline learning (Section~\ref{sec:pretrainHistorical}), we train up to sequence lengths of $500$. 
Hyperparameters and early stopping is chosen with the validation set. 
\begin{itemize}[leftmargin=0.5cm]
    \item \textsc{Flexible NN.} This sequence model $p_\theta$ is based on an MLP. This model takes $Z^{(a)}$ and a summary statistic that represents the history of outcomes for that action as input, and outputs a probability in $[0,1]$. Since outcomes $Y_t^{(a)}$ are binary, we use the following summary statistic: $\big( \frac{1}{N^{(a)}} \sum_{t'=1}^{t-1} Y_{t'} \ind_{A_{t'} = a}, ~ \frac{1}{ 1 + N^{(a)} } \big)$, where $N^{(a)}\triangleq\sum_{t'=1}^{t-1} \ind_{A_{t'} = a}$. %
    \item \textsc{Beta-Bernoulli NN.} This is a sequential model based on the closed-form posterior predictive for a Beta-Bernoulli model; see \eqref{eqn:betaBernoulliPostpred2}. $\alpha_\theta(Z^{(a)})$ and $\beta_\theta(Z^{(a)})$ are each parameterized by separate MLPs that take in $Z^{(a)}$ and output a non-negative scalar. 
\end{itemize}
The MLPs above have three layers, width $50$, and ReLU activations. Models are trained for 1000 epochs with AdamW (learning rate 0.001, weight decay 0.01, batch size 500).

\subsubsection{Additional details on Figure \ref{fig:altSampling}: Comparing to alternative sampling approaches.}
\label{sec:altsampling_details}
Figure \ref{fig:altSampling} considers the synthetic decision-making setting described in this section. 
Figure \ref{fig:altSampling} considers the synthetic decision-making setting described in this section. All decision-making algorithms in Figure \ref{fig:altSampling} 
use the same pretrained sequence model $p_\theta$, with the \textsc{Flexible NN} architecture and trained as described in Section~\ref{sec:synthetic_experiments} and Appendix~\ref{app:synthetic_appendix}.

\begin{itemize}[leftmargin=*]
    \item \textsc{Greedy.} Select $A_t = \argmax_{a \in \MC{A}} \, \widehat{\mu}_{t,a}(p_\theta)$ where $\widehat{\mu}_{t,a}(p_\theta) = \int_y R(y) \cdot p_\theta( Y_t^{(a)} = y \mid \HH_{t-1}) dy$. For binary outcomes, $\widehat\mu_{t,a}(p_\theta)=p_\theta(Y_t^{(a)}=1\mid \mathcal H_{t-1})$. 
    \item \textsc{Epsilon-greedy.} Select the action chosen by \textsc{Greedy} 90\% of the time, and a uniformly random action for the remaining 10\% of the time. 
        \item \textsc{Generate one reward.} For each action $a \in \MC{A}$, sample $\widehat{Y}_t^{(a)} \sim p_\theta( Y_t^{(a)} \in \cdotspace \mid \HH_{t-1})$ and select $A_t = \argmax_{a \in \MC{A}} \, R (\widehat{Y}_t^{(a)})$.
    \item \textsc{Generate rewards i.i.d.} For each action $a \in \MC{A}$, sample 
    $\widehat{Y}_t^{(a)}, \dots, \widehat{Y}_{t+m}^{(a)} \iidsim p_\theta( Y_t^{(a)} \in \cdotspace \mid \HH_{t-1})$ for $m=500$ and select $A_t = \argmax_{a \in \MC{A}} \, \frac{1}{m} \sum_{i=0}^m R (\widehat{Y}_{t+m}^{(a)})$. %

    \item \textsc{Generate rewards autoregressively (TSAR).} Probabilistically impute the table of potential outcomes using Algorithm \ref{alg:posterior_sample}, but truncated to 500 imputed values as described in Section \ref{sec:truncationExp}, to form $\widehat{\tau}_t$. Then select $A_t=A^*(\widehat{\tau}_t)$. %

\end{itemize}

\subsection{News Recommendation Experiment Details}
\label{app:MIND_appendix}

\paragraph{Dataset preprocessing.} 
We process the MIcrosoft News Dataset (MIND) as follows:\footnote{This dataset is free to download for research purposes at \href{https://msnews.github.io/}{https://msnews.github.io/}.}
\begin{enumerate}[leftmargin=0.5cm]
\item Collect all articles from the MIND ``large'' dataset (training split only) \citep{wu2020mind}. Remove any article with fewer than 100 total impressions. 
\item For each article $a \in \MC{A}$, we use $\mu^{(a)}$ to denote its empirical click rate. We normalize the click rates to be centered around 0.5 in a way that preserves their relative rank to speed up the learning. %
The new success probabilities are as follows for each $a \in \MC{A}$:
\begin{align*}
    \mu^{(a)} \gets \begin{cases}
        \mu_0^{(a)} & \TN{if~} \mu_0^{(a)} \in \{ 0, 1 \} \\
        \text{logit}^{-1} \left(\text{logit}(\mu_0^{(a)}) - \bar{\mu}_0 \right) & \TN{otherwise}
    \end{cases}.
\end{align*}
Above, $\bar{\mu}_0 \triangleq \frac{1}{|\MC{A}|} \sum_{a' \in \MC{A}} \text{logit}(\mu_0^{(a')})$ and $\TN{logit}(x) \triangleq \log \frac{x}{1-x}$. 
See Figure~\ref{fig:transform_probs} for comparison of the success click rates before and after the transformation. 
\begin{figure}[h]
\centering
\vspace{-3mm}
\includegraphics[width=0.35\textwidth]{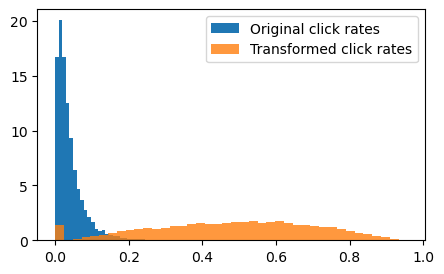}
\caption{Original and transformed click rates.}
\label{fig:transform_probs}
\vspace{-3mm}
\end{figure}
\item \vspace{-5mm} Randomly select 20\% of the remaining articles to be in the validation set (2280 articles); the rest are in the training set (9122 articles). %
\end{enumerate}

\paragraph{Additional training details.} 
During training, we train to sequence lengths of $500$. Hyperparameters and
early stopping epochs are chosen using the validation dataset. 
\begin{itemize}[leftmargin=*]
    \item \textsc{Flexible NN (text).} This model is very similar to the \textsc{Flexible NN} model in Appendix~\ref{app:synthetic_appendix}, except the MLP head of the neural network from before is fed as input a DistilBERT \citep{sanh2019distilbert} embedding of text data $Z^{(a)}$. The MLP linear layers have width 100 and the summary statistics are repeated 100 times. %
    \item \textsc{Beta-Bernoulli NN (text).} This is very similar to the Beta-Bernoulli posterior predictive sequence model in Appendix~\ref{app:synthetic_appendix}, with the exception that in place of a two-dimensional $Z^{(a)}$, the MLP head of the neural network from before is fed as input a DistilBERT \citep{sanh2019distilbert} embedding of text data $Z^{(a)}$. On top of the one DistilBERT embedding are two separate MLP heads for $\alpha(Z^{(a)})$ and $\beta(Z^{(a)})$, which are trained together. The summary statistics are repeated 100 times and MLP layers have width 50.
    \item \textsc{Flexible NN (category).} The model and training procedure are the same as for  \textsc{Flexible NN} from Appendix~\ref{app:synthetic_appendix}, except it uses a one-hot new category vector for $Z^{(a)}$.
\end{itemize}
The models \textsc{Flexible NN (text)} and \textsc{Beta-Bernoulli NN (text)}, which both incorporate the DistilBERT model, are each trained for 500 epochs with a learning rate of 1e-5 on MLP weights and 1e-8 on the DistilBERT weights using the AdamW optimizer.

\paragraph{Ensembles (Figure~\ref{fig:mind_evals}).}  
The ensembling approach used in the uncertainty quantification plots in Figure~\ref{fig:mind_evals} (right) takes a base model that is a DistilBERT that is trained to take in article headlines $Z^{(a)}$, which is fed into an MLP with a sigmoid head that predicts binary outcomes.%
\footnote{MLP width 100, 3 layers, learning rate 1e-5 on the head and 1e-8 on DistilBERT, weight decay 0.01, AdamW optimizer.} To save computation, we freeze the DistilBERT weights and just ensemble the weights of the MLP head (50 copies) using bootstrapped data. %

\paragraph{Additional randomized prior results.} 
In Figure~\ref{fig:randomized_prior_calibration}, we include a ``randomized prior'' ensembling approach which initializes each neural network model in the ensemble in a particular way to encourage diversity \citep{osband2018randomized}. We modify the ensembling procedure described earlier so that each of the 50 MLP heads incorporates their own prior networks, which we denote using $g_{1},\ldots,g_{50}$, which are MLP heads with the same architecture with randomly initialized weights. Note that the prior network weights are \textit{not} trained. Let $f_{\theta_1},\ldots,f_{\theta_{50}}$ denote the trainable MLP heads. The model predicts Bernoulli success probabilities using $\text{sigmoid}(f_{\theta_i}(\varphi(Z^{(a)}))+\sigma \cdot g_{i}(\varphi(Z^{(a)})))$, where %
$\varphi(Z^{(a)})$ denotes the trained DistilBERT embedding of the text headline $Z^{(a)}$. $\sigma$ is a prior scaling term hyperparameter.
\begin{figure}[H]
\includegraphics[width=0.45\linewidth]{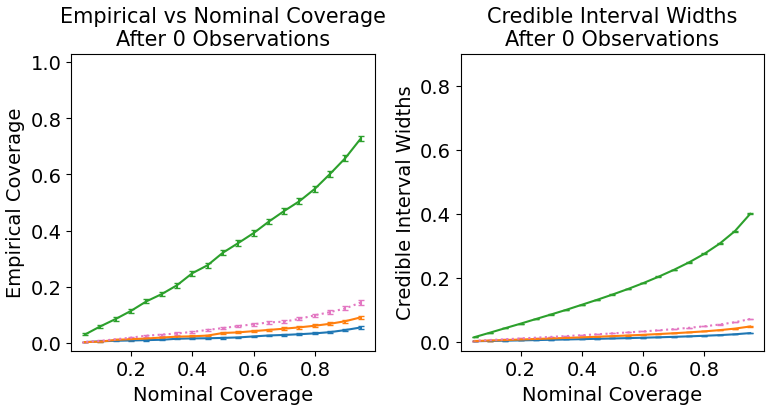}
\hspace{5mm}
\includegraphics[width=0.3\linewidth]{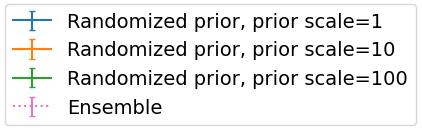}
\caption{Comparison of randomized prior ensembling methods \cite{osband2018randomized} after 0 observations.}
\vspace{-2mm}
\label{fig:randomized_prior_calibration}
\end{figure}
\vspace{-3mm}
Although tuning the prior scaling for the randomized prior can achieve approximately calibrated uncertainty (in contrast to regular ensembling), posterior sampling with autoregressive generation (PSAR) is advantageous over ensembling with randomized priors because i) PSAR simply updates its posterior beliefs by conditioning on the history, while ensembling must update models by fine-tuning the models via gradient updates on the new data, and ii) the pretraining procedure for PSAR automatically calibrates uncertainty well through standard sequence loss modeling.

\subsection{Baseline Bandit Algorithms}
\label{sec:bandit_baselines}

\paragraph{TS Beta Bernoulli (Uniform Prior).}
This algorithm uses Thompson sampling with a Beta-Bernoulli model with a uniform prior \citep{russo2020tutorial}. Note that unlike \textsc{TSAR} using the \textsc{Beta-Bernoulli NN}, the prior here does not depend on action attributes $Z^{(a)}$. 

\paragraph{TS Neural Linear.}
We implement a simplified version of ``neural linear'' \citep{riquelme2018deep,snoek2015scalable}. 
Here, we assume that for each arm, data are generated independently as 
$$Y^{(a)}_t=W^{(a)\top} \beta^{(a)} +\epsilon^{(a)}_t$$
where $\epsilon^{(a)}_t\sim N(0,\sigma^2)$ are drawn iid across arms $a$ and timesteps $t$, and $W^{(a)}:=[g(Z^{(a)}),1]^\top$
where $g(\cdot)$ is a model trained to predict reward from features $Z^{(a)}$ by using historical data (details below). 
Additionally, we assume
$$(\sigma^{(a)})^2\sim \text{InverseGamma}(a,b)\quad\text{ and }\beta^{(a)} \mid (\sigma^{(a)})^2\sim N([1,0]^\top, (\sigma^{(a)})^2 \mathbf{I}_2).$$
We set $a=3$ and $b=0.5$ since outcomes for each arm are Bernoulli and thus can have variance at most 1/4, and an Inverse Gamma with parameters $a=3$ and $b=0.5$ have reasonable mass under 1.4 while also being somewhat diffuse so that it does not make too strong an assumption on $\sigma^{(a)}$. 

For our synthetic experiments, we train $g$ using the same architecture and training procedure as we do for \textsc{Flexible NN} from Appendix \ref{app:synthetic_appendix} except it does not take in a sufficient statistic of the history. Similarly, for our news recommendations experiment, we train $ g$ using the same architecture and training procedure as we do for \textsc{Flexible NN} from Appendix \ref{app:MIND_appendix} (which incorporates a DistilBERT model) except it does not take in a sufficient statistic of the history. %

\paragraph{UCB.}
For UCB we use the multi-arm bandit algorithm described in Section 6 of \cite{Abbasi-YadkoriPaSz11}. We set the failure probability $\delta = 0.1$ and use sub-Gaussian parameter $0.5$ (since we have binary rewards).

\subsection{Modeling best action rather than modeling action reward}
\label{app:DPT}
\citet{lee2023incontext} introduce Decision Pretrained Transformers as an alternative approach of using sequence models to implement Thompson sampling. Their procedure trains a sequence model to predict the best action, mimicking an expert/optimal policy, given current state and recent interactions. This section aims to make a fair empirical comparison between our \textsc{TSAR} algorithm and the approach introduced in \citet{lee2023incontext}. %

\paragraph{Experiment details.}
In order to make the most fair comparisons between modeling best actions (which we call \textsc{TS Action}) and modeling rewards (\textsc{TSAR}), we use nearly identical model architecture and training procedures. The only adjustments we make are i) while the \textsc{TSAR} sequence models output a distribution over the next outcome $Y$, the TS Action sequence models output a softmax distribution over the set of action candidates, ii) the training target is set to the best action, and iii) we must sample offline trajectories collected by a behavior policy to use for training (``in-context datasets''). For collecting these offline trajectories we follow a procedure identical to that used in \citet{lee2023incontext} for the multi-armed bandit setting. %
We train these action-based sequence models for 2500 epochs and select the best learning rate out of $\{10^{-1}, 10^{-2},10^{-3},10^{-4}\}$.

\begin{figure}[H]
    \centering
    \begin{subfigure}{0.32\textwidth}
        \centering
        \includegraphics[width=\linewidth]{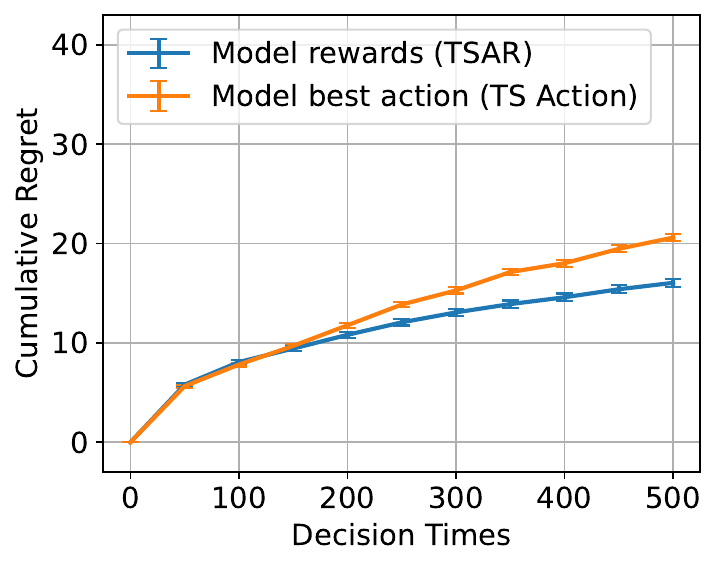} 
        \caption{$|\Ahist| = 2500$}
        \label{fig:dpt_psar_2500}
    \end{subfigure}
    \hfill
            \begin{subfigure}{0.32\textwidth}
        \centering
        \includegraphics[width=\linewidth]{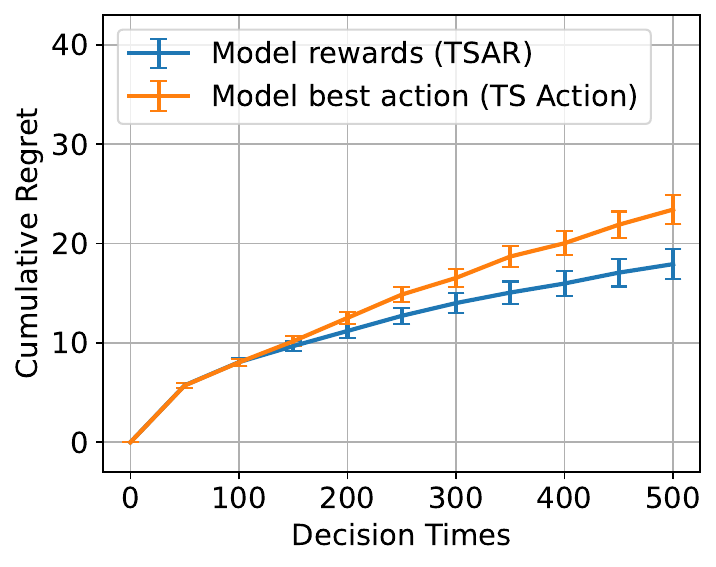} 
        \caption{$|\Ahist| = 100$}
        \label{fig:dpt_psar_100}
    \end{subfigure}
    \hfill
        \begin{subfigure}{0.32\textwidth}
        \centering
        \includegraphics[width=\linewidth]{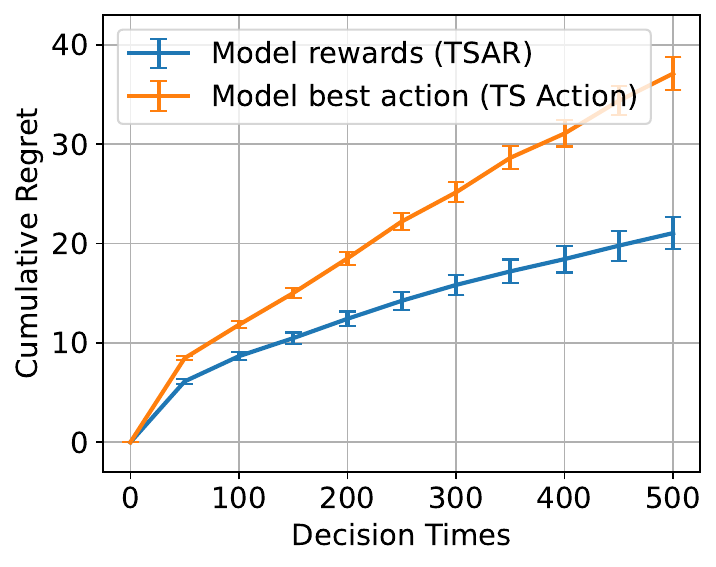}
        \caption{$|\Ahist| = 25$}
        \label{fig:dpt_psar_25}
    \end{subfigure}
    \caption{Synthetic setting: Regret for \textsc{TSAR} versus \textsc{TS Action} for different amounts of pretraining data.}
    \label{fig:dpt}
\end{figure}

\paragraph{Results.}
When training with $|\Ahist| = 2500$ actions in the offline pretraining dataset, there is little difference between \textsc{TSAR} and \textsc{TS-Action}. In contrast, \textsc{TSAR} is much better than \textsc{TS Action} when the number of offline training arms is drastically reduced, e.g. to 25 (Figure~\ref{fig:dpt_psar_25}). We hypothesize that this gap in performance is because the best action provides less information than the rewards for each action. For example, one action may have better rewards on average than another, but while \textsc{TSAR} can comprehend how much better the rewards are for action vs the other, \textsc{TS Action} only knows relative ordering. 
Further understanding the gap in performance is an interesting direction for future work.

\section{Bayesian sequence models}
\label{subsec:bayes-seq-models}

This section connects proposed offline learning procedure (for example, as in Section~\ref{sec:pretrainHistorical}) to empirical Bayes methods (Type-II maximum likelihood), which is an approach used to fit a Bayesian model to observed data \citep{pml1Book,casella1985introduction,normand1999meta}. This connection is precise in cases where $p_{\theta}$ is the posterior predictive distribution of some Bayesian model and $\theta$ consists of hyperparameters of the prior.  In other settings, where we fit parameters of a  more flexible (and not necessarily exchangeable) sequence model is used, this connection may help readers view our offline sequence model learning procedure as ``learning to form prior beliefs given $Z$, using historical data''.

Standard Bayesian mixture models define exchangeable sequence models. This is formalized in Example \ref{ex:bmm} below. 
\vspace{-2mm}
\begin{example}[Bayesian models define exchangeable sequence models]
    \label{ex:bmm}
    The model places a prior on a latent variable $\eta^{(a)}$, where $ \eta^{(a)} \sim P( \eta^{(a)} \in \, \cdot \, \mid Z^{(a)} )$, then $Y_1^{(a)}, \dots, Y_T^{(a)} \mid \eta^{(a)} \iidsim P( Y_t^{(a)} \in \, \cdot \, \mid \eta^{(a)})$.
    The $p^*$ sequence model associated with the above data generating process satisfies Assumption \ref{assump:exchangeable} ($p^*$ can be computed via Bayes rule).
\end{example}

To keep the presentation simple, we illustrate empirical Bayes in a simple case. Consider a Beta-Bernoulli Bayesian model without $Z$'s, which was introduced in \eqref{eq:ppd}:
\begin{align}
    \label{eqn:betaBernoulli}
    \mu^{(a)} \sim \TN{Beta}(\alpha^*, \beta^*) ~~\TN{then}~~
    Y_1^{(a)}, Y_2^{(a)}, \dots \big| \, \mu^{(a)} \iidsim \TN{Bernoulli}(\mu^{(a)}).
\end{align}
It is well known that the above Bayesian model~\eqref{eqn:betaBernoulli} is associated with the following class of autoregressive sequence models, i.e., the posterior predictive distribution:
\begin{align}
\label{eqn:betaBernoulliPostpred2}
p_{\theta} \big( Y_{t+1}^{(a)} = 1 \mid Y_{1:t}^{(a)} \big) =  \frac{ \alpha  + \sum_{i=1}^{t} Y_{i}^{(a)} }{ \alpha + \beta + t } \quad \TN{where} \quad \theta = (\alpha, \beta). 
\end{align}
Here, the true sequence model $p^*$ is exactly $p_\theta$ evaluated at $\theta = (\alpha^*, \beta^*)$. 
For the above sequence model class $p_\theta$, minimizing our negative log likelihood sequence training criterion \eqref{eq:train_loss} is equivalent to maximizing the marginal likelihood of the data, the criterion that is used in Empirical Bayes (Type-II maximum likelihood) to fit prior distributions to observed data \citep{pml1Book,casella1985introduction,normand1999meta}.

\section{Finite vs infinite population formulations and Thompson Sampling variants}
\label{app:finite_pop_TS}

This section discusses the intimate connections between (large) finite-population formulations that were discussed in the main body of the paper and infinite-population formulations that are more common in the Bayesian bandit literature. We do this in the special case of the Bayesian mixture model from Example \ref{ex:bmm}.

We emphasize that {\bf from our perspective, the main advantages or disadvantages of the finite population view are conceptual.} In terms of advantages: (1) the definitions do not require any explicit assumptions around mixture modeling or latent variables, and (2) the finite nature of the problem lets us visualize the procedure as in Figure  \ref{fig:autoregressive_generation}, without abstract reference to limits across infinite sequences.

\subsection{Review of Thompson sampling in infinite populations}
Thompson sampling is most often defined for a Bayesian mixture model, e.g., as in Example \ref{ex:bmm}. 
Following that example, we consider in the subsection the canonical example of exchangeable sequences: a mixture model wherein the outcomes are i.i.d. conditioned on a latent variable $\eta^{(a)}$. 
That is, $p^*(Y_1^{(a)}, \ldots, Y_t^{(a)} \mid Z^{(a)}) = \intop  \prod_{t=1}^{T} P(Y_t^{(a)} \mid \eta^{(a)}=u) P(\eta^{(a)} = u\mid  Z^{(a)}) du$.
The unknown latent variable represents the decision-maker's uncertainty about an action's performance. 

The literature typically defines the ``true arm means'' given $\eta^{(a)}$ as 
\[ 
\mu_{\infty}^{(a)} = \intop R(y) \cdot P(y \mid \eta^{(a)})\,dy.
\]
The subscript highlights that this has the interpretation of a long-run average reward across an infinite population of users (or infinite set of rounds). By the law of large numbers (applied conditional on %
$\eta^{(a)}$,
one has $\mu_{\infty}^{(a)} = \lim_{T\to \infty} \frac{1}{T} \sum_{t=1}^{T} R( Y_t^{(a)} )$.
The true best arm is defined as $A^*_\infty \in \argmax_{a\in \Aeval} ~ \mu_{\infty}^{(a)}$.
Randomness in the latent parameters $(\eta^{(a)})$ means $\mu_{\infty}^{(a)}$ and $A^*_\infty$ are random variables whose realizations are uncertain even given the history $\HH_{t-1}.$ Thompson sampling selects an action by probability matching on $A^*_{\infty}$, defined by the property $\PP(A_t=a \mid \HH_{t-1}) = \PP(A^*_{\infty} = a \mid \HH_{t-1}) \quad \text{for all } a \in \Aeval$.
The per-period Bayesian regret over $T$ periods is defined as 
\begin{equation}\label{eq:infinite_pop_regret}
\E\left[ \frac{1}{T}\sum_{t=1}^{T}\left( R \big( Y^{(A_{\infty}^*)}_t \big) - R \big( Y^{(A_t)}_t \big) \right)  \right]
\end{equation}

\subsection{Thompson sampling in finite populations}
One can define the true mean of a finite population as $\mu^{(a)}_T = \frac{1}{T} \sum_{t=1}^{T} R \big( Y_t^{(a)} \big)$.
Note in the main paper we use the notation $\mu^{(a)}_T$ to denote the above mean, but we change the notation here to make the contrast to $\mu_\infty^{(a)}$ more clear.
The true best arm for this finite population is defined as $A^* \in \argmax_{a\in \Aeval} ~ \mu^{(a)}_T$.
As in Lemma \ref{lem:prob_matching}, Thompson sampling selects an action by probability matching on the (finite-population) optimal action $A^*$, defined by the property $\PP(A_t=a \mid \HH_{t-1}) = \PP(A^* = a \mid \HH_{t-1}) \quad \text{for all } a \in \Aeval$.
The per-period Bayesian regret over $T$ periods is defined as 
\begin{equation}\label{eq:finite_pop_regret}
\E\left[ \frac{1}{T}\sum_{t=1}^{T}\left( R \big(Y^{(A^*)}_t \big) - R \big( Y^{(A_t)}_t \big) \right)  \right]
\end{equation}
It is not hard to show that \eqref{eq:finite_pop_regret} is a more stringent notion of regret than in \eqref{eq:infinite_pop_regret}, since \\ 
$ \frac{1}{T}\sum_{t=1}^{T} R \big( Y^{(A^*)}_t \big) \geq \frac{1}{T}\sum_{t=1}^{T} R \big( Y^{(A^*_\infty)}_t \big)$ by definition of $A^*$. Both definitions are widely used, with the more stringent finite-population version being more common in the adversarial bandit literature; see \cite{LattimoreSz19}.

\subsection{The gap between finite and infinite population formulations is small}
We analyze the gap between the two formulations in the case of a mixture model. 
By a sub-Gaussian maximal inequality
    \begin{align*}
     \E\left[ \max_{a\in \Aeval} \left| \mu_{\infty}^{(a)} -  \mu_T^{(a)} \right| \right] 
    = \E\left[ \E\left[ \max_{a\in \Aeval} \left| \mu_{\infty}^{(a)} - \mu_T^{(a)} \right|  \bigg| \, \{ \eta^{(a)} \}_{a\in \Aeval}  \right]\right] 
    \leq \sqrt{ \frac{2 \log (| \Aeval|)}{T} }.
    \end{align*}
To justify the last inequality, note that since the function $R$ takes values in $[0, 1]$, $R \big( Y_t^{(a)} \big) - \mu^{(a)}_{\infty}$ is sub-Gaussian with variance proxy $1$, conditional on $\eta^{(a)}$ (by Hoeffding's Lemma). Since it is the average of independent sub-Gaussian random  variables, 
$\mu_{\infty}^{(a)} - \mu_T^{(a)}$ is sub-Gaussian with variance proxy $\frac{1}{T}$, conditional on $\{ \eta^{(a)} \}_{a\in \Aeval}$. The last step follows then from applying the subgaussian maximal inequality,  conditional on $\{ Z^{(a)} \}_{a \in \MC{A}}, \{ \eta^{(a)} \}_{a\in \Aeval}$.

It follows easily that the infinite population optimum $A^*_{\infty}$ is near optimal for finite populations:
\[    0 \leq \E\left[ \max_{a\in \Aeval}  \mu_{T}^{(a)} -  \mu^{(A^*_\infty)}_T  \right]   
    \leq 2\sqrt{ \frac{2 \log (| \Aeval|)}{T} }.
\]
Analogously, the finite population optimum is near-optimal in infinite populations: \\
$0 \leq \E\left[ \max_{a\in \Aeval} ~ \mu_{\infty}^{(a)} -  \mu^{(A^*)}_\infty  \right]   
    \leq 2\sqrt{ \frac{2 \log (| \Aeval|)}{T} }$.
Supported by this theory, we do not focus on the distinction between $A^*$ and $A^*_\infty$.

\bibliography{bib,bib-hong}

\end{document}